\newtheoremstyle{plain}% name
	  {}%      Space above, empty = `usual value'
	  {}%      Space below
	  {\itshape}% Body font
	  {}%         Indent amount (empty = no indent, \parindent = para indent)
	  {\bfseries}% Thm head font
	  {}%        Punctuation after thm head
	  {5pt plus 1pt minus 1pt}% Space after thm head: \newline = linebreak
	  {}%         Thm head spec
\newtheoremstyle{definition}% name
  	  {}%      Space above, empty = `usual value'
	  {}%      Space below
	  {\normalfont}% Body font
	  {}%         Indent amount (empty = no indent, \parindent = para indent)
	  {\bfseries}% Thm head font
	  {}%        Punctuation after thm head
	  {5pt plus 1pt minus 1pt}% Space after thm head: \newline = linebreak
	  {}%         Thm head spec
\theoremstyle{plain}
\newtheorem{lemma}{Lemma}
\newtheorem{proposition}{Proposition}
\theoremstyle{definition}
\newcommand{\refeq}[1]			{(\ref{#1})} % Equation
\newcommand{\reffig}[1]			{Fig. \ref{#1}} % Figure
\newcommand{\refsec}[1]			{Section \ref{#1}}
\newcommand{\refapp}[1]			{Appendix \ref{#1}}
\newcommand{\refprop}[1]		{Proposition \ref{#1}}
\newcommand{\reflem}[1]			{Lemma \ref{#1}}
\newcommand{\reffn}[1] 		    {\textsuperscript{\ref{#1}}}
\theoremstyle{plain}
\newcommand{\R}  	{\mathbb{R}} % Real Numbers
\newcommand{\dimspace} 	{d}
\newcommand{\radius} 	{\rho}
\newcommand{\ctr} 	{\vect{c}}
\newcommand{\apex}	{\vect{a}}
\newcommand{\base}  {\vect{b}}
\newcommand{\conv}      {\mathrm{conv}} % Convex hull
\newcommand{\cone}      {\mathrm{C}} % Cone
\newcommand{\icone} {\widehat{\mathrm{C}}} % Ice-cream-Cone
\newcommand{\tcone}{\widecheck{\mathrm{C}}} % Truncated Ice-Cream Cone
\newcommand{\hplane}{\mathrm{H}} % The half space.
\newcommand{\ovect}[1] {\begin{bmatrix}\cos #1\\ \sin #1 \end{bmatrix}}
\newcommand{\ovectsmall}[1] {\scalebox{0.85}{$\ovect{#1}$}}
\newcommand{\ovecT}[1] {\tr{\ovect{#1}\!}}
\newcommand{\ovecTsmall}[1] {\scalebox{0.85}{$\ovecT{#1}$}}
\newcommand{\nvect}[1] {\begin{bmatrix}-\sin #1\\ \cos #1 \end{bmatrix}}
\newcommand{\nvecT}[1] {\tr{\nvect{#1}\!}}
\newcommand{\nvecTsmall}[1] {\scalebox{0.85}{$\nvecT{#1}$}}
\newcommand{\motionset}{\mathcal{M}} % Motion Prediction
\newcommand{\freespace}	{\mathcal{F}} % Free Space
\newcommand{\workspace}	{\mathcal{W}} % Workspace
\newcommand{\obstspace}	{\mathcal{O}} % Obstacle Space
\newcommand{\ball}      {\mathrm{B}} % Euclidean Ball
\newcommand{\path}      {\mathcal{P}}
\newcommand{\sdist} 		{sd} % Signed distance
\newcommand{\pos} 		{\vect{x}} 			% Position
\newcommand{\ort}	    {\theta}			% Orientation
\newcommand{\linvel}     {v}  % Linear Velocity 
\newcommand{\angvel}     {\omega} % Angular Velocity
\newcommand{\lingain}   	{\kappa_{v}} 			% Lienar Vecloity Control Gain
\newcommand{\anggain}   	{\kappa_{\omega}} 			% Lienar Vecloity Control Gain
\newcommand{\goal}		{\vect{y}} % Goal Position
\newcommand{\ctrlgoal}	{\vect{y}} % Control Goal
\newcommand{\globalgoal}      {\vect{x}^*}		% Global Goal
\newcommand{\gain}   	{\kappa} 			% System Control Gain
\newcommand{\ctrl}      {\vect{u}} 			% Robot Control
\newcommand{\alignerr}	{\phi} % Alignment error of the unicycle robot
\newcommand{\govpos} 	{\vect{y}}
\newcommand{\govgain}   {\kappa_g}
\newcommand{\refplan}  	{\vect{r}} % Reference Planner
\newcommand{\refdomain}	{\mathcal{D}} % Reference Planner Domain
\newcommand{\motionrange}	{\mathcal{M}} % Motion Range Prediction
\newcommand{\safedist}  {\mathrm{dist}}
\newcommand{\safelevel}{\sigma} % Safety Level
\newcommand{\proj} {\Pi}
\newcommand{\dist} {d} % Distance
\let\originalleft\left
\let\originalright\right
\renewcommand{\left}{\mathopen{}\mathclose\bgroup\originalleft}
\renewcommand{\right}{\aftergroup\egroup\originalright}
\newcommand{\plist}[1] 	{\left(#1\right)} % Parenthesis (Round Bracket) List 
\newcommand{\blist}[1]	{\left[ #1 \right]} % (Square) Bracket List
\newcommand{\clist}[1]	{\left\{#1\right\}} % Curly Bracket List 
\DeclareMathOperator{\atantwo}{atan2}
\newcommand{\vect}[1]   {\mathrm{#1}}
\newcommand{\mat}[1]    {\mathbf{#1}}
\newcommand{\tr}[1] {{#1}^{\mathrm{T}}} % Transpose operator
\newcommand{\norm}[1]  {\|#1\|}
\newcommand{\absval}[1]{\left|#1 \right|} % Absolute value
\newcommand{\ldf}   {:=} % Left define
\newcommand{\argmin}{\operatornamewithlimits{arg\ min}} % with space
\newcommand{\diff} {\mathrm{d}}
\DeclareMathOperator{\sgn}{sgn} % Sign function
\title{\LARGE \bf
Feedback Motion Prediction for Safe Unicycle Robot Navigation \\(Technical Report)
}
\author{Aykut \.{I}\c{s}leyen and Nathan van de Wouw and \"{O}m\"{u}r Arslan% <-this % stops a space
\thanks{The authors are with the Department of Mechanical Engineering, Eindhoven University of Technology, P.O. Box 513, 5600 MB Eindhoven, The Netherlands. The authors are also affiliated with the Eindhoven AI Systems Institute. Emails:  \{a.isleyen, n.v.d.wouw, o.arslan\}@tue.nl}%
%\thanks{This work is supported by the Eindhoven AI Systems Institute.}% <-this % stops a space
}
\begin{document}

\maketitle
\thispagestyle{empty}
\pagestyle{empty}

%%%%%%%%%%%%%%%%%%%%%%%%%%%%%%%%%%%%%%%%%%%%%%%%%%%%%%%%%%%%%%%%%%%%%%%%%%%%%%%%
%%%%%%%%%%%%%%%%%%%%%%%%%%%%%%%%%%%%%%%%%%%%%%%%%%%%%%%%%%%%%%%%%%%%%%%%%%%%%%%%
\begin{abstract}
As a simple and robust mobile robot base, differential drive robots that can be modelled as a kinematic unicycle find significant applications in logistics and service robotics in both industrial and domestic settings.
Safe robot navigation around obstacles is an essential skill for such unicycle robots to perform diverse useful tasks in complex cluttered environments, especially around people and other robots.
Fast and accurate safety assessment plays a key role in reactive and safe robot motion design.
In this paper, as a more accurate and still simple alternative to the standard circular Lyapunov level sets, we introduce novel conic feedback motion prediction methods for bounding the close-loop motion trajectory of the kinematic unicycle robot model under a standard unicycle motion control approach.
We present an application of unicycle feedback motion prediction for safe robot navigation around obstacles using reference governors, where the safety of a unicycle robot is continuously monitored based on the predicted future robot motion. 
We investigate the role of motion prediction on robot behaviour in numerical simulations and conclude that fast and accurate feedback motion prediction is key for fast, reactive, and safe robot navigation around obstacles.
\end{abstract}

%%%%%%%%%%%%%%%%%%%%%%%%%%%%%%%%%%%%%%%%%%%%%%%%%%%%%%%%%%%%%%%%%%%%%%%%%%%%%%%
%%%%%%%%%%%%%%%%%%%%%%%%%%%%%%%%%%%%%%%%%%%%%%%%%%%%%%%%%%%%%%%%%%%%%%%%%%%%%%%
\section{Introduction}
\label{sec.Introduction}
%%%%%%%%%%%%%%%%%%%%%%%%%%%%%%%%%%%%%%%%%%%%%%%%%%%%%%%%%%%%%%%%%%%%%%%%%%%%%%%
%%%%%%%%%%%%%%%%%%%%%%%%%%%%%%%%%%%%%%%%%%%%%%%%%%%%%%%%%%%%%%%%%%%%%%%%%%%%%%%

Mobile robots play a key role in industrial (e.g., warehouse robots in logistics \cite{fiorini_botturi_ISR2008}) and domestic (e.g., service robots for household \cite{jones_RAM2006}) automation.
Due to their simplicity, high maneuverability, and ease of control and maintenance, differential drive robots that can be modelled as a simple kinematic unicycle become a standard choice as a mobile robot base for many such application settings \cite{thai_etal_IJMER2022}.
Safe, smooth, and fast navigation around obstacles is a crucial requirement for such unicycle robots to perform various time-critical tasks in complex environments, especially around people \cite{philippsen_siegwart_ICRA2003, prassler_scholz_fiorini_RAM2001}  and other mobile robots \cite{snape_etal_IROS2010}. 
Motion prediction plays a key role in safe and smooth mobile robot motion design  \cite{philippsen_siegwart_ICRA2003, chakravarthy_debasish_TSM1998, fox_burgard_thrun_RAM1997, fiorini_shiller_IJRR1998}. 

In this paper, we introduce a new family of conic unicycle feedback motion prediction methods (see \reffig{fig.motion_prediction_demo}) that offers an accurate and computationally efficient tool for bounding the closed-loop motion trajectory of a unicycle robot under a standard unicycle motion control approach towards a given goal position.
We show that such unicycle feedback motion prediction methods can be effectively used for the fast and accurate safety assessment of robot motion around obstacles and so for fast, reactive, and safe robot navigation.

%%%%%%%%%%%%%%%%%%%%%%%%%%%%%%%%%%%%%%%%%%%%%%%%%%%%%%%%%%%%%%%%%%%%%%%%%%%%%%%
%%%%%%%%%%%%%%%%%%%%%%%%%%%%%%%%%%%%%%%%%%%%%%%%%%%%%%%%%%%%%%%%%%%%%%%%%%%%%%%
\begin{figure}[t]
\centering
\begin{tabular}{@{}c @{\hspace{0.5mm}} c @{\hspace{0.5mm}} c@{}}
\includegraphics[width = 0.33\columnwidth]{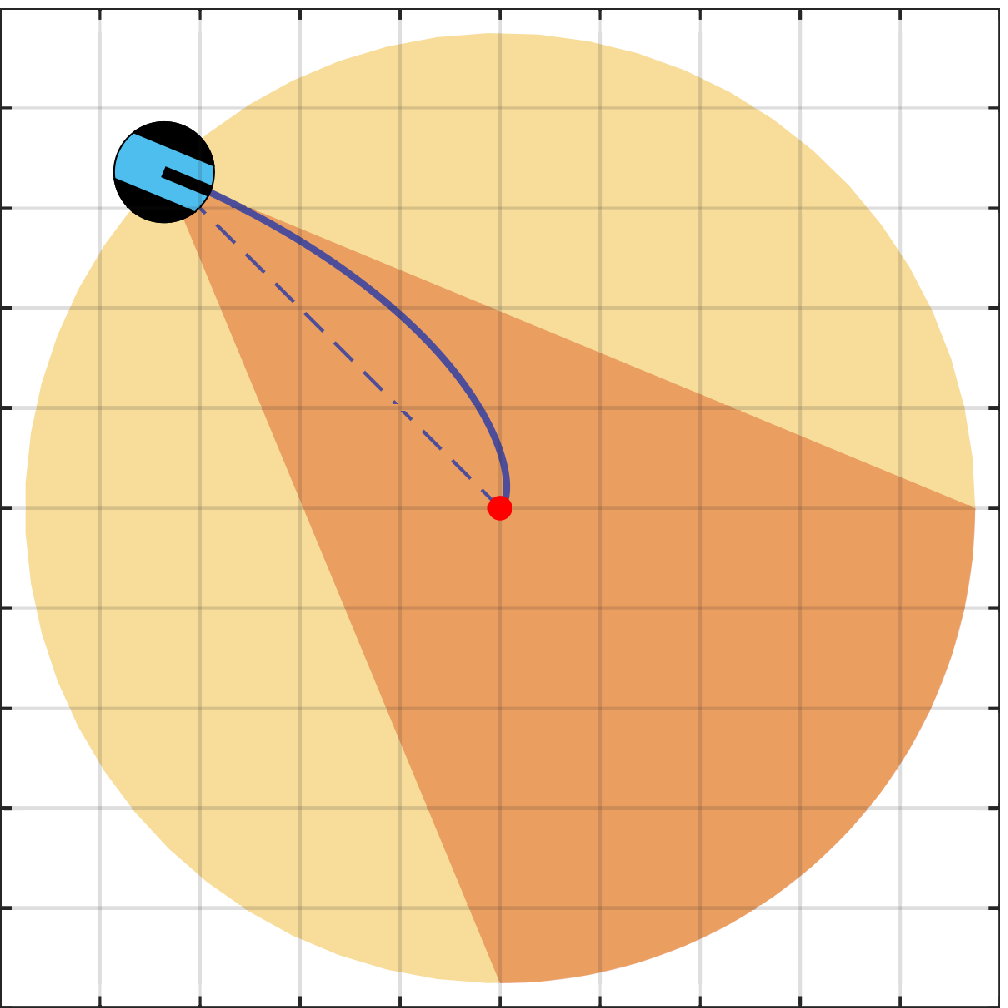} & 
\includegraphics[width = 0.33\columnwidth]{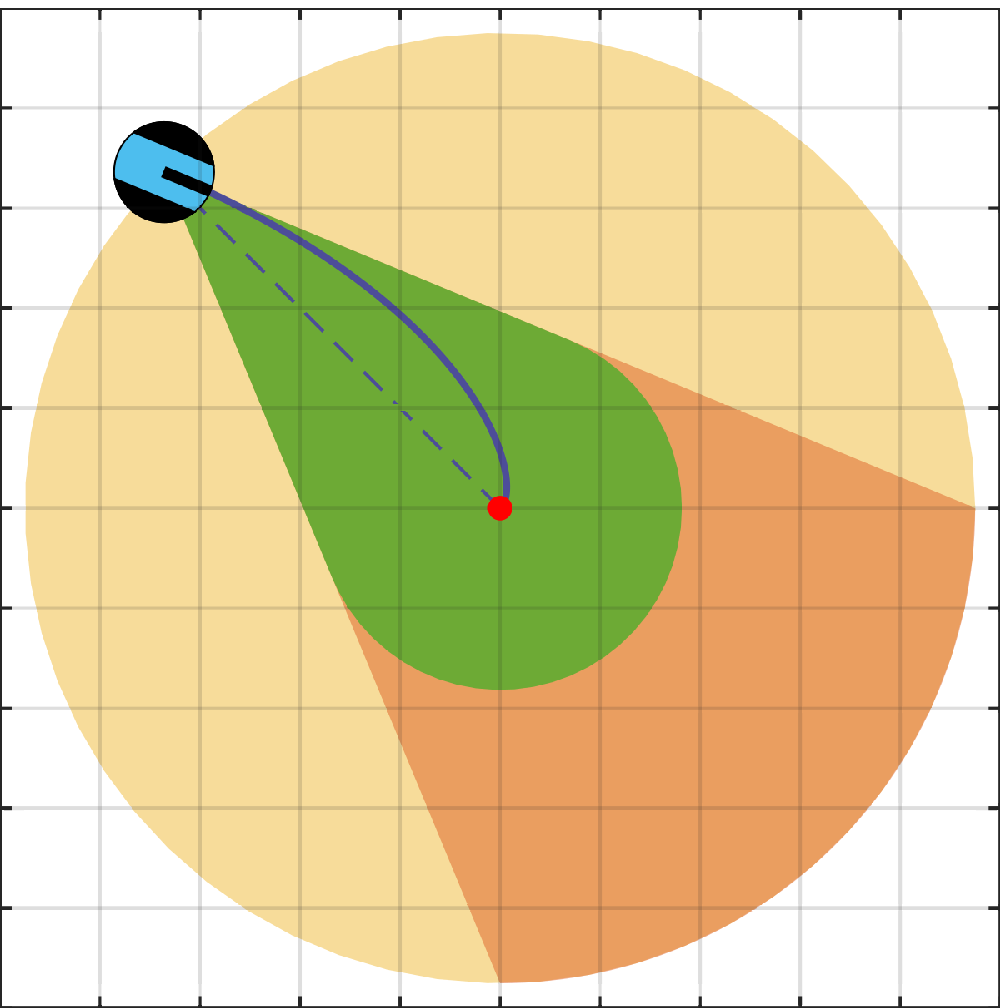} & 
\includegraphics[width = 0.33\columnwidth]{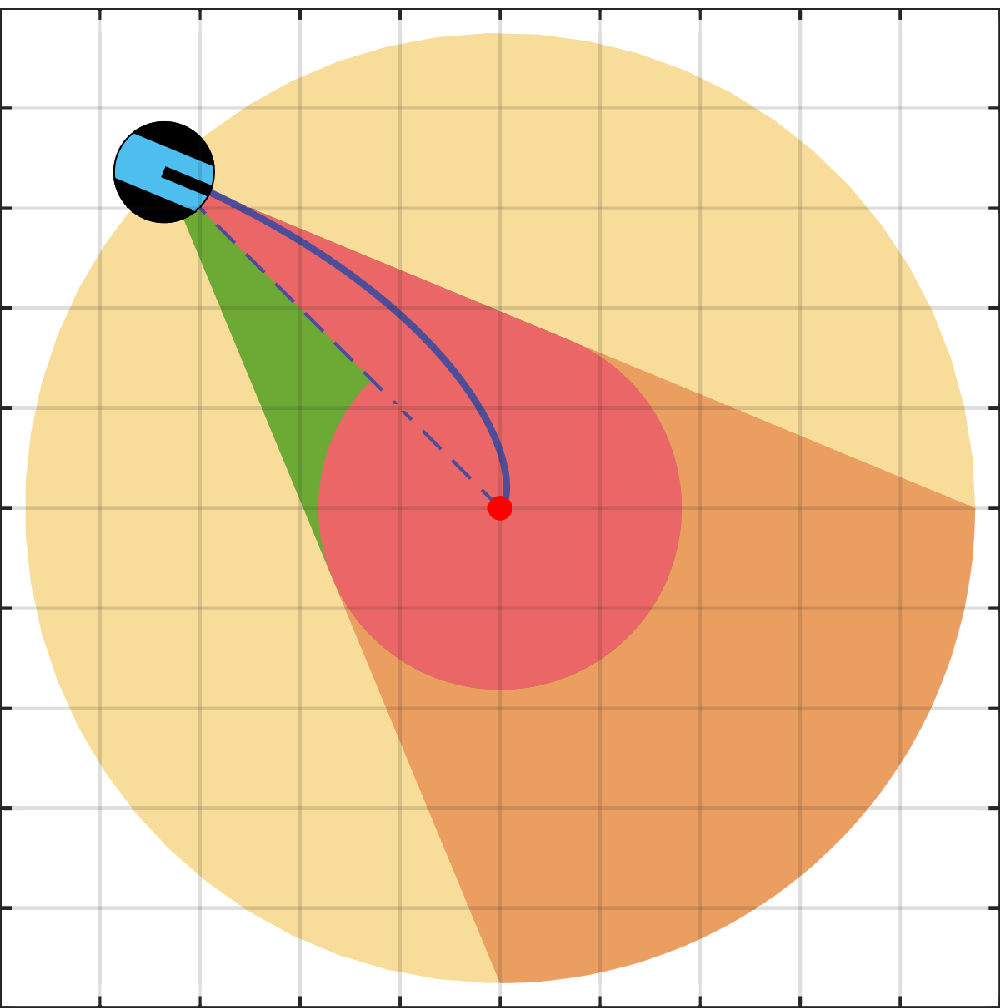}
\end{tabular}
%\vspace{-1mm}
\caption{Unicycle feedback motion prediction that bounds the close-loop unicycle motion trajectory (blue line) towards a given goal position (red point): Lyapunov motion ball (yellow), bounded motion cone (orange), ice-cream motion cone (green), and truncated ice-cream motion cone (red).}
\label{fig.motion_prediction_demo}
%\vspace{-\baselineskip}
\end{figure}
%%%%%%%%%%%%%%%%%%%%%%%%%%%%%%%%%%%%%%%%%%%%%%%%%%%%%%%%%%%%%%%%%%%%%%%%%%%%%%%
%%%%%%%%%%%%%%%%%%%%%%%%%%%%%%%%%%%%%%%%%%%%%%%%%%%%%%%%%%%%%%%%%%%%%%%%%%%%%%%

%%%%%%%%%%%%%%%%%%%%%%%%%%%%%%%%%%%%%%%%%%%%%%%%%%%%%%%%%%%%%%%%%%%%%%%%%%%%%%%
%%%%%%%%%%%%%%%%%%%%%%%%%%%%%%%%%%%%%%%%%%%%%%%%%%%%%%%%%%%%%%%%%%%%%%%%%%%%%%%
\subsection{Motivation and Relevant Literature}
%%%%%%%%%%%%%%%%%%%%%%%%%%%%%%%%%%%%%%%%%%%%%%%%%%%%%%%%%%%%%%%%%%%%%%%%%%%%%%%
%%%%%%%%%%%%%%%%%%%%%%%%%%%%%%%%%%%%%%%%%%%%%%%%%%%%%%%%%%%%%%%%%%%%%%%%%%%%%%%

Nonholonomic motion planning for mobile robots is computationally hard \cite{laumond_MotionPlanning1998}.
The classical approach for safe mobile robot navigation follows a two-step approach: first, construct a collision-free reference path or trajectory, and then execute it via path following or trajectory tracking control until faced with a collision \cite{li_zhang_etal_ROBIO2017}.
However, such an uncoupled planning and control approach often suffers from significant replanning cycles in practice due to its open-loop nature \cite{brezak_petrovic_IFAC2011}.
Integrated motion planning and control by sequential composition \cite{burridge_rizzi_koditschek_IJRR1999} of positively invariant (e.g., Lyapunov level or funnel) sets \cite{blanchini_Automatica1999} offers a robust and adaptive solution for safe and smooth feedback motion planning \cite{pathak_agrawal_TRO2005, conner_choset_rizzi_RSS2006, majumdar_tedrake_IJRR2017, danielson_berntorp_cairano_weiss_ACC2020}.
However, such integrated planning and control approaches are usually computationally inefficient and conservative because finding collision-free invariant sets that cover the entire collision-free configuration space of a robotic system in an arbitrary environment is a challenge \cite{conner_choset_rizzi_RSS2006}.   
In this paper, as an alternative to conservative Lyapunov level sets, we construct new analytic conic unicycle feedback motion prediction methods that can accurately bound the closed-loop unicycle robot motion under a standard unicycle control approach.
We apply unicycle feedback motion prediction for provably correct and safe reference motion following (without replanning) by establishing a continuous bidirectional safety interface between high-level motion planning and low-level motion control based on a reference governor and the safety assessment of predicted future robot motion \cite{isleyen_arslan_RAL2022}.

Motion prediction of anticipating the future motion of an autonomous agent plays a key role in the safety assessment, control, and planning of autonomous robots around obstacles \cite{lefevre_vasquez_laugier_ROBOMECH2014}.
Most existing motion prediction algorithms use simple physical motion (e.g., constant velocity, acceleration, and turning rate) models  \cite{schubert_richter_wanielik_ICIF2008} or pre-defined/learned motion patterns (a.k.a. motion primitives and maneuvers) \cite{schreier_willert_adamy_TITS2016, bennewitz_etal_IJRR2005} to estimate future system behaviour by either running the open-loop forward system simulation or performing high-level motion planning.
To close the gap between motion prediction and motion control, we consider feedback motion prediction that aims at finding a motion set that contains the closed-loop motion trajectory of a dynamical system under a specific control policy \cite{arslan_isleyen_arXiv2023, isleyen_arslan_RAL2022}.
Reachability analysis offers advanced computational tools for estimating such motion sets for complex dynamical control systems associated with some admissible sets of initial/goal states and control inputs \cite{althoff_dolan_TR02014, althoff_frehse_girard_ARCRAS2021}, but often comes with a high computational cost which limits their applications to real-time, reactive, and safe robot motion planning and control since mobile robot platforms usually come with limited computational resources.
For globally asymptotically stable autonomous dynamical systems (with fixed deterministic state-feedback control and without any noise and disturbance), the notion of forward and backward reachable sets \cite{mitchell_HSCC2007} is trivial because the forward reachable set corresponds to the system trajectory due to the autonomous nature of the system dynamics whereas the backward reachability set is the entire state space due to the global stability.
In this paper, we use the forward simulation of the stable closed-loop unicycle dynamics as the baseline ground-truth motion prediction method.
In our numerical simulations, we demonstrate the effectiveness of the proposed conic feedback unicycle motion prediction methods for accurately capturing the closed-loop unicycle motion compared to the forward system simulation and Lyapunov motion prediction.  
%In our numerical simulations of safe robot navigation, we observe that our conic feedback unicycle motion prediction methods can accurately capture the closed-loop motion of a unicycle robot with a significantly smaller performance gap with the forward-simulation-based motion prediction compared to the Lyapunov motion prediction. 

Reference governors are add-on constrained control approaches for pre-stabilized dynamical systems to follow a given reference motion by minimally modifying the reference motion such that the expected closed-loop system motion satisfies system constraints at all times \cite{bemporad_TAC1998, gilbert_kolmanovsky_Automatica2002, garone_nicotra_TAC2015}.
The separation of stability and constraint satisfaction allows for systematically applying standard stabilizing robot control methods to complex constrained application settings. 
In robotics, reference governors are applied for safe robot navigation to separately address global navigation, stability, and safety requirements at different stages by high-level planning and low-level control \cite{arslan_koditschek_ICRA2017}.
Reference governors are successfully demonstrated for safe navigation of fully actuated higher-order robot systems using Lyapunov invariance sets \cite{arslan_koditschek_ICRA2017, li_ICRA2020, li_2020, isleyen_arslan_RAL2022}.
In this paper, we demonstrate the application of reference governors for safe unicycle robot navigation around obstacles with nonholonomic constraints by using unicycle feedback motion prediction.  
We also systematically investigate the role of different unicycle motion prediction methods on the governed robot motion in numerical simulations.

%%%%%%%%%%%%%%%%%%%%%%%%%%%%%%%%%%%%%%%%%%%%%%%%%%%%%%%%%%%%%%%%%%%%%%%%%%%%%%%
%%%%%%%%%%%%%%%%%%%%%%%%%%%%%%%%%%%%%%%%%%%%%%%%%%%%%%%%%%%%%%%%%%%%%%%%%%%%%%%
\subsection{Contributions and Organization of the Paper}
%%%%%%%%%%%%%%%%%%%%%%%%%%%%%%%%%%%%%%%%%%%%%%%%%%%%%%%%%%%%%%%%%%%%%%%%%%%%%%%
%%%%%%%%%%%%%%%%%%%%%%%%%%%%%%%%%%%%%%%%%%%%%%%%%%%%%%%%%%%%%%%%%%%%%%%%%%%%%%%

This paper introduces a family of novel conic feedback motion prediction methods for the kinematic unicycle robot model to bound the closed-loop unicycle motion trajectory under a standard forward motion control approach towards a goal position in \refsec{sec.UnicycleDynamicsControlPrediction}.
The proposed conic motion prediction methods are more accurate than the standard Lyapunov level sets (see \reffig{fig.motion_prediction_demo}) and are still easy to represent and compute.
In \refsec{sec.SafeNavigation}, we present an application of these unicycle feedback motion prediction methods for safe robot navigation using a reference governor, where the safety of the unicycle motion is continuously monitored using the collision distance of the predicted robot motion.
In \refsec{sec.NumericalSimulations}, we provide numerical simulations to demonstrate the effectiveness of the proposed conic feedback motion prediction methods compared to Lyapunov level sets and forward system simulation.
We conclude in \refsec{sec.Conclusions} with a summary of our contributions and future directions.

%%%%%%%%%%%%%%%%%%%%%%%%%%%%%%%%%%%%%%%%%%%%%%%%%%%%%%%%%%%%%%%%%%%%%%%%%%%%%%%
%%%%%%%%%%%%%%%%%%%%%%%%%%%%%%%%%%%%%%%%%%%%%%%%%%%%%%%%%%%%%%%%%%%%%%%%%%%%%%%
\section{Unicycle Dynamics, Control, \& Prediction}
\label{sec.UnicycleDynamicsControlPrediction}
%%%%%%%%%%%%%%%%%%%%%%%%%%%%%%%%%%%%%%%%%%%%%%%%%%%%%%%%%%%%%%%%%%%%%%%%%%%%%%%
%%%%%%%%%%%%%%%%%%%%%%%%%%%%%%%%%%%%%%%%%%%%%%%%%%%%%%%%%%%%%%%%%%%%%%%%%%%%%%%

In this section, we briefly describe the kinematic unicycle robot model and present a unicycle forward motion controller to navigate towards a given goal position. 
Then, we provide several feedback motion prediction methods that bound the unicycle motion trajectory under the forward motion control.  

%%%%%%%%%%%%%%%%%%%%%%%%%%%%%%%%%%%%%%%%%%%%%%%%%%%%%%%%%%%%%%%%%%%%%%%%%%%%%%%
%%%%%%%%%%%%%%%%%%%%%%%%%%%%%%%%%%%%%%%%%%%%%%%%%%%%%%%%%%%%%%%%%%%%%%%%%%%%%%%
\subsection{Kinematic Unicycle Robot Model}
\label{sec.KinematicUnicycleRobotModel}
%%%%%%%%%%%%%%%%%%%%%%%%%%%%%%%%%%%%%%%%%%%%%%%%%%%%%%%%%%%%%%%%%%%%%%%%%%%%%%%
%%%%%%%%%%%%%%%%%%%%%%%%%%%%%%%%%%%%%%%%%%%%%%%%%%%%%%%%%%%%%%%%%%%%%%%%%%%%%%%

In the Euclidean plane $ \R^2$, we consider a kinematic unicycle robot whose pose (a.k.a., state and configuration) is represented by its position $\pos \in \R^2$ and forward orientation angle $\ort \in [ -\pi, \pi )$ that is measured in radians counterclockwise from the horizontal axis.
The equations of motion of the kinematic unicycle robot model are given by
\begin{align} \label{eq.UnicycleDynamics}
\dot{\pos} = \linvel \ovect{\ort} \quad \text{and} \quad \dot{\ort} = \angvel 
\end{align}
where  $\linvel \in \R$ and $\angvel \in \R$ are the scalar control inputs, respectively, specifying the linear and angular velocity of the unicycle robot.
Hence, by definition, the unicycle robot model is underactuated (i.e., three state variables, but only two control inputs) and has the nonholonomic motion constraint of no sideway motion, i.e., $\nvecTsmall{\ort}\dot{\pos} = 0$.  
We also assume that the unicycle robot is only allowed to go forward (i.e., $\linvel \geq 0$), because, for example, its field of sensing might be restricted to the front direction or it might have a manipulator arm in the front.

%%%%%%%%%%%%%%%%%%%%%%%%%%%%%%%%%%%%%%%%%%%%%%%%%%%%%%%%%%%%%%%%%%%%%%%%%%%%%%%
%%%%%%%%%%%%%%%%%%%%%%%%%%%%%%%%%%%%%%%%%%%%%%%%%%%%%%%%%%%%%%%%%%%%%%%%%%%%%%%
\subsection{Unicycle Forward Motion Control}
\label{sec.UnicycleForwardMotionControl}
%%%%%%%%%%%%%%%%%%%%%%%%%%%%%%%%%%%%%%%%%%%%%%%%%%%%%%%%%%%%%%%%%%%%%%%%%%%%%%%
%%%%%%%%%%%%%%%%%%%%%%%%%%%%%%%%%%%%%%%%%%%%%%%%%%%%%%%%%%%%%%%%%%%%%%%%%%%%%%%

Based on a standard, globally asymptotically stable unicycle control approach \cite{astolfi_JDSMC1999, arslan_koditschek_IJRR2019}, we construct a unicycle forward motion controller, denoted by $\ctrl_{\goal}(\pos, \ort) = (\linvel_{\goal}(\pos, \ort), \angvel_{\goal}(\pos, \ort))$, that moves the unicycle robot in the forward direction towards any given goal position $\goal \in \R^2$ by determining the linear and angular velocity inputs as 
\begin{subequations} \label{eq.forward_unicycle_control}
\begin{align}
\linvel_{\goal}(\pos, \ort) &= \lingain \max \plist{\! 0 , \ovecTsmall{\ort}\! \!(\goal-\pos)\!\!}
\\
\angvel_{\goal}(\pos, \ort) &= \anggain \atantwo\plist{\!\nvecTsmall{\ort}\!\! (\goal-\pos), \ovecTsmall{\ort}\!\! (\goal-\pos)\!\!}\!\!
\end{align}
\end{subequations}
where  $\lingain > 0$ and $\anggain >0$ are positive scalar control gains for the linear and angular velocity, respectively, and  $\atantwo(y,x)$ is the 2-argument arctangent function that returns the counterclockwise angle (in radians in $[-\pi, \pi)$) from the horizontal axis to the ray starting from the origin to the point $(x,y)$ in the Euclidean plane.\reffn{fn.UnicycleControlContinuity}

As a globally asymptotically stable controller, the unicycle forward motion control in \refeq{eq.forward_unicycle_control} decreases both the Euclidean distance and the perpendicular alignment distance to the goal as well as orients the unicycle robot towards the goal in finite time and then maintains a persistent goal alignment, which is formally stated below and essential for unicycle feedback motion prediction later in \refsec{sec.UnicycleFeedbackMotionPrediction}. % (please refer to \cite{isleyen_dewouw_arslan_arXiv2022} for the proofs.)

\addtocounter{footnote}{1}\footnotetext{\label{fn.UnicycleControlContinuity}Note that we set $\angvel = 0$ when the robot is at the goal (i.e., $\pos = \goal$) to resolve the indeterminacy in the angular velocity since $\atantwo(0,0)$ is undefined \cite{astolfi_JDSMC1999}. This naturally introduces a discontinuity in control at the goal position as necessitated by Brockett's theorem \cite{Brockett_DGCT1983}. Otherwise, the unicycle forward motion control in \refeq{eq.forward_unicycle_control} is Lipschitz continuous almost everywhere, away from the goal position for any unicycle pose $(\pos, \ort) \in \R^{2} \times [-\pi, \pi)$ that satisfies $\ovecTsmall{\ort}(\goal - \pos) \neq - \norm{\goal - \pos}$.}

\begin{lemma} \label{lem.GlobalStability}
\emph{(Global Stability)}
The unicycle forward motion control $\ctrl_{\goal}$ in \refeq{eq.forward_unicycle_control} asymptotically brings all unicycle poses $(\pos, \ort)$ in $\R^{2}\times [-\pi, \pi)$ to any given goal position $\goal \in \R^2$, i.e., the closed-loop unicycle trajectory $(\pos(t), \ort(t))$ satisfies
\begin{align}
\lim_{t\rightarrow \infty} \pos(t) = \goal.
\end{align}   
\end{lemma}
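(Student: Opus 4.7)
The plan is to apply LaSalle's invariance principle with the squared-distance-to-goal as the candidate Lyapunov function,
\[
V(\pos) \ldf \frac{1}{2}\norm{\goal - \pos}^{2}.
\]

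First, I would differentiate $V$ along the closed-loop dynamics $\dot\pos = \linvel_{\goal}(\pos,\ort)\ovect{\ort}$ to obtain
\[
\dot V = -\linvel_{\goal}(\pos,\ort)\,\ovecT{\ort}(\goal-\pos) = -\lingain\left[\max\bigl(0,\,\ovecT{\ort}(\goal-\pos)\bigr)\right]^{2} \leq 0,
\]
where the last identity follows directly from the definition in \refeq{eq.forward_unicycle_control}. Hence every sublevel set $\clist{V \leq V_{0}}$ is forward invariant in $\pos$, and since $\ort \in [-\pi,\pi)$ is automatically bounded, every trajectory stays in a compact subset of $\R^{2} \times [-\pi,\pi)$ and is defined for all forward time.

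Next, I would characterize the largest invariant subset $M$ of the zero-derivative set $\clist{(\pos,\ort) : \dot V = 0}$. This set decomposes into (i) the desired slice $\clist{\pos = \goal}$ and (ii) poses with $\pos \neq \goal$ and $\ovecT{\ort}(\goal-\pos) \leq 0$, i.e., those whose alignment error $\alignerr \ldf \atantwo\bigl(\nvecT{\ort}(\goal-\pos),\,\ovecT{\ort}(\goal-\pos)\bigr)$ satisfies $\absval{\alignerr} \geq \pi/2$. On region (ii), $\linvel_{\goal} = 0$ freezes $\pos$; consequently the bearing from the robot to the goal is constant, so $\alignerr$ evolves purely through $\dot\ort$ as $\dot{\alignerr} = -\anggain\, \alignerr$, yielding exponential decay of $\absval{\alignerr}$. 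Thus $\absval{\alignerr}$ falls below $\pi/2$ in finite time, which activates $\linvel_{\goal} > 0$ and pushes the trajectory out of (ii). Consequently no point of (ii) lies in $M$, so $M = \clist{\pos = \goal}$, and LaSalle's theorem yields $\pos(t) \to \goal$ as $t \to \infty$.

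The main obstacle I foresee is the bookkeeping around the piecewise-smooth nature of $\linvel_{\goal}$ and the angular wrap-around of $\alignerr$: one must justify classical (or at worst Filippov) solutions through the switching surface $\ovecT{\ort}(\goal-\pos) = 0$, verify that the measure-zero discontinuity at $\ovecT{\ort}(\goal-\pos) = -\norm{\goal-\pos}$ flagged in the footnote does not trap trajectories, and confirm that the equilibrium convention $\angvel = 0$ at $\pos = \goal$ is compatible with the limiting behavior. Because $V$ depends only on $\pos$ and is monotonically non-increasing along every admissible solution, these technicalities affect only the well-posedness of the trajectories and do not disturb the invariance argument above.
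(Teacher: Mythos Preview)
Your LaSalle-based argument is correct. The paper organizes the same ingredients differently: it first establishes the auxiliary Lemmas~\ref{lem.EuclideanDistance2Goal}--\ref{lem.PersistentGoalAlignment} and then argues directly that the robot aligns with the goal within $1/\anggain$ seconds (\reflem{lem.FiniteTimeGoalAlignment}), remains aligned thereafter (\reflem{lem.PersistentGoalAlignment}), and from that point on $\norm{\pos-\goal}$ is \emph{strictly} decreasing by the formula in \reflem{lem.EuclideanDistance2Goal}, forcing $\pos(t)\to\goal$. Your $\dot V$ computation is exactly \reflem{lem.EuclideanDistance2Goal}, and your argument that no trajectory can remain in region~(ii) is essentially the content of \reflem{lem.FiniteTimeGoalAlignment}; LaSalle then lets you bypass an explicit persistent-alignment statement, since the invariant-set characterization absorbs it. The paper's modular decomposition pays off later because Lemmas~\ref{lem.EuclideanDistance2Goal}--\ref{lem.PerperdicularGoalAlignmentDistance} are reused independently to build the conic motion-prediction sets, whereas your self-contained LaSalle argument is cleaner as a stand-alone stability proof but does not isolate those reusable pieces.
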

\begin{proof}
See \refapp{app.GlobalStability}.
\end{proof}

\begin{lemma} \label{lem.EuclideanDistance2Goal}
\emph{(Euclidean Distance to Goal)} 
Under the unicycle forward motion control $\ctrl_{\goal}$ in \refeq{eq.forward_unicycle_control}, the Euclidean distance $\norm{\pos - \goal}$ of any unicycle pose $(\pos,\ort) \in \R^{2} \times [-\pi, \pi)$ to any given goal position $\goal \in \R^{2}$ is decreasing over time, i.e.,
\begin{align}
\frac{\diff}{\diff t} \norm{\pos - \goal}^2 = - 2 \lingain \plist{\!\ovecTsmall{\ort}\! \!(\goal - \pos)\!}^{\!\!2} \leq 0.
\end{align} 
\end{lemma}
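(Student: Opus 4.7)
The plan is to compute the time derivative of the squared Euclidean distance directly from the closed-loop dynamics. Since the goal $\goal$ is a fixed point, I would start from the chain rule
\begin{align}
\frac{\diff}{\diff t}\norm{\pos - \goal}^2 = 2(\pos - \goal)^{\mathrm{T}}\dot{\pos}
\end{align}
and then substitute the unicycle kinematics $\dot{\pos} = \linvel \ovect{\ort}$ from \refeq{eq.UnicycleDynamics}. This immediately gives
\begin{align}
\frac{\diff}{\diff t}\norm{\pos - \goal}^2 = 2\linvel\, \ovecT{\ort}(\pos - \goal) = -2\linvel\, \ovecT{\ort}(\goal - \pos),
\end{align}
reducing the problem to a one-line algebraic manipulation involving the commanded linear velocity.

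Next, I would plug in the forward motion controller $\linvel_{\goal}(\pos,\ort) = \lingain \max\plist{0, \ovecT{\ort}(\goal - \pos)}$ from \refeq{eq.forward_unicycle_control}. The key observation is the elementary identity $\max(0,a)\cdot a = \bigl(\max(0,a)\bigr)^2$ for any $a \in \R$: on the half-line $a \geq 0$ both sides equal $a^2$, while on $a < 0$ both sides vanish. Applied with $a = \ovecT{\ort}(\goal-\pos)$, this yields
\begin{align}
\frac{\diff}{\diff t}\norm{\pos - \goal}^2 = -2\lingain \Bigl(\max\plist{0, \ovecT{\ort}(\goal - \pos)}\Bigr)^{\!2}.
\end{align}
Since $\bigl(\max(0,a)\bigr)^2 \leq a^2$, the right-hand side is bounded above by $-2\lingain \plist{\ovecT{\ort}(\goal - \pos)}^2$ in absolute value (and, more to the point, is always $\leq 0$), which matches the inequality claimed in the lemma. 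The displayed equality in the lemma is tight precisely on the forward-motion regime $\ovecT{\ort}(\goal-\pos) \geq 0$; on the complementary regime the robot rotates in place and $\norm{\pos-\goal}$ is momentarily constant, still consistent with the nonpositivity assertion.

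There is no real obstacle here beyond being careful with the $\max(\cdot,\cdot)$ term in the controller; the result is essentially a one-step Lyapunov-style calculation using $V(\pos) = \norm{\pos - \goal}^2$. The only subtlety worth flagging in the write-up is the piecewise nature of $\linvel_{\goal}$, which makes $V$ decreasing only in the closed half-space where the goal lies in the forward half-plane of the robot, with $\dot{V} = 0$ otherwise — a distinction that is harmless for monotonicity but becomes important later when this bound is combined with the angular dynamics in \reflem{lem.GlobalStability} and in the conic feedback motion prediction constructions of \refsec{sec.UnicycleFeedbackMotionPrediction}.
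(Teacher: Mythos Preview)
Your approach is essentially identical to the paper's: differentiate $\norm{\pos-\goal}^2$ via the chain rule, substitute the unicycle kinematics and the forward controller, and simplify the product $\max(0,a)\cdot a$. The only difference is that you are more careful than the paper about the piecewise behavior of $\linvel_{\goal}$---you correctly arrive at $-2\lingain\bigl(\max(0,\ovecT{\ort}(\goal-\pos))\bigr)^2$ and note that the displayed equality in the lemma is strict only on the forward half-plane, whereas the paper's intermediate step collapses $\max(0,a)\cdot a$ directly to $a^2$ (which is literally true only for $a\geq 0$); this extra care is appropriate but does not change the substance of the argument.
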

\begin{proof}
See \refapp{app.EuclideanDistance2Goal}.
\end{proof}

\begin{lemma} \label{lem.FiniteTimeGoalAlignment}
\emph{(Finite-Time Goal Alignment)}
The unicycle forward motion control  $\ctrl_{\goal}$ in \refeq{eq.forward_unicycle_control} adjusts the unicycle orientation towards any given goal position $\goal \in \R^{2}$ in at most  $\frac{1}{\anggain}$ seconds, where $\anggain > 0$ is the angular velocity gain, that is to say, the unicycle pose trajectory $(\pos(t), \ort(t))$ starting at $t = 0$ from any initial pose $(\pos_0, \ort_0) \in \R^{2} \times [-\pi, \pi)$ away from the goal (i.e., $\pos_0 \neq \goal$) satisfies
\begin{align}
\ovecTsmall{\ort(\tfrac{1}{\anggain})}(\goal - \pos(\tfrac{1}{\anggain})) > 0.
\end{align}
\end{lemma}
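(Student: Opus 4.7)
The plan is to introduce the alignment error angle
\[
\direrr \ldf \atantwo\plist{\nvecTsmall{\ort}(\goal - \pos), \ovecTsmall{\ort}(\goal - \pos)}
\]
between the robot's forward heading and the goal direction, which is well-defined as long as $\pos \neq \goal$. Note that $\ovecTsmall{\ort}(\goal - \pos) = \norm{\goal - \pos}\cos\direrr$, so the desired inequality $\ovecTsmall{\ort(1/\anggain)}(\goal - \pos(1/\anggain)) > 0$ is equivalent to $|\direrr(1/\anggain)| < \pi/2$. Since Lemma 2 together with $\dot{\norm{\pos-\goal}^2} \geq -2\lingain\norm{\goal - \pos}^2$ gives $\norm{\pos(t) - \goal} \geq \norm{\pos_0 - \goal} e^{-\lingain t} > 0$ for all finite $t$, the error $\direrr(t)$ remains well-defined along the whole trajectory.

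Next I would differentiate $\direrr$ along the closed-loop dynamics \refeq{eq.UnicycleDynamics}--\refeq{eq.forward_unicycle_control}. Writing $\direrr$ as the angle of the goal relative vector minus $\ort$ and using $\dot{\pos} = \linvel\ovectsmall{\ort}$ together with the atan2 identities $\sin\direrr = \nvecTsmall{\ort}(\goal - \pos)/\norm{\goal - \pos}$ and $\cos\direrr = \ovecTsmall{\ort}(\goal - \pos)/\norm{\goal - \pos}$, a short computation gives the two-regime expression
\[
\dot{\direrr} =
\begin{cases}
\tfrac{\lingain}{2}\sin(2\direrr) - \anggain\direrr, & \cos\direrr > 0, \\
-\anggain \direrr, & \cos\direrr \leq 0,
\end{cases}
\]
where the second case follows because the $\max$ in \refeq{eq.forward_unicycle_control} clips $\linvel$ to zero when the goal is not in the forward half-plane.

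I would then carry out two clean steps. First, in the ``backward'' regime $|\direrr| \geq \pi/2$ the robot performs pure rotation, so $\direrr(t) = \direrr(0)e^{-\anggain t}$, and starting from the worst case $|\direrr(0)| < \pi$ the trajectory reaches the boundary $|\direrr| = \pi/2$ in time strictly less than $t^* = \frac{1}{\anggain}\ln 2 < \frac{1}{\anggain}$. Second, I would show that the ``forward'' set $\clist{|\direrr| < \pi/2}$ is forward invariant by checking that at $\direrr = \pi/2$ we have $\dot{\direrr} = -\anggain \pi/2 < 0$ (and symmetrically at $-\pi/2$), so $\direrr$ cannot cross this boundary outward. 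Combining both steps, $|\direrr(t)| < \pi/2$ is achieved strictly before $t = 1/\anggain$ and is maintained thereafter, yielding $\ovecTsmall{\ort(1/\anggain)}(\goal - \pos(1/\anggain)) = \norm{\goal - \pos(1/\anggain)}\cos\direrr(1/\anggain) > 0$.

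The main obstacle I anticipate is the nonsmooth switching in the control law at $\cos\direrr = 0$: strictly speaking, $\dot\direrr$ is only defined almost everywhere, so I would justify the exponential-decay/invariance argument by an absolutely continuous comparison of $|\direrr(t)|$ against the one-sided bounds in each regime, relying on the footnote's Lipschitz-continuity of $\ctrl_\goal$ away from $\pos = \goal$ and the boundary $\ovecTsmall{\ort}(\goal-\pos) = -\norm{\goal-\pos}$ (the latter corresponds to $|\direrr| = \pi$, a single point not visited in the regime $|\direrr| \geq \pi/2$ since $|\direrr|$ is strictly decreasing there). Beyond this technicality, the argument is elementary and quantitative.
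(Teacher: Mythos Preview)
Your proposal is correct and follows essentially the same approach as the paper: both observe that in the backward regime $\ovecTsmall{\ort}(\goal-\pos)\le 0$ the linear velocity vanishes so the robot rotates in place, and then bound the time needed to bring $|\direrr|$ down to $\pi/2$. Your argument is more thorough---you integrate $\dot\direrr=-\anggain\direrr$ exactly to obtain the sharper bound $(\ln 2)/\anggain$, and you make explicit both the forward invariance of $\{|\direrr|<\pi/2\}$ (effectively \reflem{lem.PersistentGoalAlignment}) and the nonsmooth switching---whereas the paper uses only the cruder linear estimate $|\angvel|\ge\anggain\pi/2$ against at most $\pi/2$ of required rotation.
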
 
\begin{proof}
See \refapp{app.FiniteTimeGoalAlignment}.
\end{proof}

\begin{lemma}\label{lem.PersistentGoalAlignment}
\emph{(Persistent Goal Alignment)}
For any initial unicycle pose $(\pos_0, \ort_0) \in \R^{2} \times [-\pi, \pi)$ at $t = 0$ that points towards a given goal position $\goal \in \R^{2}$, the unicycle forward motion control  \refeq{eq.forward_unicycle_control} keeps the unicycle pose $(\pos(t), \ort(t))$ aligned towards the goal for all future time $t \geq 0$, i.e.,
\begin{align}
\ovecTsmall{\ort_0} \! (\goal - \pos_0) \geq 0 \Longrightarrow \ovecTsmall{\ort(t)} \! (\goal - \pos(t)\!) \geq 0, 
\end{align} 
where the inequalities are strict  for $\pos_0 \neq \goal$ and $\pos(t) \neq \goal$.
\end{lemma}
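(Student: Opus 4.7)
My plan is to reduce the lemma to the forward invariance of the half-space $\clist{(\pos, \ort) : \alpha \geq 0}$, where
$\alpha(t) \ldf \ovecT{\ort(t)}(\goal - \pos(t))$
is the scalar measuring goal alignment that already appears in the control law \refeq{eq.forward_unicycle_control}. The conclusion of the lemma is precisely that $\alpha(0) \geq 0$ implies $\alpha(t) \geq 0$ for all $t \geq 0$, so invariance of this set yields the weak form, and the strict inequality claim will follow from the same infinitesimal analysis away from the fixed point $\pos = \goal$.

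The central computation is to differentiate $\alpha$ along the closed-loop trajectory. Using $\frac{\diff}{\diff t}\ovect{\ort} = \angvel \nvect{\ort}$ and $\ovecT{\ort}\ovect{\ort} = 1$ together with the unicycle dynamics \refeq{eq.UnicycleDynamics}, one obtains $\dot{\alpha} = \angvel \beta - \linvel$, where $\beta \ldf \nvecT{\ort}(\goal - \pos)$ is the orthogonal companion of $\alpha$ and $\alpha^2 + \beta^2 = \norm{\goal - \pos}^2$. The decisive observation concerns the boundary $\clist{\alpha = 0,\; \pos \neq \goal}$: there the control law \refeq{eq.forward_unicycle_control} forces $\linvel = 0$ via the $\max$ clipping and $\angvel = \anggain \atantwo(\beta, 0) = \anggain \sgn(\beta)\, \pi/2$, so
$\dot{\alpha} = \anggain \frac{\pi}{2} |\beta| > 0$, with strict positivity ensured because $\alpha = 0$ and $\pos \neq \goal$ together give $|\beta| = \norm{\goal - \pos} > 0$.

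With this strict inward-pointing derivative on the boundary, I would conclude invariance by contradiction. Assuming $\alpha(0) \geq 0$ while $\alpha(T) < 0$ for some $T > 0$, I would set $t_0 \ldf \sup\clist{t \in [0, T] : \alpha(s) \geq 0 \text{ for all } s \in [0, t]}$; continuity of $\alpha$ and the existence of times in any right neighborhood of $t_0$ at which $\alpha < 0$ force $\alpha(t_0) = 0$. Moreover $\pos(t_0) \neq \goal$, since otherwise the closed loop sits at the fixed point $\goal$ for all $t \geq t_0$ and $\alpha \equiv 0$ on $[t_0, T]$, contradicting $\alpha(T) < 0$. The boundary computation then yields $\dot{\alpha}(t_0) > 0$, so $\alpha$ is strictly increasing on a right neighborhood of $t_0$, contradicting the definition of $t_0$. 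The strict inequality statement for $\pos_0, \pos(t) \neq \goal$ follows from the same observation: $\alpha$ cannot touch zero away from $\goal$ without the next instant pushing it strictly positive.

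The main subtlety I expect to handle with care is the non-smoothness of $\ctrl_{\goal}$ at $\alpha = 0$ introduced by the $\max(0, \cdot)$ clipping of $\linvel$ in \refeq{eq.forward_unicycle_control}. It turns out to be harmless for the invariance argument because both one-sided expressions for $\linvel$ agree (and equal zero) on the boundary, so $\dot{\alpha}$ is in fact continuous across $\alpha = 0$ and no non-smooth analysis is required. The remaining singular point $\pos = \goal$ is an isolated fixed point handled by the convention $\angvel = 0$ of footnote \reffn{fn.UnicycleControlContinuity}, which is precisely why the strict inequalities require the exclusions $\pos_0 \neq \goal$ and $\pos(t) \neq \goal$.
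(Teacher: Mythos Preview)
Your proposal is correct and follows essentially the same approach as the paper: both compute $\dot{\alpha} = \angvel\beta - \linvel$ on the boundary $\alpha = 0$ (with $\pos \neq \goal$), observe that $\linvel = 0$ and $\angvel = \anggain\sgn(\beta)\tfrac{\pi}{2}$ there, and obtain $\dot{\alpha} = \anggain\tfrac{\pi}{2}\norm{\goal-\pos} > 0$, from which forward invariance of $\clist{\alpha \geq 0}$ follows. Your version is more explicit about the invariance deduction (the supremum/contradiction argument), while the paper simply asserts that the strict boundary derivative maintains alignment; the underlying computation is identical.
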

\begin{proof}
See \refapp{app.PersistentGoalAlignment}.
\end{proof}

\begin{lemma} \label{lem.PerperdicularGoalAlignmentDistance}
\emph{(Perpendicular Goal Alignment Distance)}
For any unicycle pose $(\pos, \ort)  \in \R^2 \times [-\pi, \pi)$ that points towards the goal position $\goal \in \R^2$,  the perpendicular goal alignment distance $\dist_{\goal}(\pos,\ort)$ that is defined as
\begin{align}\label{eq.PerpendicularAlignmentDistance}
\dist_{\goal}(\pos, \ort) &:= \absval{\nvecTsmall{\ort}\!(\goal - \pos)}
%\\ &\sqrt{\norm{\goal - \pos}^2  - \plist{\!\ovecTsmall{\ort} \!\! (\goal - \pos)\!\!}^{\!2}},
\end{align}
is decreasing under the unicycle forward control in \refeq{eq.forward_unicycle_control}, i.e.,
\begin{align}
\ovecTsmall{\ort} \!\! (\goal - \pos) \geq 0  \Longrightarrow \frac{\diff }{\diff t} \dist_{\goal}(\pos, \ort) \leq 0.
\end{align}
\end{lemma}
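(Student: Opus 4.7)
The plan is to differentiate $\dist_{\goal}^2 = \bigl(\nvecTsmall{\ort}(\goal-\pos)\bigr)^2$ along the closed-loop trajectory and show that it is non-positive whenever the robot is pointing towards the goal; this will imply that $\dist_{\goal}$ itself is non-increasing. To streamline the calculation, I would introduce the shorthand $\alpha := \ovecTsmall{\ort}(\goal-\pos)$ (the on-axis component) and $\beta := \nvecTsmall{\ort}(\goal-\pos)$ (the perpendicular component), so that $\dist_{\goal} = |\beta|$ and the orientation hypothesis becomes $\alpha \geq 0$.

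Next, I would compute $\dot{\beta}$ using the elementary identities $\tfrac{\diff}{\diff \ort}\ovect{\ort} = \nvect{\ort}$ and $\tfrac{\diff}{\diff \ort}\nvect{\ort} = -\ovect{\ort}$, together with the unicycle kinematics $\dot{\pos} = \linvel \ovect{\ort}$ from \refeq{eq.UnicycleDynamics}. Product-rule differentiation of $\beta = \nvecTsmall{\ort}(\goal-\pos)$ then gives
\begin{align}
\dot{\beta} \;=\; -\angvel\,\ovecTsmall{\ort}(\goal-\pos) \;-\; \linvel\,\nvecTsmall{\ort}\ovect{\ort} \;=\; -\angvel\,\alpha,
\end{align}
since the unit vectors $\ovect{\ort}$ and $\nvect{\ort}$ are orthogonal. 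Consequently,
\begin{align}
\frac{\diff}{\diff t}\dist_{\goal}^2 \;=\; 2\beta\dot{\beta} \;=\; -2\,\angvel\,\alpha\,\beta.
\end{align}

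To conclude, I would substitute the angular velocity law $\angvel = \anggain\,\atantwo(\beta,\alpha)$ from \refeq{eq.forward_unicycle_control} and exploit the sign structure of $\atantwo$. When $\alpha \geq 0$, the value $\atantwo(\beta,\alpha)$ lies in $[-\tfrac{\pi}{2}, \tfrac{\pi}{2}]$ and shares the sign of $\beta$, so that $\beta\,\atantwo(\beta,\alpha) \geq 0$. Combined with $\alpha \geq 0$ and $\anggain > 0$, this yields $\tfrac{\diff}{\diff t}\dist_{\goal}^2 \leq 0$, which proves that $\dist_{\goal}$ is non-increasing whenever $\alpha \geq 0$.

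The only subtlety I anticipate is the non-smoothness of $\dist_{\goal} = |\beta|$ at $\beta = 0$; I would sidestep it by working with the smooth surrogate $\dist_{\goal}^2$ and noting that its non-positive derivative forces $\dist_{\goal}$ to be non-increasing in the weak (monotone) sense required by the statement. A secondary point worth flagging is the sign analysis of $\atantwo$ on the boundary $\alpha = 0$, where $\atantwo(\beta,0) = \pm \tfrac{\pi}{2}\,\sgn(\beta)$; this case is still consistent with $\beta\,\atantwo(\beta,\alpha) \geq 0$, and since it is multiplied by $\alpha = 0$ it does not affect the bound at all.
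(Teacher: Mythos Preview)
Your proposal is correct and follows essentially the same route as the paper's proof: both differentiate $\dist_{\goal}^2$, obtain $\tfrac{\diff}{\diff t}\dist_{\goal}^2 = -2\,\angvel\,\alpha\,\beta$, substitute $\angvel = \anggain\,\atantwo(\beta,\alpha)$, and conclude from the fact that $\atantwo(\beta,\alpha)$ shares the sign of $\beta$ whenever $\alpha\geq 0$. The only cosmetic difference is that the paper rewrites $\alpha,\beta$ via the alignment angle $\alignerr := \atantwo(\beta,\alpha)$ (so that $\alpha = \norm{\goal-\pos}\cos\alignerr$, $\beta = \norm{\goal-\pos}\sin\alignerr$) and phrases the sign argument as $\alignerr\sin\alignerr \geq 0$, $\cos\alignerr \geq 0$; your formulation in Cartesian components $(\alpha,\beta)$ is equivalent and arguably a bit more direct.
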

\begin{proof}
See \refapp{app.PerpendicularGoalAlignmentDistance}.
\end{proof}

%%%%%%%%%%%%%%%%%%%%%%%%%%%%%%%%%%%%%%%%%%%%%%%%%%%%%%%%%%%%%%%%%%%%%%%%%%%%%%%
%%%%%%%%%%%%%%%%%%%%%%%%%%%%%%%%%%%%%%%%%%%%%%%%%%%%%%%%%%%%%%%%%%%%%%%%%%%%%%%
\begin{figure*}[t]
\centering
\begin{tabular}{@{}c@{\hspace{1.5mm}}c@{\hspace{2mm}}c@{\hspace{1.5mm}}c@{}}
\includegraphics[width = 0.23\textwidth]{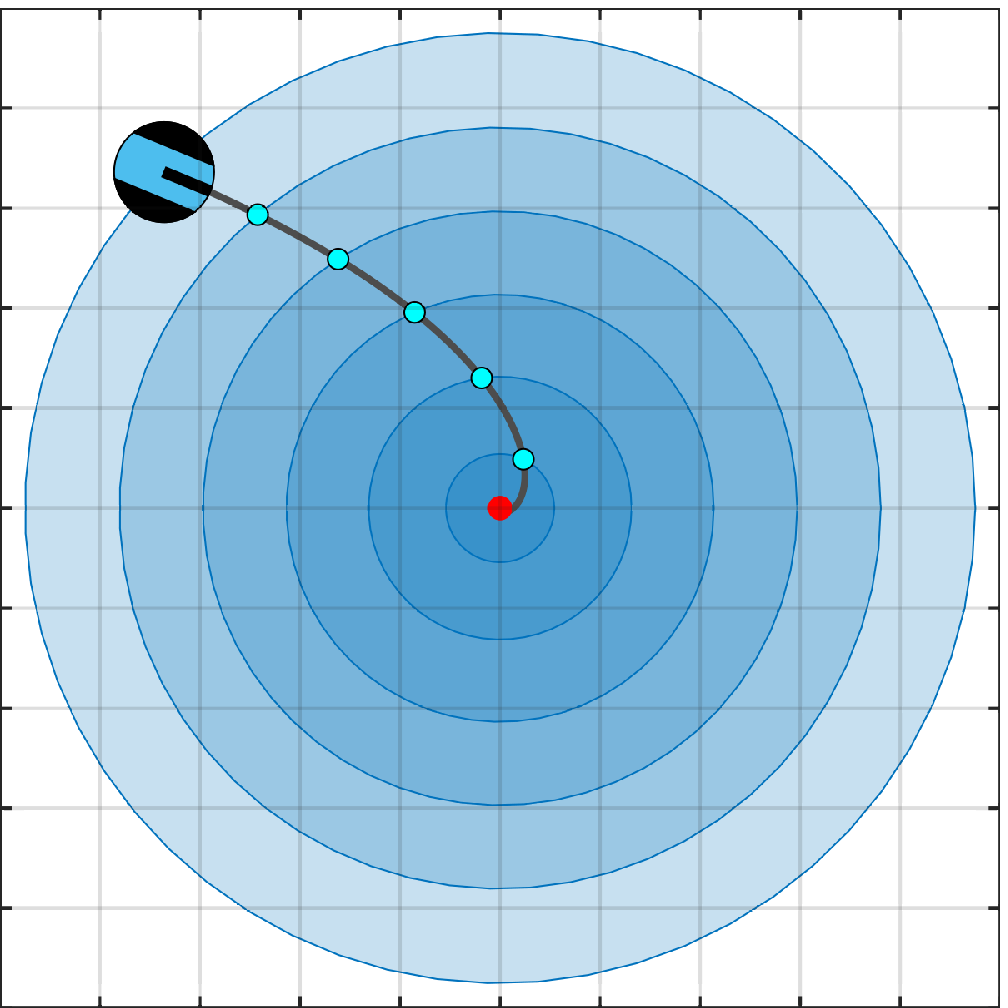} & 
\includegraphics[width = 0.23\textwidth]{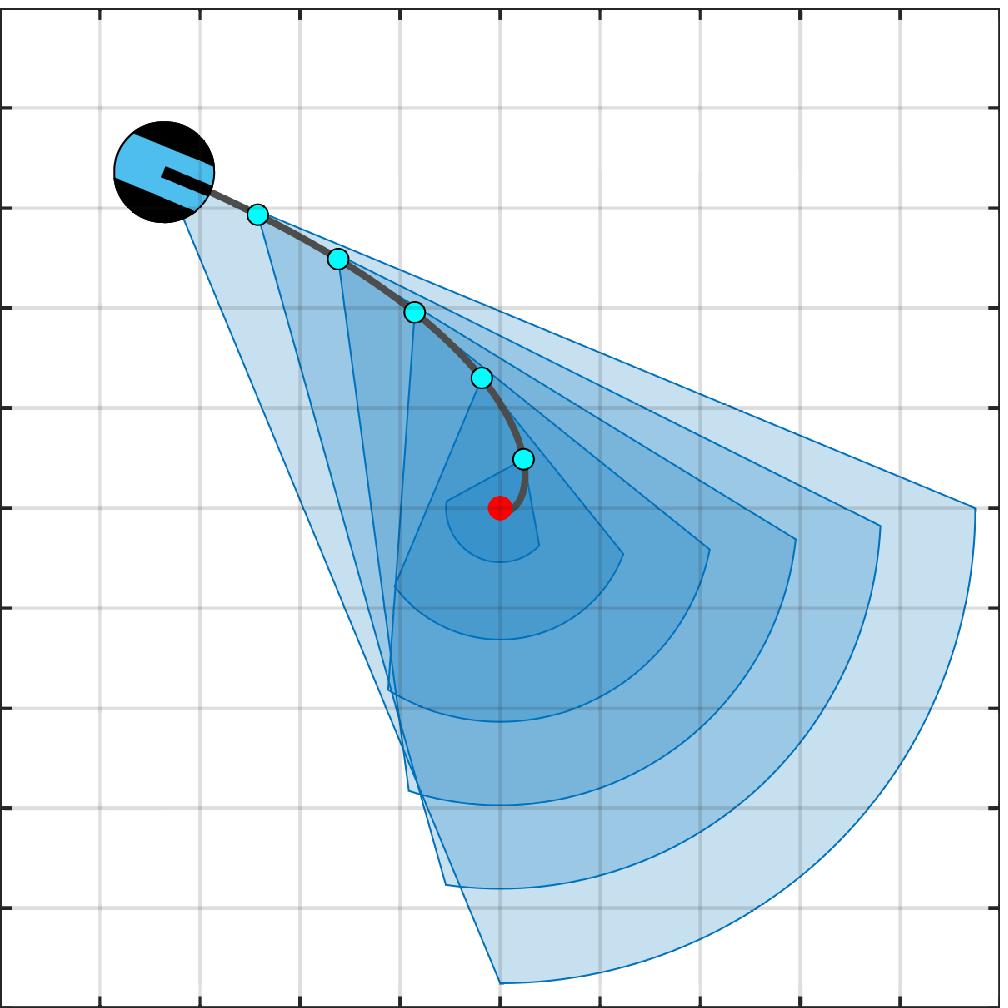} &
\includegraphics[width = 0.23\textwidth]{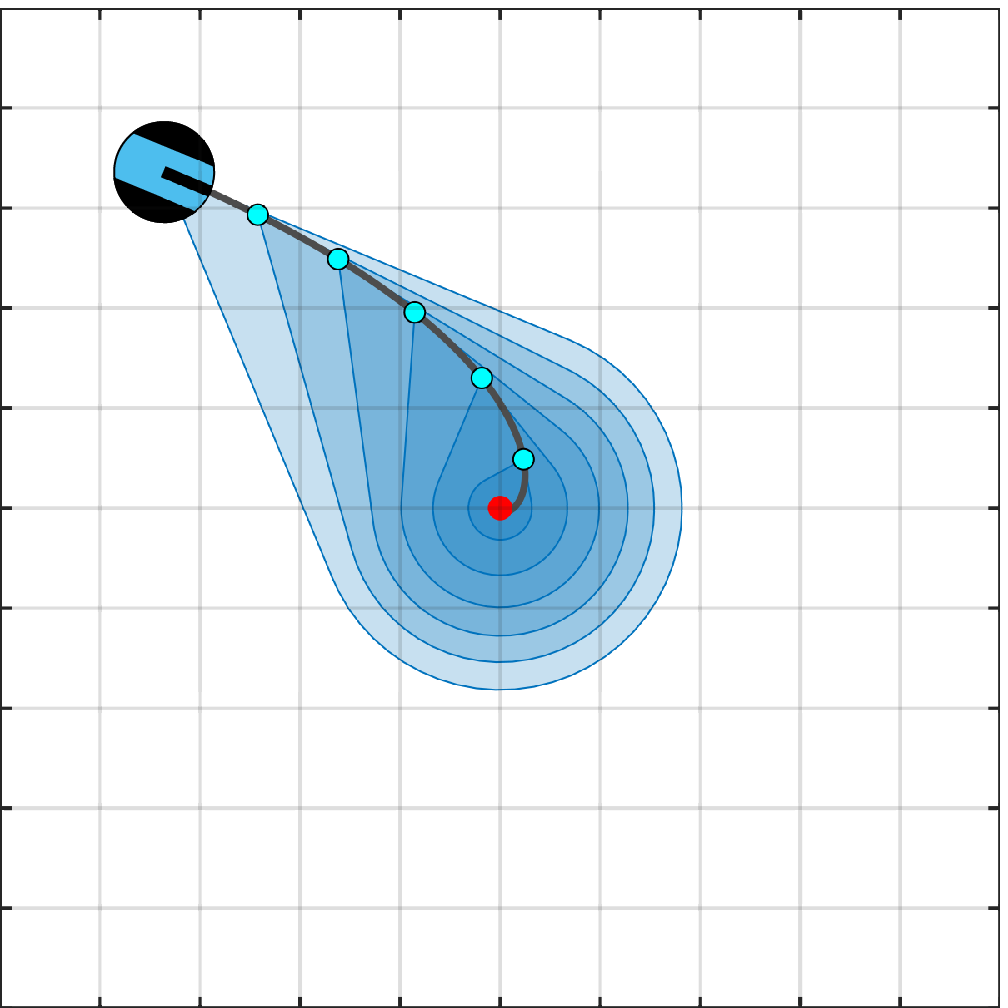} & 
\includegraphics[width = 0.23\textwidth]{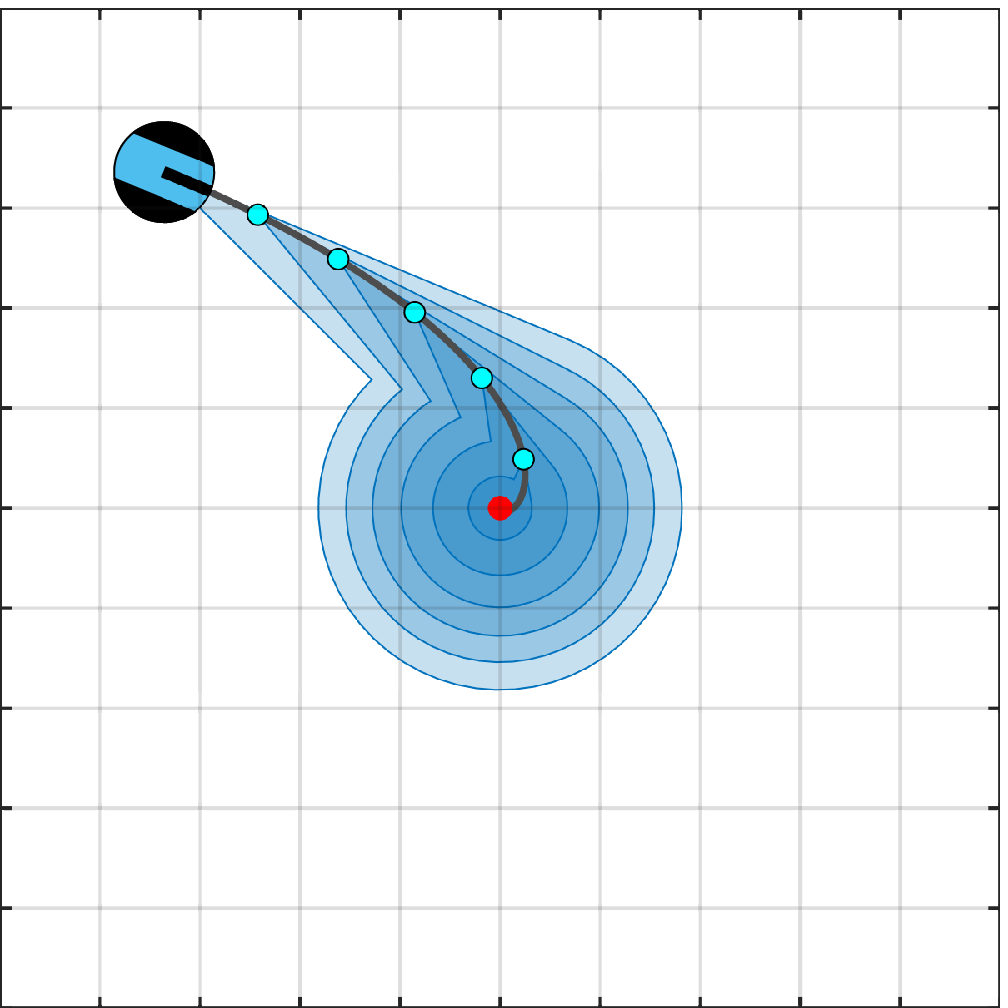}
\\[-1mm]
\footnotesize{(a)} & \footnotesize{(b)} & \footnotesize{(c)} & \footnotesize{(d)}
\end{tabular}
\vspace{-2mm}
\caption{Positive inclusion of unicycle feedback motion predictions: (a) Circular motion prediction, (b) Bounded conic motion prediction, (c) Ice-cream motion cone,  (d) Truncated ice-cream motion cone. All motion prediction methods, except the bounded conic motion prediction, are positively inclusive. }
\label{fig.UnicycleFeedbackMotionPredictionPositiveInclusion}
\vspace{-3mm}
\end{figure*}
%%%%%%%%%%%%%%%%%%%%%%%%%%%%%%%%%%%%%%%%%%%%%%%%%%%%%%%%%%%%%%%%%%%%%%%%%%%%%%%
%%%%%%%%%%%%%%%%%%%%%%%%%%%%%%%%%%%%%%%%%%%%%%%%%%%%%%%%%%%%%%%%%%%%%%%%%%%%%%%

%%%%%%%%%%%%%%%%%%%%%%%%%%%%%%%%%%%%%%%%%%%%%%%%%%%%%%%%%%%%%%%%%%%%%%%%%%%%%%%
%%%%%%%%%%%%%%%%%%%%%%%%%%%%%%%%%%%%%%%%%%%%%%%%%%%%%%%%%%%%%%%%%%%%%%%%%%%%%%%
\subsection{Unicycle Feedback Motion Prediction}
\label{sec.UnicycleFeedbackMotionPrediction}
%%%%%%%%%%%%%%%%%%%%%%%%%%%%%%%%%%%%%%%%%%%%%%%%%%%%%%%%%%%%%%%%%%%%%%%%%%%%%%%
%%%%%%%%%%%%%%%%%%%%%%%%%%%%%%%%%%%%%%%%%%%%%%%%%%%%%%%%%%%%%%%%%%%%%%%%%%%%%%%

Feedback motion prediction, which plays a key role in robot safety assessment and safe robot motion design, aims at determining a motion range bound (e.g., a positively invariant Lyapunov level set) on the closed-loop motion trajectory of a robotic system starting from a known initial state towards a given goal state under a specific control policy \cite{isleyen_arslan_RAL2022, arslan_isleyen_arXiv2023}. 
We now present several feedback motion range prediction methods that can be used for bounding the closed-loop motion trajectory of the kinematic unicycle robot model in \refeq{eq.UnicycleDynamics} under the unicycle forward motion control\footnote{In fact, the proposed unicycle feedback motion prediction methods hold for any (forward) unicycle control approach that decreases the Euclidean distance and perpendicular distance to the goal as well as has the finite-time and persistent goal alignment properties described in Lemmas \ref{lem.EuclideanDistance2Goal}-\ref{lem.PerperdicularGoalAlignmentDistance}. For example, one can alternatively use the proposed unicycle feedback motion prediction algorithms for the (forward) unicycle control policy in \cite{lee_etal_IROS2000}.} in \refeq{eq.forward_unicycle_control}.

%%%%%%%%%%%%%%%%%%%%%%%%%%%%%%%%%%%%%%%%%%%%%%%%%%%%%%%%%%%%%%%%%%%%%%%%%%%%%%%
%%%%%%%%%%%%%%%%%%%%%%%%%%%%%%%%%%%%%%%%%%%%%%%%%%%%%%%%%%%%%%%%%%%%%%%%%%%%%%%
\subsubsection{Circular Motion Range Prediction}
\label{sec.CircularMotionPrediction}
%%%%%%%%%%%%%%%%%%%%%%%%%%%%%%%%%%%%%%%%%%%%%%%%%%%%%%%%%%%%%%%%%%%%%%%%%%%%%%%
%%%%%%%%%%%%%%%%%%%%%%%%%%%%%%%%%%%%%%%%%%%%%%%%%%%%%%%%%%%%%%%%%%%%%%%%%%%%%%%

A classical approach for feedback motion prediction design is the use of invariant Lyapunov level sets for feedback control systems with known Lyapunov functions \cite{blanchini_Automatica1999}.
Since the Euclidean distance of the unicycle position to the goal position is a valid Lyapunov function for the unicycle forward motion control (\reflem{lem.EuclideanDistance2Goal}), the closed-loop motion trajectory of the unicycle robot can be bounded in terms of Euclidean balls \cite{arslan_koditschek_IJRR2019}.

\begin{proposition} \label{prop.CircularUnicycleMotionPrediction}
\emph{(Circular Unicycle Motion Prediction)} Starting at $t = 0$ from any initial pose $(\pos_0, \ort_0) \in \R^2 \times [-\pi, \pi)$, the unicycle robot position trajectory $\pos(t)$ under the unicycle forward motion control $\ctrl_{\goal}$ in \refeq{eq.forward_unicycle_control} towards a given goal $\goal \in \R^{2}$ is contained for all future times in the circular motion prediction set $\motionset_{\ctrl_{\goal}, \ball}(\pos_0, \ort_0)$ that is defined~as 
\begin{align}
\pos(t) \in \motionset_{\ctrl_{\goal}, \ball}(\pos_0, \ort_0) :=  \ball\plist{\goal, \norm{\goal - \pos_0}} \quad  \quad \forall t \geq 0,
\end{align}
where $\ball(\ctr,\radius) := \clist{ \vect{z} \in \R^{2} \big|  \norm{\vect{z} - \ctr } \leq \radius} $ is the Euclidean closed ball centered at $\vect{c} \in \R^{2}$ with radius $\radius \geq 0$.    
\end{proposition}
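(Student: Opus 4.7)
The plan is to invoke \reflem{lem.EuclideanDistance2Goal} directly: since that lemma already computes $\frac{\diff}{\diff t}\norm{\pos-\goal}^{2} = -2\lingain\plist{\ovecTsmall{\ort}(\goal-\pos)}^{2} \leq 0$ along any closed-loop trajectory of \refeq{eq.forward_unicycle_control}, the squared Euclidean distance to $\goal$ is a non-increasing function of time. Hence $\norm{\pos(t)-\goal}^{2} \leq \norm{\pos_0-\goal}^{2}$ for every $t \geq 0$.

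Taking square roots preserves the inequality on $[0,\infty)$, which yields $\norm{\pos(t)-\goal} \leq \norm{\pos_0-\goal}$, i.e., $\pos(t) \in \ball(\goal,\norm{\goal-\pos_0})$ for all $t \geq 0$. This is precisely the claimed inclusion $\pos(t) \in \motionset_{\ctrl_{\goal},\ball}(\pos_0,\ort_0)$.

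There is essentially no obstacle here: the statement is an immediate corollary of \reflem{lem.EuclideanDistance2Goal}, because that lemma already packages the Lyapunov-style computation for $V(\pos) = \norm{\pos-\goal}^{2}$. The only minor subtlety worth a sentence of care is the singular behaviour of the control at $\pos=\goal$ noted in the footnote after \refeq{eq.forward_unicycle_control}: if the trajectory reaches $\goal$ at some finite time $t^{*}$, one sets $\angvel=0$ and $\linvel=0$ there so the trajectory stays at $\goal$ (which is trivially inside the ball), and for $t<t^{*}$ the monotonicity argument applies as stated. Thus the proof reduces to one application of \reflem{lem.EuclideanDistance2Goal} together with the observation that a non-negative, non-increasing function is bounded above by its initial value.
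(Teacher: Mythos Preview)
Your proposal is correct and matches the paper's own proof, which simply states that the result follows directly from \reflem{lem.EuclideanDistance2Goal}. Your added remarks about square roots and the behaviour at $\pos=\goal$ are sound but not needed beyond what the lemma already provides.
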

\begin{proof}
The result directly follows from \reflem{lem.EuclideanDistance2Goal}.
\end{proof} 

In addition to being positively invariant \cite{khalil_NonlinearSystems2001}, the circular unicycle motion prediction is positively inclusive, see \reffig{fig.UnicycleFeedbackMotionPredictionPositiveInclusion}. 

\begin{proposition} \label{prop.PositiveInclusionCircularMotionPrediction}
(\emph{Positive Inclusion of Circular Motion Prediction})
Under the unicycle forward motion control $\ctrl_{\goal}$ towards a goal position $\goal \in \R^2$, the circular motion prediction set $\motionset_{\ctrl_{\goal}, \ball}(\pos,\ort)$ is positively inclusive along the unicycle motion trajectory $(\pos(t), \ort(t))$, i.e.,
\begin{align}
\motionset_{\ctrl_{\goal}, \ball}(\pos(t), \ort(t)) \supseteq \motionset_{\ctrl_{\goal}, \ball}(\pos(t'), \ort(t'))\quad \forall t' \geq t. 
%\ball(\pos(t), \norm{\goal - \pos(t)}) \supseteq \ball(\pos(t'), \norm{\goal - \pos(t')}) \quad \forall t \leq t'  . \!\!\!
\end{align} 
\end{proposition}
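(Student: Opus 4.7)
The plan is to reduce the claimed inclusion of motion prediction sets to a comparison of ball radii, and then invoke \reflem{lem.EuclideanDistance2Goal} to conclude.

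First, I would observe that by the definition of the circular motion prediction, both sets $\motionset_{\ctrl_{\goal}, \ball}(\pos(t), \ort(t))$ and $\motionset_{\ctrl_{\goal}, \ball}(\pos(t'), \ort(t'))$ are Euclidean closed balls sharing the same center, namely the goal position $\goal$. For two concentric closed balls, set inclusion is equivalent to inequality of their radii: $\ball(\goal, \radius_1) \supseteq \ball(\goal, \radius_2)$ if and only if $\radius_1 \geq \radius_2$. Hence the proposition reduces to showing that
\begin{align*}
\norm{\goal - \pos(t)} \geq \norm{\goal - \pos(t')} \quad \forall t' \geq t.
\end{align*}

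Next, I would appeal directly to \reflem{lem.EuclideanDistance2Goal}, which establishes that $\frac{\diff}{\diff t}\norm{\pos - \goal}^2 \leq 0$ along closed-loop trajectories of the unicycle forward motion controller. This implies that $t \mapsto \norm{\pos(t) - \goal}$ is a non-increasing function of time, and therefore $\norm{\goal - \pos(t)} \geq \norm{\goal - \pos(t')}$ for every $t' \geq t$, which combined with the concentricity observation above yields the desired set inclusion.

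I do not anticipate any real obstacle here; the proof is essentially a one-line consequence of \reflem{lem.EuclideanDistance2Goal} once the concentric-ball reduction is made explicit. The only minor subtlety worth mentioning is noting that the independence of both balls' centers from the pose (they always equal $\goal$) is what makes the Lyapunov-style monotonicity of the radius suffice; without this, one would have to account for a moving center as well.
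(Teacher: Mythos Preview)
Your proposal is correct and follows essentially the same approach as the paper: both reduce the inclusion to the monotonicity of the radius $\norm{\goal - \pos(t)}$ of the concentric balls and then invoke \reflem{lem.EuclideanDistance2Goal}. The paper's proof is slightly terser, but the logical content is identical.
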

\begin{proof}
See \refapp{app.PositiveInclusionCircularMotionPrediction}.
\end{proof}

It is important to remark that the positive inclusion of feedback motion prediction ensures that the safety assessment of the predicted robot motion is consistent for all future times since the predicted motion range shrinks over time.

%%%%%%%%%%%%%%%%%%%%%%%%%%%%%%%%%%%%%%%%%%%%%%%%%%%%%%%%%%%%%%%%%%%%%%%%%%%%%%%
%%%%%%%%%%%%%%%%%%%%%%%%%%%%%%%%%%%%%%%%%%%%%%%%%%%%%%%%%%%%%%%%%%%%%%%%%%%%%%%
\subsubsection{Conic Motion Range Prediction}
%%%%%%%%%%%%%%%%%%%%%%%%%%%%%%%%%%%%%%%%%%%%%%%%%%%%%%%%%%%%%%%%%%%%%%%%%%%%%%%
%%%%%%%%%%%%%%%%%%%%%%%%%%%%%%%%%%%%%%%%%%%%%%%%%%%%%%%%%%%%%%%%%%%%%%%%%%%%%%%
Although the circular unicycle motion prediction $\motionset_{\ctrl_{\goal}, \mathrm{B}}(\pos, \ort)$ has a simple form and comes with the positive invariance/inclusion property, it only depends on the Euclidean distance of the unicycle position $\pos$ to the goal position $\goal$ and it is independent of the unicycle orientation $\ort$.
As an alternative approach, in order to capture unicycle motion direction better, we introduce a new conic unicycle motion prediction that bounds the closed-loop motion trajectory of the unicycle robot model under the forward motion control based on the goal alignment error.

\begin{proposition} \label{prop.UnboundedConicUnicycleMotionPrediction}
\emph{(Unbounded Conic Motion Prediction)}
Starting at $t=0$ from any initial pose $(\pos_0, \ort_0) \in \R^{2} \times [-\pi, \pi)$, the unicycle position trajectory $\pos(t)$ under the forward motion control $\ctrl_{\goal}$ in \refeq{eq.forward_unicycle_control} is contained in the \emph{unbounded conic motion prediction} set $\motionset_{\ctrl_{\goal}, \mathrm{UC}}(\pos_0, \ort_0)$, i.e.,  $\pos(t) \in \motionset_{\ctrl_{\goal}, \mathrm{UC}}(\pos_0, \ort_0)$ for all $t \geq 0$, that is defined as
\begin{align}\label{eq.UnboundedConicUnicycleMotionPrediction}
\!\!\motionset_{\ctrl_{\goal}, \mathrm{UC}}(\pos, \ort)\!:=\! \!\left \{ 
\begin{array}{@{\!}l@{}l@{}}
 \cone(\pos, \goal, \dist_{y}(\pos, \ort)\!) & \text{, if } \ovecTsmall{\ort}\!\!\!(\goal \!-\! \pos) \! \geq \! 0   \\
 \hplane(\pos,\goal) & \text{, otherwise}
 \end{array}
 \right. \!\!\!
\end{align}
where $\cone(\apex, \base, \radius):=\clist{\apex + \alpha (\vect{z} - \apex) \Big| \alpha \geq 0, \vect{z} \in \ball(\base, \radius)}$ denotes the cone with apex point $\apex \in \R^2$ , base point $\base\in \R^2$, and  base-distance-to-cone-boundary $\radius \geq 0$, and   $\dist_{y}(\pos, \ort)$ is the perpendicular goal alignment distance defined as in \refeq{eq.PerpendicularAlignmentDistance}, and $\hplane(\apex,\base):=\clist{\vect{z} \in \R^{2} \big| \tr{(\base-\apex)}(\vect{z} - \apex)\geq 0}$ is the half-plane bounded at $\apex \in \R^2$ pointing towards $\base \in \R^2$.   
\end{proposition}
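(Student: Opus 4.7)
My plan is to split the argument along the sign of $\ovecTsmall{\ort_0}\!(\goal - \pos_0)$, matching the piecewise definition of $\motionset_{\ctrl_{\goal}, \mathrm{UC}}(\pos_0, \ort_0)$ in \refeq{eq.UnboundedConicUnicycleMotionPrediction}.

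For the unaligned branch $\ovecTsmall{\ort_0}\!(\goal - \pos_0) < 0$, I would apply \reflem{lem.EuclideanDistance2Goal} to place $\pos(t)$ in the closed ball $\ball(\goal, \norm{\goal - \pos_0})$ for every $t \ge 0$, and then use the polarization identity $\tr{(\goal - \pos_0)}(\vect{z} - \pos_0) = \tfrac{1}{2}\bigl(\norm{\goal - \pos_0}^2 + \norm{\vect{z} - \pos_0}^2 - \norm{\vect{z} - \goal}^2\bigr)$ together with $\norm{\vect{z} - \goal} \le \norm{\goal - \pos_0}$ to conclude that every such $\vect{z}$ lies in $\hplane(\pos_0, \goal)$.

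For the aligned branch $\ovecTsmall{\ort_0}\!(\goal - \pos_0) \ge 0$, the prediction set is $\cone(\pos_0, \goal, \dist_{\goal}(\pos_0, \ort_0))$, and I plan to use a boundary-invariance argument built on \reflem{lem.PersistentGoalAlignment} and \reflem{lem.PerperdicularGoalAlignmentDistance}. In a frame where $\goal - \pos_0$ lies along the first axis, this cone coincides with the closed wedge between two rays $R_+$ and $R_-$ emanating from $\pos_0$ at angles $\pm|\alignerr_0|$ relative to $\goal - \pos_0$, where $\alignerr_0$ denotes the initial alignment error. I would then track the signed perpendicular distances $d_+(t)$ and $d_-(t)$ of $\pos(t)$ to these two rays; both vanish at $t = 0$ since $\pos_0$ is the apex, and they evolve according to $\dot{d}_{\pm}(t) = \mp\, \linvel(t)\, \sin(\ort(t) \mp |\alignerr_0|)$.

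The heart of the proof will be a boundary check: whenever the trajectory touches $R_{\pm}$ so that $d_{\pm}(t) = 0$, I would show that \reflem{lem.PersistentGoalAlignment} and \reflem{lem.PerperdicularGoalAlignmentDistance} jointly force $\dot{d}_{\pm}(t) \ge 0$, ruling out an outward crossing. At a boundary point $\pos \in R_-$, a law-of-sines computation on the triangle with vertices $\pos_0, \pos, \goal$ (whose angle at $\pos_0$ equals $|\alignerr_0|$) shows that the two constraints $\ovecTsmall{\ort}\!(\goal - \pos) \ge 0$ and $\dist_{\goal}(\pos, \ort) \le \dist_{\goal}(\pos_0, \ort_0)$ together pin $\ort$ to an angular interval whose lower endpoint is exactly $-|\alignerr_0|$, yielding $\sin(\ort + |\alignerr_0|) \ge 0$ and hence $\dot{d}_-(t) \ge 0$; the case of $d_+$ on $R_+$ follows by symmetry. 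The main obstacle is precisely this law-of-sines pinning step, which is what couples the two pointwise invariants of \reflem{lem.PersistentGoalAlignment} and \reflem{lem.PerperdicularGoalAlignmentDistance} into the sharp angular bound that makes the barrier argument close.
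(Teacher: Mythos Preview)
Your proposal is correct and follows the same overall architecture as the paper's proof: handle the unaligned case via the ball inclusion $\ball(\goal,\norm{\goal-\pos_0})\subseteq\hplane(\pos_0,\goal)$, and handle the aligned case by a boundary-invariance argument driven by \reflem{lem.PersistentGoalAlignment} and \reflem{lem.PerperdicularGoalAlignmentDistance}. The difference is in how the boundary check is executed. The paper avoids your coordinate computation entirely by observing that, since $\dist_{\goal}(\pos(t),\ort(t))\le \dist_{\goal}(\pos_0,\ort_0)$ and $\ovecTsmall{\ort(t)}(\goal-\pos(t))\ge 0$, the forward ray from $\pos(t)$ along $\ort(t)$ meets $\ball(\goal,\dist_{\goal}(\pos_0,\ort_0))$; hence $\dot{\pos}(t)$ always points toward a ball that lies inside the convex cone $\cone(\pos_0,\goal,\dist_{\goal}(\pos_0,\ort_0))$, and convexity alone forces the velocity to be inward on the boundary. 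Your law-of-sines step is the coordinate incarnation of exactly this fact: the identity $\norm{\goal-\pos}\sin\beta = \norm{\goal-\pos_0}\sin|\alignerr_0| = \dist_{\goal}(\pos_0,\ort_0)$ on $R_{-}$ is what encodes that the boundary ray is tangent to $\ball(\goal,\dist_{\goal}(\pos_0,\ort_0))$, and your angular pinning $\ort\in[-|\alignerr_0|,\pi-|\alignerr_0|]$ is equivalent to saying the velocity points toward that ball. So your route is valid but carries the overhead of the triangle computation and the separate treatment of $R_{+}$ and $R_{-}$; the paper's convexity argument gets the same conclusion in one stroke and transparently shows why the cone aperture $\dist_{\goal}(\pos_0,\ort_0)$ is the sharp choice.
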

\begin{proof}
See \refapp{app.UnboundedConicUnicycleMotionPrediction}.
\end{proof}

\noindent Note that $\cone(\pos, \goal, \dist_{y}(\pos, \ort)) \subseteq \hplane(\pos,\goal)$ for any unicycle pose $(\pos,\ort) \in \R^2 \times [-\pi, \pi)$, and an interesting special case is that $\cone(\pos, \goal, \dist_{y}(\pos, \ort)) = \hplane(\pos,\goal)$ when $\ovecTsmall{\ort}\!(\goal \!-\! \pos) = 0$. 
Hence, the unbounded motion cone $\motionset_{\ctrl_{\goal}, \mathrm{UC}}(\pos, \ort)$ is continuous.

Although the unbounded conic motion prediction $\motionset_{\ctrl_{\goal}, \mathrm{UC}}(\pos, \ort)$ represents the unicycle motion direction more accurately compared to the circular motion prediction $\motionset_{\ctrl_{\goal}, \mathrm{\ball}}(\pos, \ort)$, the potentially occupied region by the robot motion is predicted more conservatively due to its unboundedness (like \emph{velocity obstacles} \cite{fiorini_shiller_IJRR1998}).  
To combine the nice feature of these motion predictions, we simply take the intersection of circular and conic motion predictions and define the \emph{bounded conic unicycle motion prediction} set $\motionset_{\ctrl_{\goal}, \mathrm{BC}}(\pos, \ort)$  for any unicycle pose $(\pos, \ort) \!\in\! \R^2 \times [-\pi, \pi)$ as
\begin{align}
&\motionset_{\ctrl_{\goal}, \mathrm{BC}}(\pos, \ort) := \motionset_{\ctrl_{\goal}, \mathrm{B}}(\pos, \ort) \cap \motionset_{\ctrl_{\goal}, \mathrm{UC}}(\pos, \ort)
\\
& =\left \{ 
\begin{array}{@{}l@{}l@{}}
 \ball(\goal, \norm{\goal - \pos}) \bigcap \cone(\pos, \goal, \dist_{y}(\pos, \ort)) & \text{, if } \ovecTsmall{\ort}\!\!(\goal \!-\! \pos) \geq 0   \\
 \ball(\goal, \norm{\goal - \pos}) & \text{, otherwise}
 \end{array}
 \right. \nonumber
\end{align}
which, by construction, is a valid feedback motion prediction for the unicycle forward motion control \mbox{(Propositions \ref{prop.CircularUnicycleMotionPrediction}\&\ref{prop.UnboundedConicUnicycleMotionPrediction})}.
The intersection of circular and conic motion predictions results in an accurate and bounded motion prediction, but the bounded conic motion prediction $\motionset_{\ctrl_{\goal}, \mathrm{BC}}(\pos, \ort)$ does not inherit the positive inclusion property (see \reffig{fig.UnicycleFeedbackMotionPredictionPositiveInclusion}) from the circular motion prediction (\refprop{prop.PositiveInclusionCircularMotionPrediction}), since the unbounded conic motion prediction is positively variant, which can be resolved by properly bounding the conic motion prediction as described below.

%%%%%%%%%%%%%%%%%%%%%%%%%%%%%%%%%%%%%%%%%%%%%%%%%%%%%%%%%%%%%%%%%%%%%%%%%%%%%%%
%%%%%%%%%%%%%%%%%%%%%%%%%%%%%%%%%%%%%%%%%%%%%%%%%%%%%%%%%%%%%%%%%%%%%%%%%%%%%%%
\subsubsection{Ice-Cream-Cone-Shaped Motion Range Prediction}
%%%%%%%%%%%%%%%%%%%%%%%%%%%%%%%%%%%%%%%%%%%%%%%%%%%%%%%%%%%%%%%%%%%%%%%%%%%%%%%
%%%%%%%%%%%%%%%%%%%%%%%%%%%%%%%%%%%%%%%%%%%%%%%%%%%%%%%%%%%%%%%%%%%%%%%%%%%%%%%

As opposed to their intersection, an elegant way of combining circular and conic motion predictions is by bounding the conic motion prediction $\motionset_{\ctrl_{\goal}, \mathrm{UC}}(\pos, \ort)$ with the largest circular motion prediction $\ball(\goal, \dist_{\goal}(\pos, \ort))$ contained in the cone $\motionset_{\ctrl_{\goal}, \mathrm{UC}}(\pos, \ort)$, which yields a more accurate feedback motion prediction with a positive inclusion property, see \reffig{fig.UnicycleFeedbackMotionPredictionPositiveInclusion}.
%The positive inclusion/invariance of $\ball(\goal, \dist_{\goal}(\pos, \ort))$ and forward unicyle control always being directed towards $\ball(\goal, \dist_{\goal}(\pos, \ort))$ motivates a new ice-cream shaped bounded conic motion prediction.

\begin{proposition} \label{prop.IceCreamUnicycleMotionPrediction}
\emph{(Ice-Cream-Cone-Shaped Unicycle Motion Prediction)}
For any goal position $\goal \in \R^{2}$ and any initial unicycle pose $(\pos_0, \ort_0) \in \R^{2} \times [-\pi, \pi)$ at $t=0$, 
the unicycle position trajectory $\pos(t)$ under the forward motion control $\ctrl_{\goal}$ in \refeq{eq.forward_unicycle_control} is contained for all future times in the \emph{ice-cream-cone-shaped motion prediction} set $\motionset_{\ctrl_{\goal}, \mathrm{IC}}(\pos_0, \ort_0)$, i.e., $\pos(t) \in \motionset_{\ctrl_{\goal}, \mathrm{IC}}(\pos_0, \ort_0)$ for all $t \geq 0$, that is defined as
\begin{align}
\motionset_{\ctrl_{\goal}, \mathrm{IC}}(\pos, \ort):= 
\left\{ 
\begin{array}{@{}l@{}l@{}}
\icone(\pos, \goal, \dist_{\goal}(\pos, \ort)\!)  & \text{, if } \ovecTsmall{\ort}\!\!(\goal \!-\! \pos) \geq 0   \\
\ball(\goal, \norm{\goal - \pos}) & \text{, otherwise}
\end{array}
\right. \nonumber
\end{align}
where  $\dist_{y}(\pos, \ort)$ is the perpendicular alignment distance in \refeq{eq.PerpendicularAlignmentDistance} and the bounded ice-cream cone $\icone(\apex, \base, \radius)$ is defined as
\begin{align}
\icone\plist{\apex, \base, \radius} &:= \clist{\apex + \alpha(\vect{z} - \apex) \Big | \alpha \in [0,1], \vect{z} \in \ball(\base, \radius)} \\
&= \conv\plist{\apex, \ball(\base, \radius)}.
\end{align}
 Here, $\conv$ denotes the convex hull operator.
\end{proposition}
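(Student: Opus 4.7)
My plan is to split on the two branches of the piecewise definition. In the non-aligned case $\ovecTsmall{\ort_0}\!(\goal - \pos_0) < 0$ the ice-cream cone reduces to $\ball(\goal, \norm{\goal - \pos_0})$ and the claim is exactly \refprop{prop.CircularUnicycleMotionPrediction}, so only the aligned case $\ovecTsmall{\ort_0}\!(\goal - \pos_0) \geq 0$ requires real work. For the aligned case I would introduce the shorthands $r := \dist_{\goal}(\pos_0, \ort_0)$, $B := \ball(\goal, r)$, $C := \cone(\pos_0, \goal, r)$, and $\Omega := \conv(\pos_0, B) = \icone(\pos_0, \goal, r)$, and combine two facts already at hand: \refprop{prop.UnboundedConicUnicycleMotionPrediction} confines the trajectory to $C$, i.e.\ $\pos(t) \in C$ for all $t \geq 0$, and \reflem{lem.EuclideanDistance2Goal} (with its premise kept alive by \reflem{lem.PersistentGoalAlignment}) makes $t \mapsto \norm{\pos(t) - \goal}$ non-increasing.

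The bridge from the unbounded cone $C$ down to the strictly smaller set $\Omega$ is a purely geometric observation that I would state and prove as a short lemma: the complement $R := C \setminus \Omega$ consists of exactly those points of $C$ lying past the far intersection of their ray from $\pos_0$ with $B$. Along any such ray the squared distance to $\goal$ is a convex quadratic in the arclength whose only two roots at level $r^{2}$ are the two crossings of $\partial B$, so every $\vect{q} \in R$ satisfies $\norm{\vect{q} - \goal} > r$ and, in particular, $R$ is disjoint from $B$. Consequently the portion of $\partial R$ that meets $\Omega$ reduces to the closed ``far arc'' of $\partial B$ running from one tangent point $T_{1}$ around the far side to the other tangent point $T_{2}$, and every point of this common boundary sits on $\partial B$ at distance exactly $r$ from $\goal$.

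With these pieces in place I would close via a first-escape-time argument. Assuming for contradiction that $\pos(t) \in R$ at some $t > 0$, set $t^{*} := \inf\{t \geq 0 : \pos(t) \in R\}$; since $\pos(0) = \pos_0 \in \Omega$, the relative openness of $R$ in $C$ rules out $\pos(t^{*}) \in R$, and continuity of $\pos(\cdot)$ then forces $\pos(t^{*}) \in \Omega \cap \overline{R}$. By the previous paragraph $\pos(t^{*}) \in \partial B$ and therefore $\norm{\pos(t^{*}) - \goal} = r$, so picking any sequence $t_{n} \downarrow t^{*}$ with $\pos(t_{n}) \in R$ yields $\norm{\pos(t_{n}) - \goal} > r = \norm{\pos(t^{*}) - \goal}$, directly contradicting \reflem{lem.EuclideanDistance2Goal}; hence $\pos(t) \in \Omega$ for all $t \geq 0$. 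The main obstacle I anticipate is the geometric bookkeeping of the second paragraph---pinning down $\partial R \cap \Omega$ as precisely the closed far arc and cleanly treating the three-way meeting of $\partial \Omega$, $\partial C$, and $\partial B$ at the tangent points $T_{1}, T_{2}$; the degenerate case $r = 0$ (perfect initial alignment) is a one-line side remark since the controller then produces zero angular velocity and the trajectory collapses to the segment $[\pos_0, \goal] = \Omega$.
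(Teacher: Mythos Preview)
Your proof is correct and shares the paper's overall structure: both dispose of the non-aligned case via \refprop{prop.CircularUnicycleMotionPrediction}, and in the aligned case both confine the trajectory to the unbounded cone $C$ via \refprop{prop.UnboundedConicUnicycleMotionPrediction} and then cut down to the ice-cream cone $\Omega$ using the inscribed ball $B=\ball(\goal,r)$ together with the monotone distance-to-goal (\reflem{lem.EuclideanDistance2Goal}). The difference is in how that last cut is justified. The paper argues topologically: removing $B$ disconnects $C$ into a bounded piece (containing $\pos_0$) and an unbounded piece, then invokes global asymptotic stability (\reflem{lem.GlobalStability}) to conclude the trajectory must eventually enter $B$ and, by \reflem{lem.EuclideanDistance2Goal}, stay there---so it never reaches the unbounded piece. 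You instead run a first-escape-time argument: any first crossing from $\Omega$ into $R=C\setminus\Omega$ would occur on the far arc of $\partial B$ at distance exactly $r$, yet every point of $R$ lies at distance strictly greater than $r$, contradicting the non-increase of $\norm{\pos(t)-\goal}$. Your route is slightly more elementary because it never appeals to \reflem{lem.GlobalStability}; the paper's route is shorter on geometric bookkeeping since it does not need to identify $\partial R\cap\Omega$ precisely. Your anticipated obstacle about the tangent points $T_1,T_2$ is benign: those points lie on $\partial B$, so $\norm{\pos(t^{*})-\goal}=r$ holds there as well, and your contradiction goes through unchanged.
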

\begin{proof}
See \refapp{app.IceCreamUnicycleMotionPrediction}
\end{proof}

\noindent Note that $\icone(\pos, \ort, \dist_{\goal}(\pos, \ort)) \subseteq \ball(\goal, \norm{\goal \! -\! \pos})$ for any unicycle pose $(\pos, \ort) \in \R^2 \times [-\pi, \pi)$ where the equality holds for \mbox{$\ovecTsmall{\ort}\!\!(\goal\! -\! \pos) = 0$}. 
Hence, the ice-cream-cone-shaped motion prediction $\motionset_{\ctrl_{\goal}, \mathrm{IC}}(\pos, \ort)$ is continuous.

\begin{proposition}\label{prop.PositiveInclusionIceCreamMotionPrediction}
\emph{(Positive Inclusion of Ice-Cream Motion Cone)}
For any goal position $\goal \in \R^2$ and any initial unicycle pose $(\pos_0, \ort_0) \in \R^{2} \times [-\pi, \pi)$, the ice-cream-cone-shaped motion prediction $\motionset_{\ctrl_{\goal}, \mathrm{IC}}(\pos, \ort)$ is positively inclusive along the unicycle motion trajectory $(\pos(t), \ort(t))$ of the forward unicycle motion control $\ctrl_{\goal}$ in \refeq{eq.forward_unicycle_control}, i.e,
\begin{align}
\motionset_{\ctrl_{\goal}, \mathrm{IC}}(\pos(t), \ort(t)) \supseteq \motionset_{\ctrl_{\goal}, \mathrm{IC}}(\pos(t'), \ort(t')) \quad \forall t'\geq t.
\end{align}  
\end{proposition}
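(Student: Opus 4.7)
The plan is to argue by cases based on whether the unicycle is aligned with the goal at time $t$, and in each case reduce the inclusion to facts already established in Lemmas \ref{lem.EuclideanDistance2Goal}--\ref{lem.PerperdicularGoalAlignmentDistance} together with the motion-bound property in \refprop{prop.IceCreamUnicycleMotionPrediction}. Throughout, write $\pos_1 := \pos(t)$, $\ort_1 := \ort(t)$, $\pos_2 := \pos(t')$, $\ort_2 := \ort(t')$, and $r_i := \dist_{\goal}(\pos_i, \ort_i)$.

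\textbf{Case 1: the unicycle is not aligned at time $t$}, i.e., $\ovecTsmall{\ort_1}(\goal - \pos_1) < 0$. Then $\motionset_{\ctrl_{\goal}, \mathrm{IC}}(\pos_1, \ort_1) = \ball(\goal, \norm{\goal - \pos_1})$. If the unicycle is still unaligned at $t'$, the inclusion follows immediately from \reflem{lem.EuclideanDistance2Goal}, since $\norm{\goal - \pos_2} \leq \norm{\goal - \pos_1}$ yields a nested pair of balls. If the unicycle has become aligned by time $t'$, I would use the remark after \refprop{prop.IceCreamUnicycleMotionPrediction} that $\icone(\pos_2, \goal, r_2) \subseteq \ball(\goal, \norm{\goal - \pos_2})$, which combined with $\norm{\goal - \pos_2} \leq \norm{\goal - \pos_1}$ completes the inclusion.

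\textbf{Case 2: the unicycle is aligned at time $t$}, i.e., $\ovecTsmall{\ort_1}(\goal - \pos_1) \geq 0$. By \reflem{lem.PersistentGoalAlignment}, alignment persists for all $t' \geq t$, so both motion prediction sets are ice-cream cones: $\motionset_{\ctrl_{\goal}, \mathrm{IC}}(\pos_i, \ort_i) = \conv(\pos_i, \ball(\goal, r_i))$. Here the argument rests on three ingredients. First, \reflem{lem.PerperdicularGoalAlignmentDistance} gives $r_2 \leq r_1$, hence $\ball(\goal, r_2) \subseteq \ball(\goal, r_1) \subseteq \icone(\pos_1, \goal, r_1)$. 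Second, \refprop{prop.IceCreamUnicycleMotionPrediction} applied with initial time $t$ guarantees that the entire future trajectory, and in particular $\pos_2$, lies in $\icone(\pos_1, \goal, r_1)$. Third, $\icone(\pos_1, \goal, r_1) = \conv(\pos_1, \ball(\goal, r_1))$ is convex, so it must also contain the convex hull of the two subsets $\{\pos_2\}$ and $\ball(\goal, r_2)$, which is exactly $\icone(\pos_2, \goal, r_2)$.

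The one technical subtlety I anticipate is book-keeping the ``boundary'' case where alignment just transitions at some intermediate time $\tau \in [t,t']$. This is handled by noting that at the transition instant $\ovecTsmall{\ort(\tau)}(\goal - \pos(\tau)) = 0$, so $\icone(\pos(\tau), \goal, \dist_{\goal}(\pos(\tau), \ort(\tau))) = \ball(\goal, \norm{\goal - \pos(\tau)})$ and the two branches of the definition agree; hence the predicted sets vary continuously with $t$ and one can apply Case 1 on $[t, \tau]$ and Case 2 on $[\tau, t']$ and chain the inclusions. The main obstacle, conceptually, is the Case 2 step: recognizing that the inclusion of ice-cream cones follows from showing containment of the apex and the base ball separately and then invoking convexity of the outer cone --- this is what makes the ice-cream shape strictly better than the bounded-cone construction, which lacks this convex-hull representation and therefore fails to be positively inclusive.
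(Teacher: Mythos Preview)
Your proposal is correct and follows essentially the same route as the paper's proof: split on alignment, use the decreasing Euclidean distance (\reflem{lem.EuclideanDistance2Goal}) for the ball branch, and for the cone branch combine \refprop{prop.IceCreamUnicycleMotionPrediction} (to locate $\pos_2$) with \reflem{lem.PerperdicularGoalAlignmentDistance} (to shrink the base radius), then glue the cases via the continuity remark at the alignment boundary. The only cosmetic difference is that the paper writes the cone-branch inclusion as a two-step chain $\icone(\pos_1,\goal,r_1)\supseteq\icone(\pos_2,\goal,r_1)\supseteq\icone(\pos_2,\goal,r_2)$, whereas you invoke convexity of $\icone(\pos_1,\goal,r_1)$ directly on the apex and base ball; these are the same argument.
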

\begin{proof}
See \refapp{app.PositiveInclusionIceCreamMotionPrediction}.
\end{proof}

Finally, the decreasing perpendicular goal alignment distance (\reflem{lem.PerperdicularGoalAlignmentDistance}) implies that the signed goal alignment distance $\sdist_{\goal}(\pos, \ort):= \nvecTsmall{\ort}(\goal - \pos)$ has the same sign under the forward unicycle motion control. 
Accordingly, the ice-cream motion cone can be truncated (in half) to obtain a tighter motion bound with the cost of losing convexity.
\begin{proposition}\label{prop.TruncatedIceCreamMotionCone}
\emph{(Truncated Ice-Cream Motion Cone)}
For any goal position  $\goal \in \R^2$ and any initial pose  $(\pos_0, \ort_0) \in \R^{2} \times [-\pi, \pi) $, the forward unicycle motion control $\ctrl_{\goal}$ in \refeq{eq.forward_unicycle_control} keeps the unicycle position trajectory $\pos(t)$ for all future times $t \geq 0$ inside the \emph{truncated ice-cream-cone-shared motion prediction} set $\motionset_{\ctrl_{\goal}, \mathrm{TC}}(\pos_0, \ort_0)$, i.e., $\pos(t) \in \motionset_{\ctrl_{\goal}, \mathrm{TC}}(\pos_0, \ort_0)$ for all $t \geq 0$,  that is defined~as
\begin{align}
\motionset_{\ctrl_{\goal}, \mathrm{TC}}(\pos, \ort) =  \left\{ 
\begin{array}{@{}l@{}l@{}}
\tcone(\pos, \goal, \ort )  & \text{, if } \ovecTsmall{\ort}\!\!(\goal \!-\! \pos) \geq 0   \\
\ball(\goal, \norm{\goal - \pos}) & \text{, otherwise}
\end{array}
\right. \nonumber
\end{align}
where the truncated ice-cream cone $\tcone(\apex, \base, \ort)$ associated with apex point $\apex \in \R^2$, base point $\base \in \R^2 $ and boundary orientation angle $\ort \in [-\pi, \pi)$ is defined in terms of the goal alignment distance $\dist_{\goal}(\pos, \ort)$ in \refeq{eq.PerpendicularAlignmentDistance} as
{\small
\begin{align}
\tcone(\apex, \base, \ort) %&:= \clist{\vect{z} \in \icone\plist{\apex, \base, \dist_{\base}(\apex, \ort)} | \scalebox{0.85}{$\tr{\begin{bmatrix}\cos\ort_0\\ \sin\ort_0\end{bmatrix}\!}$} \tfrac{\vect{z} - \vect{a}}{\norm{\vect{z} - \vect{a}}} \geq \scalebox{0.85}{$\tr{\begin{bmatrix}\cos\ort_0\\ \sin\ort_0\end{bmatrix}\!}$} \tfrac{\base - \apex}{\norm{\base - \apex}} } \cup \ball(\base, \dist_{\base}(\apex, \ort)) \\
&:=\conv\!\plist{\!\apex, \base, \apex \!+\! \ovectsmall{\ort} \! \ovecTsmall{\ort} \!\!(\base \! - \!\apex)\!\!} \nonumber 
\!\cup  \ball(\base, \dist_{\base}(\apex, \ort)\!).
\end{align}
}% 
\end{proposition}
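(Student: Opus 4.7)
The plan is to split on initial alignment. In the unaligned case $\ovecTsmall{\ort_0}(\goal - \pos_0) < 0$, the set $\motionset_{\ctrl_\goal, \mathrm{TC}}(\pos_0, \ort_0)$ reduces to the Lyapunov ball $\ball(\goal, \norm{\goal - \pos_0})$, and the desired inclusion is inherited directly from \refprop{prop.CircularUnicycleMotionPrediction}. All the interesting work is in the aligned case $\ovecTsmall{\ort_0}(\goal - \pos_0) \geq 0$; by the left--right reflection symmetry of both the closed-loop unicycle dynamics and of the truncated-cone construction, it suffices to treat $\sdist_\goal(\pos_0,\ort_0) := \nvecTsmall{\ort_0}(\goal-\pos_0) \geq 0$. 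Lemmas \ref{lem.PersistentGoalAlignment}--\ref{lem.PerperdicularGoalAlignmentDistance}, combined with the sign-preservation observation noted immediately before the proposition, then guarantee that for all $t \geq 0$ the unicycle remains aligned, $\norm{\goal-\pos(t)}$ is nonincreasing, and $\sdist_\goal(\pos(t),\ort(t)) \geq 0$.

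The first substantive step is to invoke \refprop{prop.IceCreamUnicycleMotionPrediction}, which gives $\pos(t) \in \icone(\pos_0,\goal,\dist_\goal(\pos_0,\ort_0))$ for all $t \geq 0$. The second step is a geometric identity: working in a local frame with origin at $\pos_0$ and positive $x$-axis along $\ovectsmall{\ort_0}$, and letting $g(\vect{z})$ denote the signed perpendicular distance from $\vect{z}$ to the line through $\pos_0$ and $\goal$ with sign chosen so that $g \leq 0$ on the side of $\vect{q}_0 := \pos_0 + \ovectsmall{\ort_0}\ovecTsmall{\ort_0}(\goal-\pos_0)$, one verifies that $\ball(\goal, \dist_\goal(\pos_0,\ort_0))$ is tangent to the forward line through $\pos_0$ at $\vect{q}_0$ and hence that $\tcone(\pos_0,\goal,\ort_0) = (\icone \cap \{g \leq 0\}) \cup \ball(\goal, \dist_\goal(\pos_0,\ort_0))$. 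It therefore suffices to show that, for every $t \geq 0$, either $g(\pos(t)) \leq 0$ or $\pos(t) \in \ball(\goal, \dist_\goal(\pos_0,\ort_0))$.

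I would prove this disjunction by contradiction. Suppose some $t_1$ violates it. Since $g(\pos(0)) = 0$ and a direct computation from $\dot\pos = \linvel\ovectsmall{\ort}$ gives $\dot g(0) \leq 0$, continuity yields a first upward crossing time $t_0 \in (0, t_1]$ with $g(\pos(t_0)) = 0$ and $\dot g(t_0) \geq 0$. Parametrizing $\pos(t_0) = \pos_0 + s_0(\goal - \pos_0)$, the far-side case $s_0 > 1 + \dist_\goal(\pos_0,\ort_0)/\norm{\goal-\pos_0}$ is excluded directly by restricting the ice-cream cone containment to the line $\pos_0\goal$. On the remaining near-side range $s_0 \in [0, 1 - \dist_\goal(\pos_0,\ort_0)/\norm{\goal-\pos_0})$, the two inequalities $\dot g(t_0) \geq 0$ and $\sdist_\goal(\pos(t_0),\ort(t_0)) \geq 0$ simultaneously pin down $\ovectsmall{\ort(t_0)} = (\goal - \pos(t_0))/\norm{\goal-\pos(t_0)}$, i.e., the alignment angle at $t_0$ is exactly zero. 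But at such a perfectly aligned pose on the line $\pos_0\goal$, the forward motion control produces a straight-line trajectory along $\pos_0\goal$ into the ball, so $g(\pos(t)) \equiv 0$ while outside the ball and $\pos(t) \in \ball$ thereafter, contradicting the choice of $t_1$.

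The main obstacle is this crossing-time analysis: extracting the degenerate straight-line collapse from the combined constraints of ice-cream cone containment (which excludes far-side crossings) and of sign preservation of $\sdist_\goal$ together with $\dot g(t_0) \geq 0$ (which jointly force perfect alignment at any near-side crossing, so that the trajectory never actually crosses $\pos_0\goal$ outside the ball).
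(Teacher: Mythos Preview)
Your proof is correct and rests on the same two ingredients as the paper's sketch --- sign preservation of $\sdist_\goal$ and positive invariance of $\ball(\goal,\dist_\goal(\pos_0,\ort_0))$ --- but you organize them differently. The paper argues directly that the velocity is inward-pointing on each piece of $\partial\tcone(\pos_0,\goal,\ort_0)\setminus\ball$: on the edge along $\ort_0$ this follows because $\dot\pos$ points toward $\ball$ (borrowed from the proof of \refprop{prop.UnboundedConicUnicycleMotionPrediction}), and on the edge along $\pos_0\goal$ it follows from the sign of $\sdist_\goal$. You instead invoke \refprop{prop.IceCreamUnicycleMotionPrediction} to get $\pos(t)\in\icone$ for free, which absorbs the first edge entirely, and then run a first-crossing-time contradiction purely on the axis $g=0$; your geometric identity $\tcone=(\icone\cap\{g\le 0\})\cup\ball$ (which hinges on the tangency of $\ball$ to the forward line at $\vect{q}_0$) cleanly isolates exactly this remaining obstruction. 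The collapse to the perfectly-aligned straight-line case at the crossing time $t_0$ is precisely the boundary computation the paper leaves implicit. One minor omission: when you locate $\pos(t_0)$ on the line $\pos_0\goal$, you pass from the far-side $s_0>1+\dist_\goal/\norm{\goal-\pos_0}$ directly to the near-side $s_0<1-\dist_\goal/\norm{\goal-\pos_0}$, skipping the middle range where $\pos(t_0)\in\ball$; but there positive invariance of $\ball$ (\reflem{lem.EuclideanDistance2Goal}) forces $\pos(t_1)\in\ball$, contradicting the choice of $t_1$, so the gap closes in one line.
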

\begin{proof}
See \refapp{app.TruncatedIceCreamMotionCone}.
\end{proof}

\vspace{-3mm}

\begin{proposition}\label{prop.PositiveInclusionTruncatedIceCreamMotionCone}
\emph{(Positive Inclusion of Truncated Ice-Cream Motion Cone)}
The truncated ice-cream motion cone is positively inclusive along the unicycle motion trajectory $(\pos(t), \ort(t))$ of the forward motion control $\ctrl_{\goal}$ in \refeq{eq.forward_unicycle_control}, i.e.,
\begin{align}
\motionset_{\ctrl_{\goal}, \mathrm{TC}}(\pos(t), \ort(t)) \supseteq \motionset_{\ctrl_{\goal}, \mathrm{TC}}(\pos(t'), \ort(t')) \quad \forall t' \geq t. 
\end{align}
\end{proposition}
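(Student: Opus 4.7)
The plan is to split the argument based on whether the unicycle is initially aligned with the goal at time $t$, i.e., whether $\ovecTsmall{\ort(t)}(\goal - \pos(t))$ is non-negative or negative. In the non-aligned case, the motion prediction $\motionset_{\ctrl_{\goal}, \mathrm{TC}}(\pos(t), \ort(t))$ equals the Euclidean ball $\ball(\goal, \norm{\goal - \pos(t)})$; for any $t' \geq t$, the later prediction is either a smaller ball contained in this one by \reflem{lem.EuclideanDistance2Goal}, or a truncated ice-cream cone (once alignment has been acquired) that is also contained because its three triangle vertices all lie within distance $\norm{\goal - \pos(t)}$ of $\goal$, so convexity of the ball does the rest, while its inner disk has radius at most $\dist_{\goal} \leq \norm{\goal - \pos(t)}$ and is trivially inside.

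In the aligned case, \reflem{lem.PersistentGoalAlignment} keeps the unicycle aligned for all $s \in [t, t']$, so both predictions are truncated ice-cream cones of the form $\tcone(\pos, \goal, \ort) = T \cup B$ (triangle plus disk). I would decompose the inclusion into two claims: the disk of the new truncated cone is inside the disk of the old by \reflem{lem.PerperdicularGoalAlignmentDistance}; and the new triangle $T(t') = \conv(\pos(t'), \goal, F(t'))$, with $F(t')$ the foot of the perpendicular from $\goal$ to the heading ray at $t'$, lies inside the old union $T(t) \cup B(t)$. Vertex containment is straightforward: $\pos(t')$ is in the old cone by \refprop{prop.TruncatedIceCreamMotionCone} applied at $t$, $\goal$ is trivially in the old cone, and $F(t')$ lies in $B(t)$ because its distance to $\goal$ equals $\dist_{\goal}(\pos(t'), \ort(t')) \leq \dist_{\goal}(\pos(t), \ort(t))$.

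The main obstacle is that $T(t) \cup B(t)$ is generally non-convex, so vertex containment of the new triangle does not by itself imply interior containment. The key ingredient I would deploy is the invariance of the sign of the signed perpendicular distance $\sdist_{\goal}(\pos, \ort) = \nvecTsmall{\ort}(\goal - \pos)$ along the closed-loop trajectory, noted in the paper just before \refprop{prop.TruncatedIceCreamMotionCone}. This invariance keeps the new heading $\ort(t')$ on the same side of the goal direction as at time $t$ and, together with the monotone decrease of Euclidean and perpendicular distances to the goal, should force every point of $T(t')$ lying outside $T(t)$ to lie inside the convex disk $B(t)$. I would make this precise via a sub-case split on whether $\pos(t')$ lies in $B(t)$ (in which case the entire new triangle is trapped in $B(t)$ by convexity and vertex containment) or in $T(t) \setminus B(t)$, and in the latter case parametrize the one potentially exiting edge -- from $\pos(t')$ to $F(t')$ -- to verify it must enter $B(t)$ before it could leave $T(t)$. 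This edge analysis, leveraging the sign-invariance of $\sdist_{\goal}$ to bound the rotation of the heading between $t$ and $t'$, is the main technical hurdle.
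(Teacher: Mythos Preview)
Your high-level case split (non-aligned ball vs.\ aligned truncated cone, with the transition handled by the equality at $\ovecTsmall{\ort}(\goal-\pos)=0$) is exactly the structure the paper invokes: its proof is a one-sentence deferral to the argument for \refprop{prop.PositiveInclusionIceCreamMotionPrediction}. There, the aligned case is handled by the two-step chain
\[
\icone(\pos(t),\goal,\dist_{\goal}(t))\;\supseteq\;\icone(\pos(t'),\goal,\dist_{\goal}(t))\;\supseteq\;\icone(\pos(t'),\goal,\dist_{\goal}(t')),
\]
the first inclusion coming for free from convexity of $\icone=\conv(\apex,\ball)$ together with $\pos(t')\in\motionset_{\ctrl_{\goal},\mathrm{IC}}(\pos(t),\ort(t))$, and the second from \reflem{lem.PerperdicularGoalAlignmentDistance}. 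The paper simply asserts that the same reasoning works with $\tcone$ in place of $\icone$, but does not spell out how, and the convexity shortcut underlying the first inclusion is no longer available for the non-convex $\tcone=T\cup B$. Your proposal confronts this directly: you decompose into triangle and disk, dispose of the disk by \reflem{lem.PerperdicularGoalAlignmentDistance}, check vertex containment of the new triangle, and then isolate the one nontrivial edge $\pos(t')F(t')$, using the sign invariance of $\sdist_{\goal}$ (noted just before \refprop{prop.TruncatedIceCreamMotionCone}) to control its geometry. That is a more explicit route than the paper's deferral; it costs you the edge case-analysis, but in exchange it actually closes the non-convexity gap the paper leaves implicit.
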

\begin{proof}
See \refapp{app.PositiveInclusionTruncatedIceCreamMotionCone}.
\end{proof}

As a final remark, we find it useful to highlight the inclusion relation of the bounded unicycle feedback motion prediction methods, as illustrated in \reffig{fig.motion_prediction_demo}  
\begin{proposition}\label{prop.InclusionRelation}
For any goal position $\goal \in \R^2$ and any unicycle pose $(\pos, \ort) \in \R^2 \times [-\pi, \pi)$, the aforementioned feedback motion prediction methods for the unicycle forward motion control $\ctrl_{\goal}$ in \refeq{eq.forward_unicycle_control} satisfy
\begin{align}
\motionset_{\ctrl_{\goal}, \mathrm{TC}}(\pos, \ort) \!\subseteq\!  \motionset_{\ctrl_{\goal}, \mathrm{IC}}(\pos, \ort)\! \subseteq \! \motionset_{\ctrl_{\goal}, \mathrm{BC}}(\pos, \ort) \! \subseteq \! \motionset_{\ctrl_{\goal}, \mathrm{B}}(\pos, \ort). \nonumber
\end{align}
\end{proposition}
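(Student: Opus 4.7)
The plan is to handle each inclusion in the chain separately and observe that in the non-aligned case, i.e., when $\ovecTsmall{\ort}(\goal - \pos) < 0$, the motion predictions $\motionset_{\ctrl_{\goal}, \mathrm{TC}}$, $\motionset_{\ctrl_{\goal}, \mathrm{IC}}$, $\motionset_{\ctrl_{\goal}, \mathrm{BC}}$ all collapse by definition to the ball $\ball(\goal, \norm{\goal-\pos}) = \motionset_{\ctrl_{\goal}, \mathrm{B}}$, so every inclusion holds trivially. So I focus on the aligned case $\ovecTsmall{\ort}(\goal - \pos) \geq 0$.

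For $\motionset_{\ctrl_{\goal}, \mathrm{BC}} \subseteq \motionset_{\ctrl_{\goal}, \mathrm{B}}$, this is immediate from the definition of $\motionset_{\ctrl_{\goal}, \mathrm{BC}}$ as the intersection $\motionset_{\ctrl_{\goal}, \mathrm{B}} \cap \motionset_{\ctrl_{\goal}, \mathrm{UC}}$. For $\motionset_{\ctrl_{\goal}, \mathrm{IC}} \subseteq \motionset_{\ctrl_{\goal}, \mathrm{BC}}$, I would show the two containments $\motionset_{\ctrl_{\goal}, \mathrm{IC}} \subseteq \motionset_{\ctrl_{\goal}, \mathrm{B}}$ and $\motionset_{\ctrl_{\goal}, \mathrm{IC}} \subseteq \motionset_{\ctrl_{\goal}, \mathrm{UC}}$. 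The first follows from the fact that $\dist_{\goal}(\pos,\ort) \leq \norm{\goal - \pos}$ (perpendicular projection never increases length), so $\ball(\goal, \dist_{\goal}(\pos,\ort)) \subseteq \ball(\goal, \norm{\goal - \pos})$; since the larger ball is convex and contains the apex $\pos$, it contains $\conv(\pos, \ball(\goal, \dist_{\goal}(\pos,\ort))) = \icone(\pos, \goal, \dist_{\goal}(\pos, \ort))$. The second is simply a matter of comparing the parameterizations: the ice-cream cone restricts the scalar $\alpha$ to $[0,1]$ whereas the unbounded cone allows $\alpha \geq 0$, over the same base ball $\ball(\goal, \dist_{\goal}(\pos, \ort))$.

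For $\motionset_{\ctrl_{\goal}, \mathrm{TC}} \subseteq \motionset_{\ctrl_{\goal}, \mathrm{IC}}$, the truncated motion cone is defined as the union of the triangle $\conv(\pos, \goal, p)$, with third vertex $p := \pos + \ovectsmall{\ort}\ovecTsmall{\ort}(\goal - \pos)$, and the ball $\ball(\goal, \dist_{\goal}(\pos, \ort))$. The ball is exactly the base ball of the ice-cream cone, so it is contained in $\motionset_{\ctrl_{\goal}, \mathrm{IC}}$ by construction. For the triangle, I would show that each of its three vertices lies in the ice-cream cone and then invoke convexity of $\icone$. The vertex $\pos$ is the apex and $\goal$ is the center of the base ball, both of which are in $\icone$. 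The key geometric observation is that the third vertex $p$ is the orthogonal projection of $\goal$ onto the ray from $\pos$ in direction $\ovectsmall{\ort}$, so by the Pythagorean decomposition using the orthonormal frame $(\ovectsmall{\ort}, \nvectsmall{\ort})$ one has $\norm{p - \goal} = |\nvecTsmall{\ort}(\goal - \pos)| = \dist_{\goal}(\pos, \ort)$, placing $p$ on the boundary of the base ball and hence inside the ice-cream cone.

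There is no substantial obstacle here; the proof is essentially bookkeeping over the piecewise definitions combined with two elementary geometric facts: the perpendicular-versus-Euclidean distance inequality $\dist_{\goal}(\pos,\ort) \leq \norm{\goal-\pos}$ and the Pythagorean identity that identifies the triangle's truncation vertex $p$ with a boundary point of the ice-cream's base ball. The only place one must be a little careful is ensuring continuity of the argument across the boundary $\ovecTsmall{\ort}(\goal - \pos) = 0$, but there all relevant sets meet the ball at equality since $\dist_{\goal}(\pos, \ort) = \norm{\goal - \pos}$ in that limit.
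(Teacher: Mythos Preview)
Your proposal is correct and follows essentially the same approach as the paper: split into the non-aligned case (where all four sets collapse to the ball) and the aligned case, then verify the chain of inclusions directly from the definitions. The paper's proof is a terse one-liner asserting the chain ``by definition,'' whereas you spell out the supporting geometric facts (the perpendicular-distance inequality, the $\alpha$-parameter restriction, and the Pythagorean identity locating the truncation vertex on the base ball), which is a welcome elaboration but not a different route.
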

\begin{proof}
See \refapp{app.InclusionRelation}.
\end{proof}

%%%%%%%%%%%%%%%%%%%%%%%%%%%%%%%%%%%%%%%%%%%%%%%%%%%%%%%%%%%%%%%%%%%%%%%%%%%%%%%
%%%%%%%%%%%%%%%%%%%%%%%%%%%%%%%%%%%%%%%%%%%%%%%%%%%%%%%%%%%%%%%%%%%%%%%%%%%%%%%
\section{Safe Unicycle Robot Navigation} 
\label{sec.SafeNavigation}
%%%%%%%%%%%%%%%%%%%%%%%%%%%%%%%%%%%%%%%%%%%%%%%%%%%%%%%%%%%%%%%%%%%%%%%%%%%%%%%
%%%%%%%%%%%%%%%%%%%%%%%%%%%%%%%%%%%%%%%%%%%%%%%%%%%%%%%%%%%%%%%%%%%%%%%%%%%%%%%

In this section, we demonstrate an application of unicycle feedback motion prediction for safe robot navigation using a reference governor \cite{isleyen_arslan_RAL2022}.
In brief, the governed feedback motion design framework \cite{isleyen_arslan_RAL2022} allows for extending the applicability of a reference motion planner that is designed for the fully actuated kinematic robot model to the nonholonomically constrained kinematic unicycle robot model using unicycle feedback motion prediction and safety assessment, as illustrated in \reffig{fig.general_framework}.

For ease of exposition, we consider a disk-shaped unicycle robot of body radius $\radius>0$, centered at position $\pos \in \workspace$ with orientation $\ort \in [ -\pi, \pi )$, that operates in a known static closed compact environment $\workspace \subseteq \R^{2}$ which is cluttered with a collection of obstacles represented by an open set $\obstspace \subset \R^{2}$. 
Hence, the robot's free space, denoted by $\freespace$, of collision-free unicycle positions is given by
\begin{align} \label{eq.FreeSpace}
\freespace \ldf \clist{ \pos \in \workspace \, \big| \,   \ball(\pos,\radius) \subseteq \workspace \setminus \obstspace }.
\end{align}
To ensure global navigation between any start and goal positions in $\freespace$, we assume the free space $\freespace$ is path-connected.

Moreover, suppose $\refplan_{\globalgoal}:\refdomain \rightarrow \R^2$ is a Lipschitz continuous \emph{reference motion planner} for the first-order fully actuated robot dynamics (i.e., $\dot{\pos} = \refplan_{\globalgoal}(\pos)$) that asymptotically brings all robot positions in its positively-invariant collision-free domain $\refdomain \subseteq \freespace$ to a desired global goal position $\globalgoal \in \refdomain$ while avoiding collisions along the way \cite{arslan_koditschek_ICRA2017}.   
For the kinematic fully-actuated robot model, one can construct such a reference vector field using off-the-shelf motion
planning algorithms \cite{lavalle_PlanningAlgorithms2006, choset_etal_PrinciplesOfRobotMotion2005}; 
for example, we use the path pursuit vector field planner \cite{arslan_koditschek_ICRA2017} in our numerical simulations in \refsec{sec.NumericalSimulations}. 
We below describe how to safely follow the reference vector field planner $\refplan_{\globalgoal}$ by a kinematic unicycle robot using a reference governor and the safety assessment of the predicted unicycle feedback motion.  

%%%%%%%%%%%%%%%%%%%%%%%%%%%%%%%%%%%%%%%%%%%%%%%%%%%%%%%%%%%%%%%%%%%%%%%%%%%%%%%
%%%%%%%%%%%%%%%%%%%%%%%%%%%%%%%%%%%%%%%%%%%%%%%%%%%%%%%%%%%%%%%%%%%%%%%%%%%%%%%
\begin{figure}[t]   
\centering
\includegraphics[width = \columnwidth]{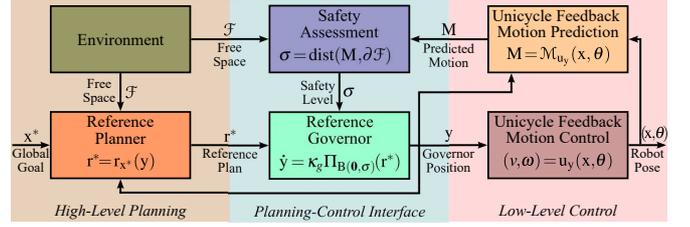}
\caption{Safe unicycle navigation framework using feedback motion prediction and reference governor that establishes a bidirectional safety interface between high-level motion planning and low-level motion control based on the predicted unicycle motion relative to the governor position.}
\label{fig.general_framework}
\end{figure}
%%%%%%%%%%%%%%%%%%%%%%%%%%%%%%%%%%%%%%%%%%%%%%%%%%%%%%%%%%%%%%%%%%%%%%%%%%%%%%%
%%%%%%%%%%%%%%%%%%%%%%%%%%%%%%%%%%%%%%%%%%%%%%%%%%%%%%%%%%%%%%%%%%%%%%%%%%%%%%%

%%%%%%%%%%%%%%%%%%%%%%%%%%%%%%%%%%%%%%%%%%%%%%%%%%%%%%%%%%%%%%%%%%%%%%%%%%%%%%%
%%%%%%%%%%%%%%%%%%%%%%%%%%%%%%%%%%%%%%%%%%%%%%%%%%%%%%%%%%%%%%%%%%%%%%%%%%%%%%%
\subsection{Unicycle Motion Safety Assessment}
\label{sec.SafetyAssessment}
%%%%%%%%%%%%%%%%%%%%%%%%%%%%%%%%%%%%%%%%%%%%%%%%%%%%%%%%%%%%%%%%%%%%%%%%%%%%%%%
%%%%%%%%%%%%%%%%%%%%%%%%%%%%%%%%%%%%%%%%%%%%%%%%%%%%%%%%%%%%%%%%%%%%%%%%%%%%%%%

The availability of a unicycle feedback motion prediction $\motionset_{\ctrl_{\goal}}(\pos, \ort)$ associated with the forward motion control $\ctrl_{\goal}$ allows one to effectively check the safety of the closed-loop unicycle motion trajectory $\pos(t)$, starting at $t = 0$ from a unicycle pose $(\pos_0, \ort_0) \in \workspace \times [-\pi, \pi)$, towards a give goal position $\goal \in \workspace$ in a cluttered environment $\workspace$, because having feedback motion prediction in the free space implies safe robot motion, i.e.,
\begin{align}
\motionset_{\ctrl_{\goal}}(\pos_0, \ort_0) \subseteq \freespace \Longrightarrow \pos(t) \in \freespace \quad \forall t \geq 0.
\end{align}
Accordingly, under the forward motion control $\ctrl_{\goal}$, we measure the safety level $\sigma$ of unicycle robot motion starting at a unicycle pose $(\pos, \ort) \in \R^2 \times [-\pi, \pi)$ towards a goal $\goal \in \R^2$ by the minimum distance between the predicted unicycle motion range $\motionset_{\ctrl_{\goal}}(\pos, \ort)$ and the free space boundary $\partial \freespace$ as 
\begin{subequations}
\begin{align}
\safelevel(\motionset_{\ctrl_{\goal}}(\pos, \ort)) &:= \safedist(\motionset_{\ctrl_{\goal}}(\pos, \ort), \partial \freespace)  
\\
& := \left \{ \begin{array}{@{}l@{\,}l@{}}
\min\limits_{\substack{\vect{a} \in \motionset_{\ctrl_{\goal}}(\pos, \ort)\\ \vect{b} \in \partial \freespace} } \norm{\vect{a} - \vect{b}} & \text{, if } \pos \in \freespace \\
0 & \text{, otherwise.}
\end{array}
\right.
\end{align}    
\end{subequations}
Here, a safety level of zero means unsafe motion; the higher the safety level the safer the motion. 
Note that we consider being exactly on the boundary of the free space to be unsafe although it is, by definition \refeq{eq.FreeSpace}, free of collisions.

A requirement of the safety measure for governed feedback motion design is that $\safelevel(\motionset_{\ctrl_{\goal}}(\pos, \ort)\!)$ is a locally Lipschitz continuous function of the unicycle pose $(\pos, \ort)$ and the goal position $\goal$ \cite{isleyen_arslan_RAL2022}.
\begin{proposition}\label{prop.LipschitzSafetyLevel}
For any unicycle motion prediction $\motionset_{\ctrl_{\goal}} \in \clist{\motionset_{\ctrl_{\goal}, \mathrm{B}}, \motionset_{\ctrl_{\goal}, \mathrm{BC}}, \motionset_{\ctrl_{\goal}, \mathrm{IC}}, \motionset_{\ctrl_{\goal}, \mathrm{TC}}}$,  the safety assessment $\safelevel(\motionset_{\ctrl_{\goal}}(\pos, \ort)) = \safedist(\motionset_{\ctrl_{\goal}}(\pos, \ort), \partial \freespace)$  is locally Lipschitz.  
\end{proposition}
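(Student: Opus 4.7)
The plan is to reduce the claim to a Hausdorff-continuity statement on the set-valued map $(\pos,\ort,\goal)\mapsto \motionset_{\ctrl_{\goal}}(\pos,\ort)$ and then check that continuity case by case. The starting observation is a standard one: for any nonempty closed set $B\subseteq \R^2$ and any two nonempty compact sets $A_1,A_2\subseteq \R^2$,
\begin{align}
\absval{\safedist(A_1,B) - \safedist(A_2,B)} \leq d_H(A_1,A_2),
\end{align}
where $d_H$ denotes the Hausdorff distance. Applying this with $B=\partial\freespace$, it suffices to prove that each motion prediction set varies in a locally Lipschitz way (in the Hausdorff metric) with the parameters $(\pos,\ort,\goal)$, since the pose and goal both enter $\safelevel$ only through $\motionset_{\ctrl_{\goal}}(\pos,\ort)$ (the case $\pos\notin\freespace$, where $\safelevel=0$, is handled separately by noting $\safedist(\motionset_{\ctrl_{\goal}}(\pos,\ort),\partial\freespace)=0$ whenever $\pos\in\motionset_{\ctrl_{\goal}}(\pos,\ort)$ meets $\partial\freespace$, so the two definitions agree on $\partial\freespace$).

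First I would dispose of the ball prediction $\motionset_{\ctrl_{\goal},\ball}=\ball(\goal,\norm{\goal-\pos})$: the Hausdorff distance between two closed disks is bounded by the sum of the distance of centers and the absolute difference of radii, and $\norm{\goal-\pos}$ is itself $1$-Lipschitz in $(\pos,\goal)$. Next, for the ice-cream cone $\motionset_{\ctrl_{\goal},\mathrm{IC}}$ in its aligned branch, I would write it as $\conv\plist{\clist{\pos}\cup \ball(\goal,\dist_{\goal}(\pos,\ort))}$ and use the standard fact that the convex-hull operator is $1$-Lipschitz in Hausdorff distance together with $\dist_{\goal}(\pos,\ort)=\absval{\nvecTsmall{\ort}(\goal-\pos)}$ being locally Lipschitz in $(\pos,\ort,\goal)$. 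The truncated ice-cream cone $\motionset_{\ctrl_{\goal},\mathrm{TC}}$ is a union of a triangle with vertices that are Lipschitz functions of $(\pos,\ort,\goal)$ and a half-disk whose center and radius are Lipschitz in the same variables, and the Hausdorff distance of a union is bounded by the maximum of the Hausdorff distances of the pieces. The bounded conic set $\motionset_{\ctrl_{\goal},\mathrm{BC}}$ is an intersection of the ball prediction with a half-plane or cone; because the ball prediction already contains $\goal$ in its interior (when $\pos\neq\goal$) and the intersecting cone/half-plane has $\goal$ in its interior too, the intersection is ``stably transversal'' in a neighborhood, and a short compactness argument gives local Lipschitzness of this intersection in Hausdorff distance.

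The second step is to verify continuity across the switching condition $\ovecTsmall{\ort}(\goal-\pos)=0$ in the piecewise definitions. As already observed in the paper, the two branches agree on this switching surface: in the ice-cream case, $\dist_{\goal}(\pos,\ort)=\norm{\goal-\pos}$ there, so $\icone(\pos,\goal,\dist_{\goal}(\pos,\ort))=\conv\plist{\clist{\pos}\cup\ball(\goal,\norm{\goal-\pos})}=\ball(\goal,\norm{\goal-\pos})$, and analogous identities hold for $\motionset_{\ctrl_{\goal},\mathrm{BC}}$ and $\motionset_{\ctrl_{\goal},\mathrm{TC}}$. Combining a local Lipschitz bound on each branch with this matching at the interface via a standard gluing argument (local Lipschitz constants patch together when the two Lipschitz functions coincide on their shared boundary), yields a local Lipschitz bound for the full piecewise map. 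Together with the Hausdorff-to-distance contraction from Step~1, this gives the claimed local Lipschitz continuity of $\safelevel(\motionset_{\ctrl_{\goal}}(\pos,\ort))$.

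The main obstacle I expect is the intersection appearing in $\motionset_{\ctrl_{\goal},\mathrm{BC}}$: generic intersections of convex sets are only upper-semicontinuous (not Hausdorff-continuous), so I have to exploit the specific geometry—namely that the goal $\goal$ lies strictly inside both the ball and the bounding cone/half-plane away from $\pos=\goal$—to guarantee a nondegenerate intersection and a uniform local Lipschitz constant. Everything else (balls, convex hulls, unions of Lipschitz pieces, and the piecewise gluing at the alignment boundary) is routine once the intersection case is in hand.
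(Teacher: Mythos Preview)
Your argument is sound and reaches the same conclusion, but the route differs from the paper's. The paper's proof does not go through the Hausdorff distance at all: it observes that each of the four prediction sets can be written as a finite union of balls and triangles whose centers, radii, and vertices depend smoothly on $(\pos,\ort,\goal)$, hence as a finite union of affine images of fixed template sets (the unit ball and the standard simplex), and then invokes a lemma from prior work stating that the minimum set distance to a fixed set is Lipschitz in the parameters of such an affine transformation. Your approach is more self-contained---you avoid the external lemma by working directly with Hausdorff estimates and the $1$-Lipschitz properties of $\conv$ and union---and you are more explicit about the delicate case, namely $\motionset_{\ctrl_{\goal},\mathrm{BC}}$, which is defined as a genuine intersection and for which you correctly flag the need for a transversality/common-interior-point argument. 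The paper's ``union of circles and triangles'' description glosses over this intersection structure, so your treatment of $\motionset_{\ctrl_{\goal},\mathrm{BC}}$ is arguably more careful; conversely, the paper's affine-transform viewpoint handles $\motionset_{\ctrl_{\goal},\mathrm{IC}}$ and $\motionset_{\ctrl_{\goal},\mathrm{TC}}$ in one stroke without separate convex-hull or union bounds. One minor correction: in $\motionset_{\ctrl_{\goal},\mathrm{TC}}$ the second piece is a full disk $\ball(\goal,\dist_{\goal}(\pos,\ort))$, not a half-disk, but this does not affect your argument.
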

\begin{proof}
See \refapp{app.LipschitzSafetyLevel}.
\end{proof}

%%%%%%%%%%%%%%%%%%%%%%%%%%%%%%%%%%%%%%%%%%%%%%%%%%%%%%%%%%%%%%%%%%%%%%%%%%%%%%%
%%%%%%%%%%%%%%%%%%%%%%%%%%%%%%%%%%%%%%%%%%%%%%%%%%%%%%%%%%%%%%%%%%%%%%%%%%%%%%%
\subsection{Unicycle-Governor Navigation Dynamics}
%%%%%%%%%%%%%%%%%%%%%%%%%%%%%%%%%%%%%%%%%%%%%%%%%%%%%%%%%%%%%%%%%%%%%%%%%%%%%%%
%%%%%%%%%%%%%%%%%%%%%%%%%%%%%%%%%%%%%%%%%%%%%%%%%%%%%%%%%%%%%%%%%%%%%%%%%%%%%%%

The safety assessment of predicted unicycle motion allows us to properly adapt a reference vector field planner $\refplan_{\globalgoal}$ for safe unicycle navigation via a reference governor.
A \textit{reference governor} is a first-order dynamical system (e.g., a virtual fully actuated robot) with position  $\govpos \in \R^{2}$ that follows the reference planner $\refplan_{\globalgoal}: \refdomain \rightarrow \R^{\dimspace}$ towards the goal $\globalgoal \in \refdomain \subseteq \freespace$ as close as possible, based on the safety level $\safelevel(\motionset_{\ctrl_{\govpos}}(\pos, \ort))$ of the predicted unicycle motion starting from $(\pos, \ort) \in \R^2 \times [-\pi, \pi)$ towards the governor position $\govpos$.

Accordingly, for any choice of bounded unicycle feedback motion prediction $\motionset_{\ctrl_{\govpos}}$ associated with the forward motion control $\ctrl_{\goal}(\pos, \ort) = (\linvel_{\goal}(\pos, \ort), \angvel_{\goal}(\pos, \ort))$ in \refeq{eq.forward_unicycle_control}, based on a standard form of the reference governor dynamics  \cite{arslan_koditschek_ICRA2017, isleyen_arslan_RAL2022},  we design the unicycle-governor navigation dynamics as:
\begin{subequations}\label{eq.UnicycleGovernorNavigationDynamics}
\begin{align} 
\dot{\pos} & = \linvel_{\goal}(\pos, \ort) \ovectsmall{\ort}, 
\\
\dot{\ort} & = \angvel_{\goal}(\pos, \ort), \\
\dot{\govpos} & = \govgain \proj_{\ball(\mat{0}, \safelevel(\motionset_{\ctrl_{\goal}}(\pos, \ort))}(\refplan_{\globalgoal}(\govpos)),  \label{eq.ReferenceGovernorDynamics} 
%& = - \govgain \plist{\govpos - \proj_{\ball(\govpos, \safelevel(\motionset_{\ctrl_{\goal}}(\pos, \ort))}(\govpos + \refplan_{\goal}(\govpos))}, \\
%& = \govgain \min\plist{\safelevel(\motionset_{\ctrl_{\goal}}(\pos, \ort)), \norm{\refplan_{\globalgoal}(\govpos)}}\frac{\refplan_{\goal}(\govpos)}{\norm{\refplan_{\goal}(\govpos)}}, \label{eq.ReferenceGovernorScaledDynamics}
\end{align}  
\end{subequations}
where $\govgain > 0$ is a governor control gain, $\proj_{A}(\vect{b}) \ldf \argmin_{\vect{a} \in A} \norm{\vect{a} - \vect{b}}$ denotes the  metric projection of a point $\vect{b}$ onto a closed set $A$, and $\ball(\mat{0}, \sigma)$ is the Euclidean ball centered at the origin with radius $\sigma \geq 0$.
Our design ensures that the governor is only allowed to move according to the reference planner $\refplan_{\globalgoal}$ if the robot's motion relative to the governor is predicted to be safe, i.e., $\safelevel(\motionrange_{\ctrl_{\ctrlgoal}}(\pos, \ort)) = \safedist(\motionrange_{\ctrl_{\ctrlgoal}}(\pos, \ort), \partial \freespace) > 0$.
Also, note that the right-hand side of the reference governor dynamics in \refeq{eq.ReferenceGovernorDynamics} is Lipschitz continuous since both the safety level $\safelevel(\motionrange_{\ctrl_{\ctrlgoal}}(\pos, \ort))$ and the reference planner $\refplan_{\goal}(\govpos)$ are Lipschitz.

\begin{proposition}\label{prop.SafeStableUnicycleGovernorNavigation}
\emph{(Safe \& Stable Unicycle-Governor Navigation)}
Starting from any unicycle pose $(\pos, \ort) \in \R^2 \times [-\pi, \pi)$ and any governor position $\govpos \in \R^2$ with a strictly positive safety level $\safelevel(\motionrange_{\ctrl_{\goal}}(\pos, \ort)) > 0$, the unicycle-governor dynamics in \refeq{eq.UnicycleGovernorNavigationDynamics} asymptotically bring the unicycle robot and the governor to the global goal $\globalgoal$ according to the reference vector field $\refplan_{\globalgoal}$ with no collisions along the way.   
\end{proposition}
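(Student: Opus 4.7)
The plan is to establish the two claims, \emph{safety} (no collisions along the way) and \emph{stability} (asymptotic convergence of both governor and unicycle to $\globalgoal$), in that order, by leveraging the positive inclusion properties of feedback motion prediction (Propositions \ref{prop.PositiveInclusionCircularMotionPrediction}, \ref{prop.PositiveInclusionIceCreamMotionPrediction}, \ref{prop.PositiveInclusionTruncatedIceCreamMotionCone}) together with the Lipschitz continuity of the safety level (\refprop{prop.LipschitzSafetyLevel}) and the known convergence properties of the reference planner $\refplan_{\globalgoal}$. First I would observe that the right-hand side of \refeq{eq.UnicycleGovernorNavigationDynamics} is locally Lipschitz (standard unicycle control away from $\pos = \govpos$, Lipschitz safety level, Lipschitz reference planner, and metric projection onto a ball depending Lipschitzly on the radius), so local existence and uniqueness of solutions are guaranteed.

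For the safety claim, I would argue that the level set $\{ \safelevel(\motionset_{\ctrl_{\govpos}}(\pos, \ort)) > 0\}$ is forward invariant. The key inequality to derive is a lower comparison bound of the form $\dot{\safelevel} \geq -L \govgain \,\safelevel$ along trajectories, where $L$ is the Lipschitz constant of $\govpos \mapsto \safelevel(\motionset_{\ctrl_{\govpos}}(\pos,\ort))$ from \refprop{prop.LipschitzSafetyLevel}. This bound comes from two observations: (i) with $\govpos$ held fixed, the positive inclusion property of $\motionset_{\ctrl_{\govpos}}$ along the unicycle trajectory implies that $\safelevel$ is non-decreasing in $(\pos(t), \ort(t))$, so only governor motion can reduce $\safelevel$; and (ii) the projection in \refeq{eq.ReferenceGovernorDynamics} bounds $\norm{\dot{\govpos}} \leq \govgain \,\safelevel(\motionset_{\ctrl_{\govpos}}(\pos,\ort))$, so the worst-case decrease of $\safelevel$ from governor motion is proportional to $\safelevel$ itself. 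Gr\"onwall (or a standard comparison lemma) then yields $\safelevel(t) \geq \safelevel(0) e^{-L\govgain t} > 0$, so the unicycle motion prediction $\motionset_{\ctrl_{\govpos}}(\pos(t), \ort(t))$ never intersects $\partial \freespace$; since this prediction contains the closed-loop unicycle trajectory, the unicycle avoids all obstacles for all time. In particular, $\govpos(t) \in \motionset_{\ctrl_{\govpos(t)}}(\pos(t),\ort(t)) \subseteq \freespace$ as well.

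For the stability claim, once safety is established and trajectories exist for all $t \geq 0$, I would use a cascade-style argument. Because $\safelevel$ is bounded away from zero on bounded time intervals and, by the forward invariance of $\refdomain$ under $\refplan_{\globalgoal}$, the governor remains in a safe positively-invariant region, the projection in \refeq{eq.ReferenceGovernorDynamics} eventually stops saturating: as the unicycle tracks $\govpos$, its predicted motion range shrinks around $\govpos$, making $\safelevel$ increase, while the reference planner's speed $\norm{\refplan_{\globalgoal}(\govpos)}$ remains bounded. Once $\safelevel \geq \tfrac{1}{\govgain}\norm{\refplan_{\globalgoal}(\govpos)}$, the governor dynamics reduce to $\dot{\govpos} = \govgain\, \refplan_{\globalgoal}(\govpos)$, and by the assumed asymptotic stability of $\refplan_{\globalgoal}$ on $\refdomain$, we obtain $\govpos(t) \to \globalgoal$. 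Finally, because the governor velocity vanishes as $\govpos \to \globalgoal$ (since $\refplan_{\globalgoal}(\globalgoal) = \mat{0}$), the unicycle eventually experiences an (asymptotically) stationary goal, so \reflem{lem.GlobalStability} applied in the limit yields $\pos(t) \to \globalgoal$.

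The main obstacle will be making the coupling between safety and convergence fully rigorous: specifically, showing that the safety level does not merely stay positive but is in fact eventually large enough to release the governor from the projection saturation, so that the governor actually tracks $\refplan_{\globalgoal}$ rather than crawling arbitrarily slowly. The cleanest route is a LaSalle-type argument on the joint state $(\pos, \ort, \govpos)$: on any $\omega$-limit point, the unicycle must be aligned with and at the governor (by \reflem{lem.GlobalStability} applied to the sub-dynamics with $\govpos$ momentarily fixed at the limit), which forces $\safelevel$ at the limit point to equal the free-space clearance of the governor, a strictly positive quantity away from $\globalgoal$, and hence $\govpos$ cannot halt anywhere except at $\globalgoal$.
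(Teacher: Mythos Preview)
Your approach is genuinely different from the paper's. The paper does not argue safety and stability from scratch; it simply invokes the general safety/stability theorem for the reference-governor framework in \cite{isleyen_arslan_RAL2022} and checks the three hypotheses that theorem requires: (i) the low-level control $\ctrl_{\govpos}$ is Lipschitz almost everywhere and globally asymptotically stable (\reflem{lem.GlobalStability}); (ii) the motion prediction is radially bounded relative to $\govpos$, i.e., contained in the Lyapunov ball (\refprop{prop.InclusionRelation}); and (iii) the safety level is locally Lipschitz (\refprop{prop.LipschitzSafetyLevel}). Positive inclusion is \emph{not} one of the hypotheses.

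That distinction matters for the scope of the claim. The proposition is stated for \emph{any} of the bounded unicycle motion predictions, and the paper explicitly includes $\motionset_{\ctrl_{\goal}, \mathrm{BC}}$ in this list (see the text preceding \refeq{eq.UnicycleGovernorNavigationDynamics} and \refprop{prop.LipschitzSafetyLevel}). But the paper also explicitly observes that $\motionset_{\ctrl_{\goal}, \mathrm{BC}}$ is \emph{not} positively inclusive (see \reffig{fig.UnicycleFeedbackMotionPredictionPositiveInclusion} and the surrounding discussion). Your safety argument leans on positive inclusion to conclude that the $(\pos,\ort)$-component of $\dot{\safelevel}$ is nonnegative, so as written it does not cover the bounded-cone case and therefore does not prove the proposition in full generality. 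For the three positively-inclusive predictions your Gr\"onwall estimate $\safelevel(t) \geq \safelevel(0)e^{-L\govgain t}$ is a clean self-contained route to safety that the paper does not pursue.

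On the stability side, the obstacle you flag is real and is precisely what the external framework result absorbs. Your Gr\"onwall bound only keeps $\safelevel$ positive, not bounded away from zero, so the governor could in principle slow down indefinitely; the LaSalle sketch you give needs a Lyapunov-like function for the \emph{joint} state $(\pos,\ort,\govpos)$ to be made rigorous, and ``applying \reflem{lem.GlobalStability} with $\govpos$ momentarily fixed at the limit'' is not by itself a valid step since $\govpos$ is not constant on the $\omega$-limit set unless you have already shown it converges. The paper sidesteps all of this by deferring to \cite{isleyen_arslan_RAL2022}.
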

\begin{proof}
See \refapp{app.SafeStableUnicycleGovernorNavigation}.
\end{proof}

%%%%%%%%%%%%%%%%%%%%%%%%%%%%%%%%%%%%%%%%%%%%%%%%%%%%%%%%%%%%%%%%%%%%%%%%%%%%%%%
%%%%%%%%%%%%%%%%%%%%%%%%%%%%%%%%%%%%%%%%%%%%%%%%%%%%%%%%%%%%%%%%%%%%%%%%%%%%%%%
\begin{figure*}[t]
\centering
\begin{tabular}{@{}c@{\hspace{0.5mm}}c@{\hspace{0.5mm}}c@{\hspace{0.5mm}}c@{\hspace{0.5mm}}c@{}}
\includegraphics[width = 0.197\textwidth]{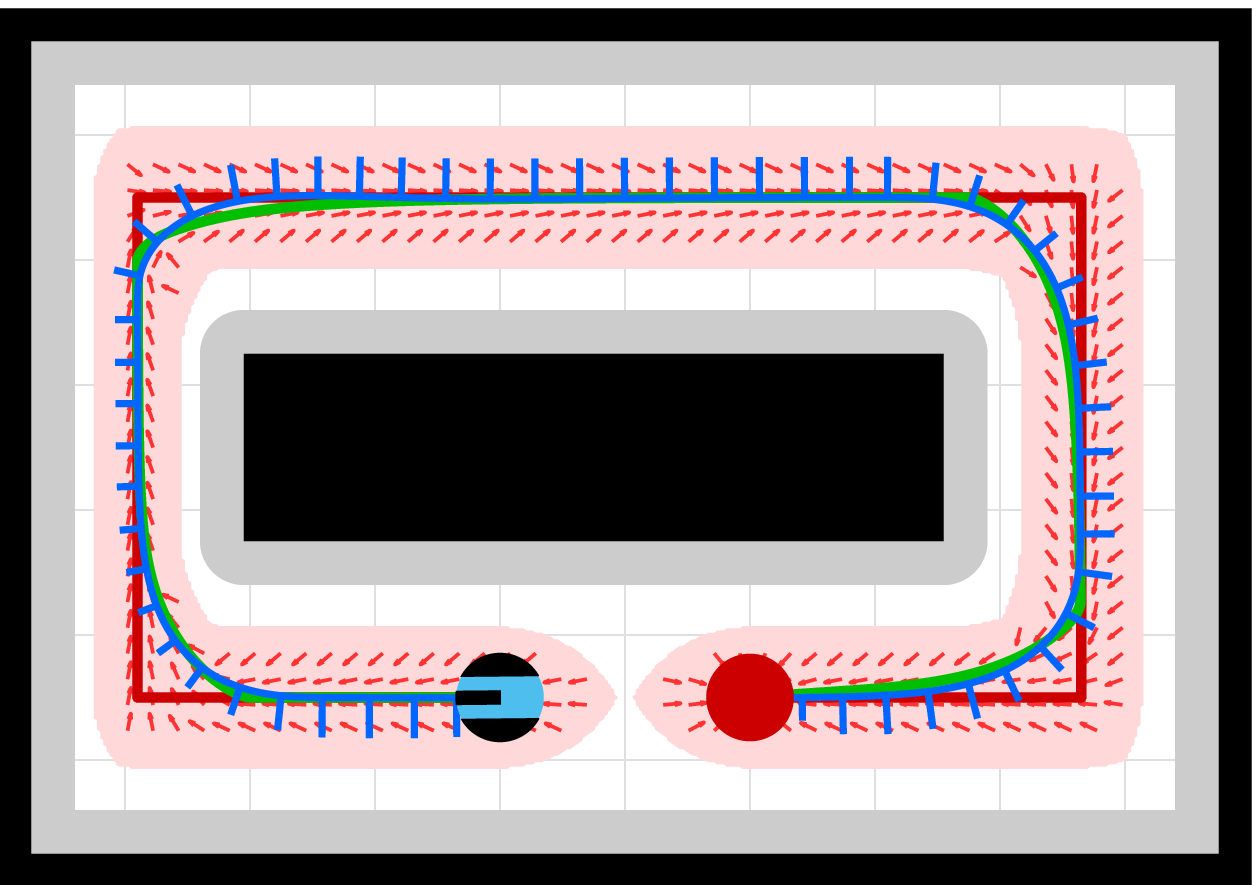} & 
\includegraphics[width = 0.197\textwidth]{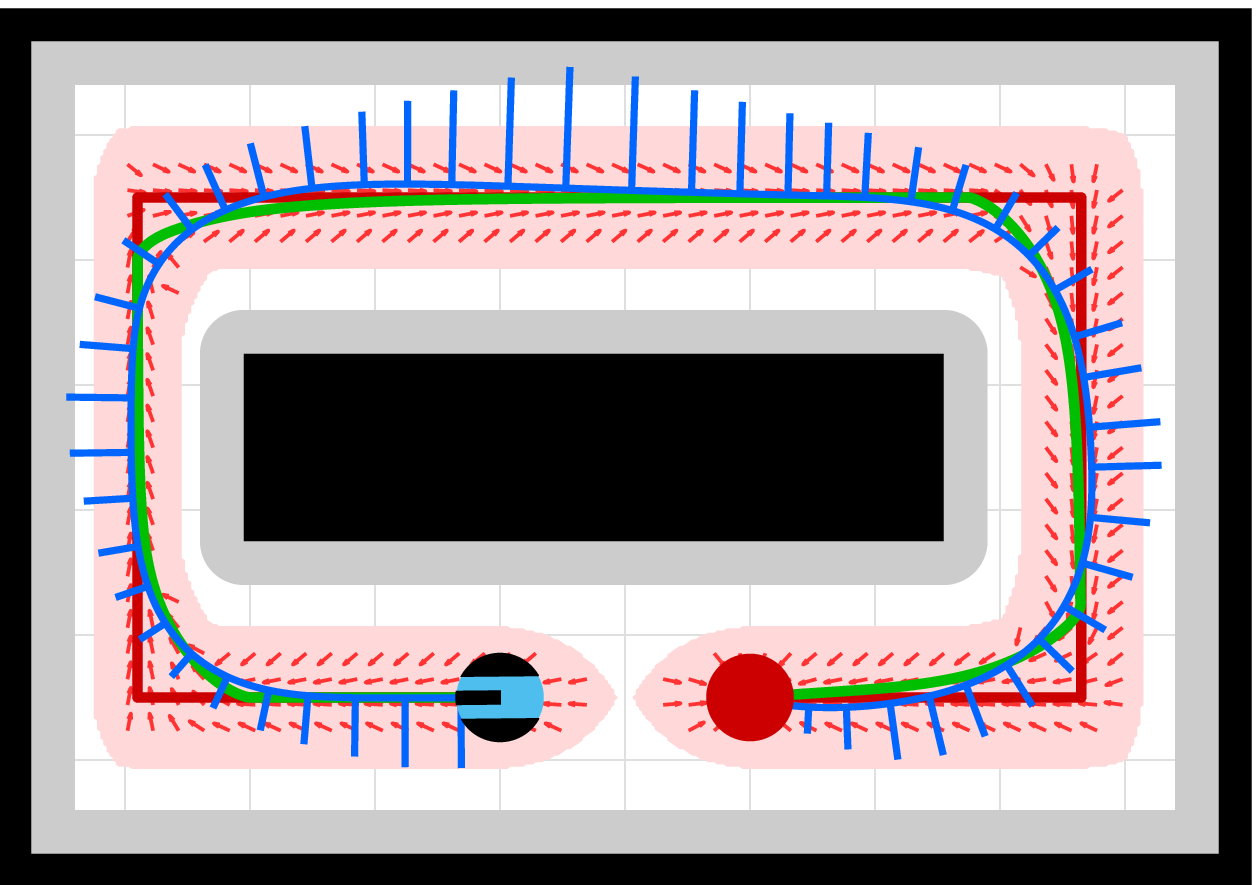} & 
\includegraphics[width = 0.197\textwidth]{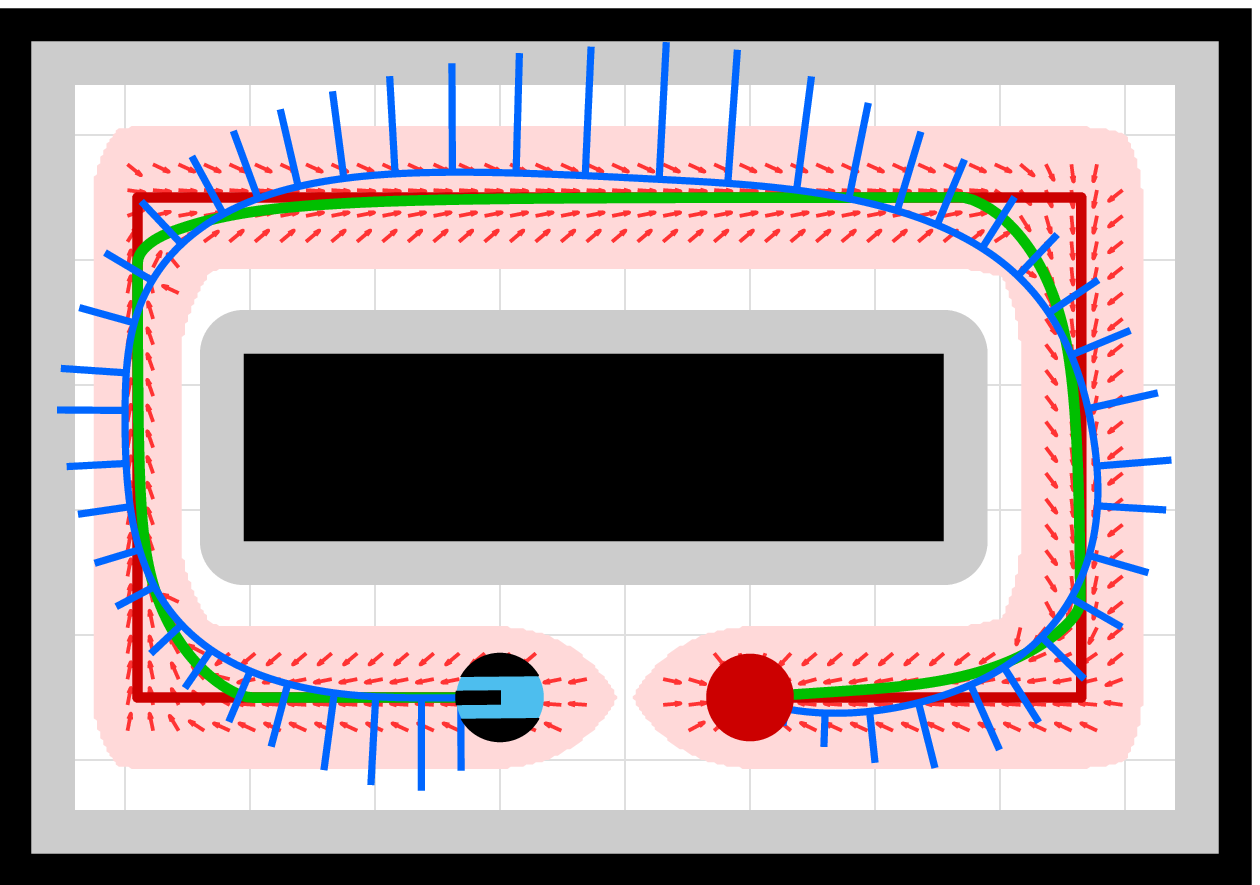} &  
\includegraphics[width = 0.197\textwidth]{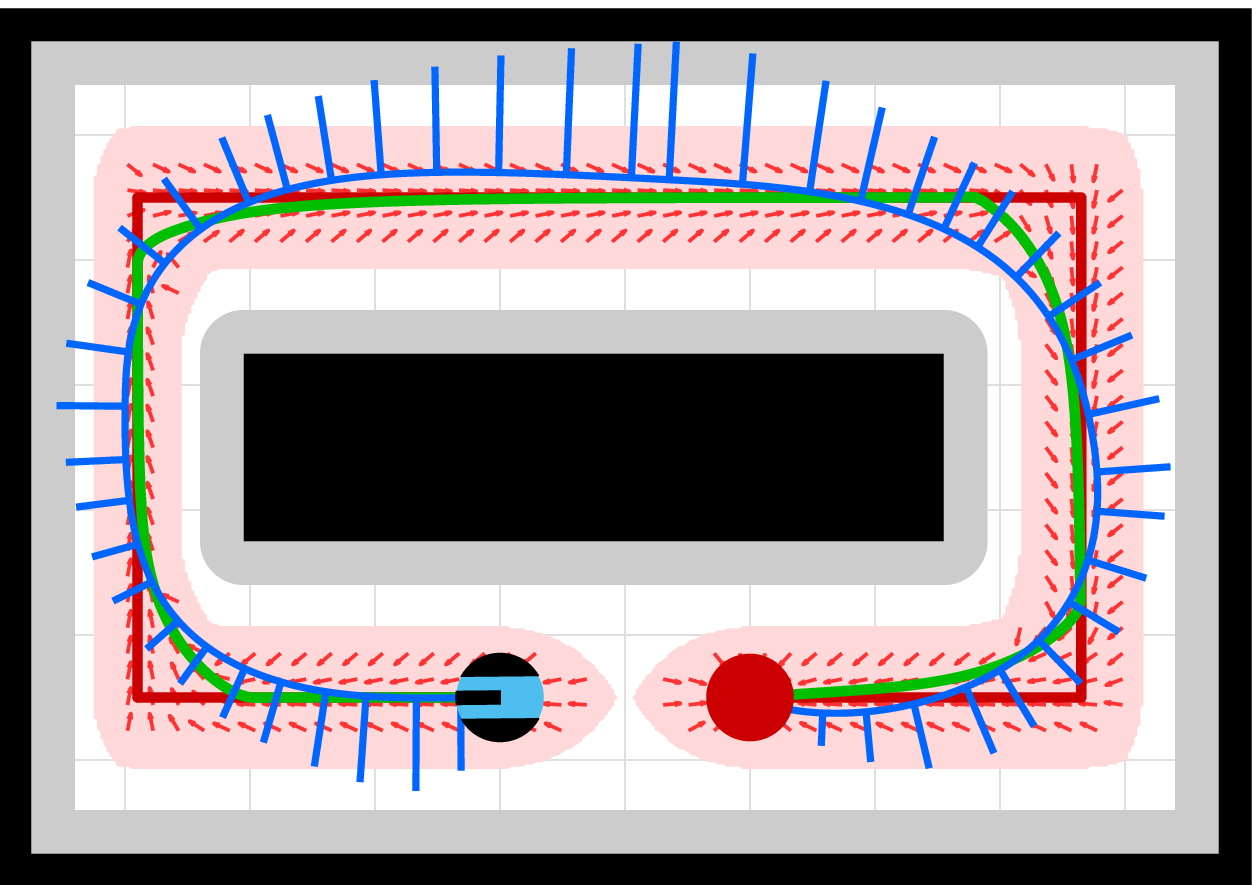} &  
\includegraphics[width = 0.197\textwidth]{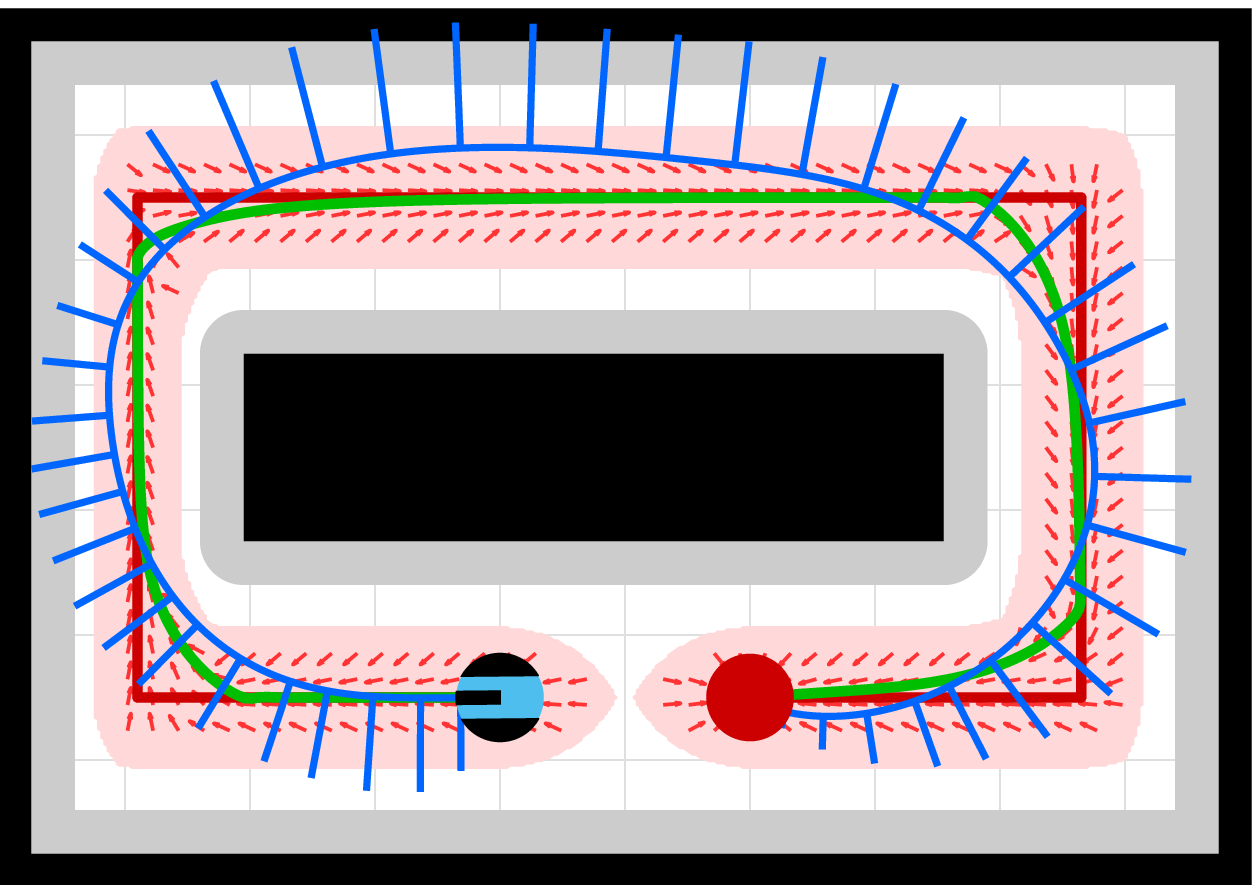} 
\\
\includegraphics[width = 0.197\textwidth]{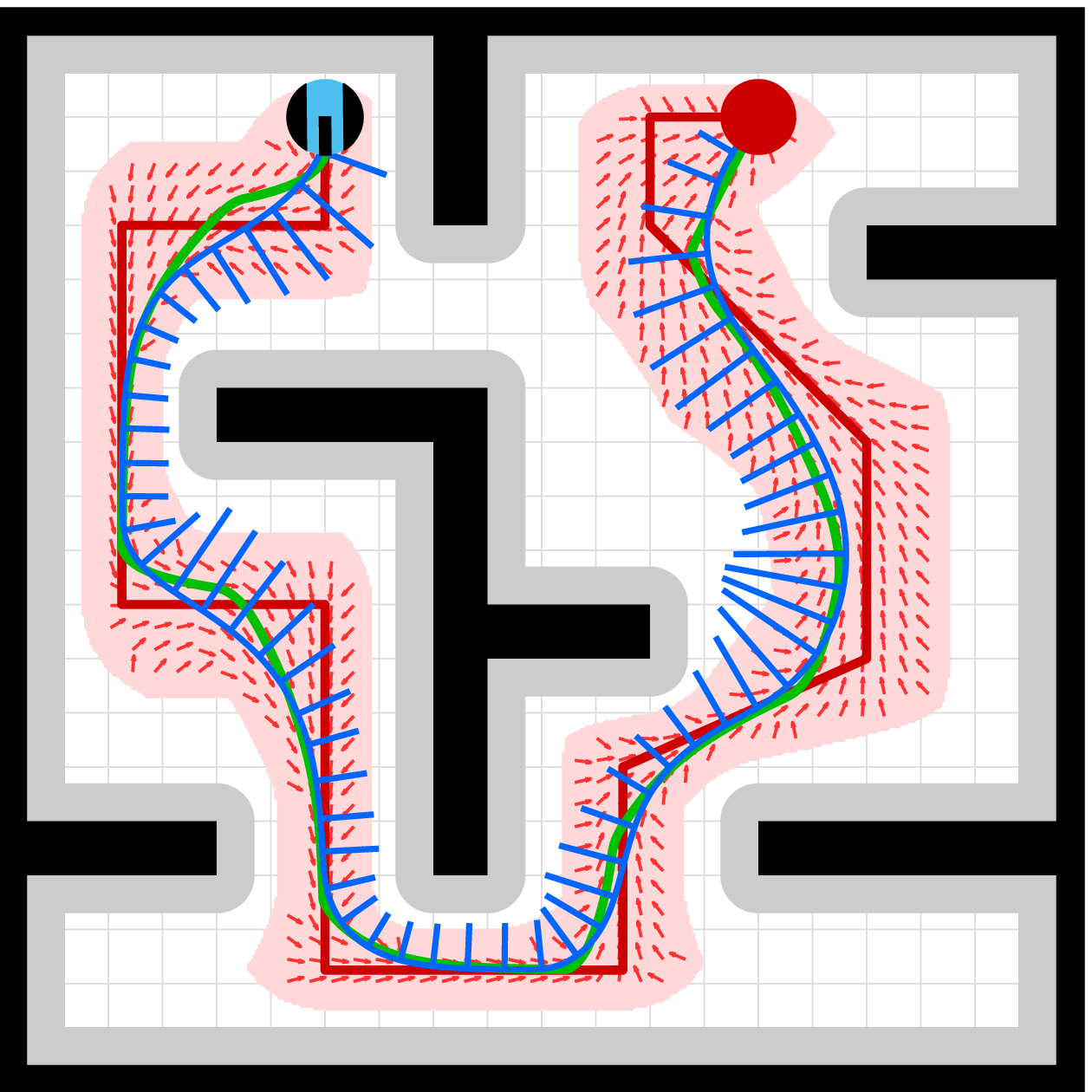} & 
\includegraphics[width = 0.197\textwidth]{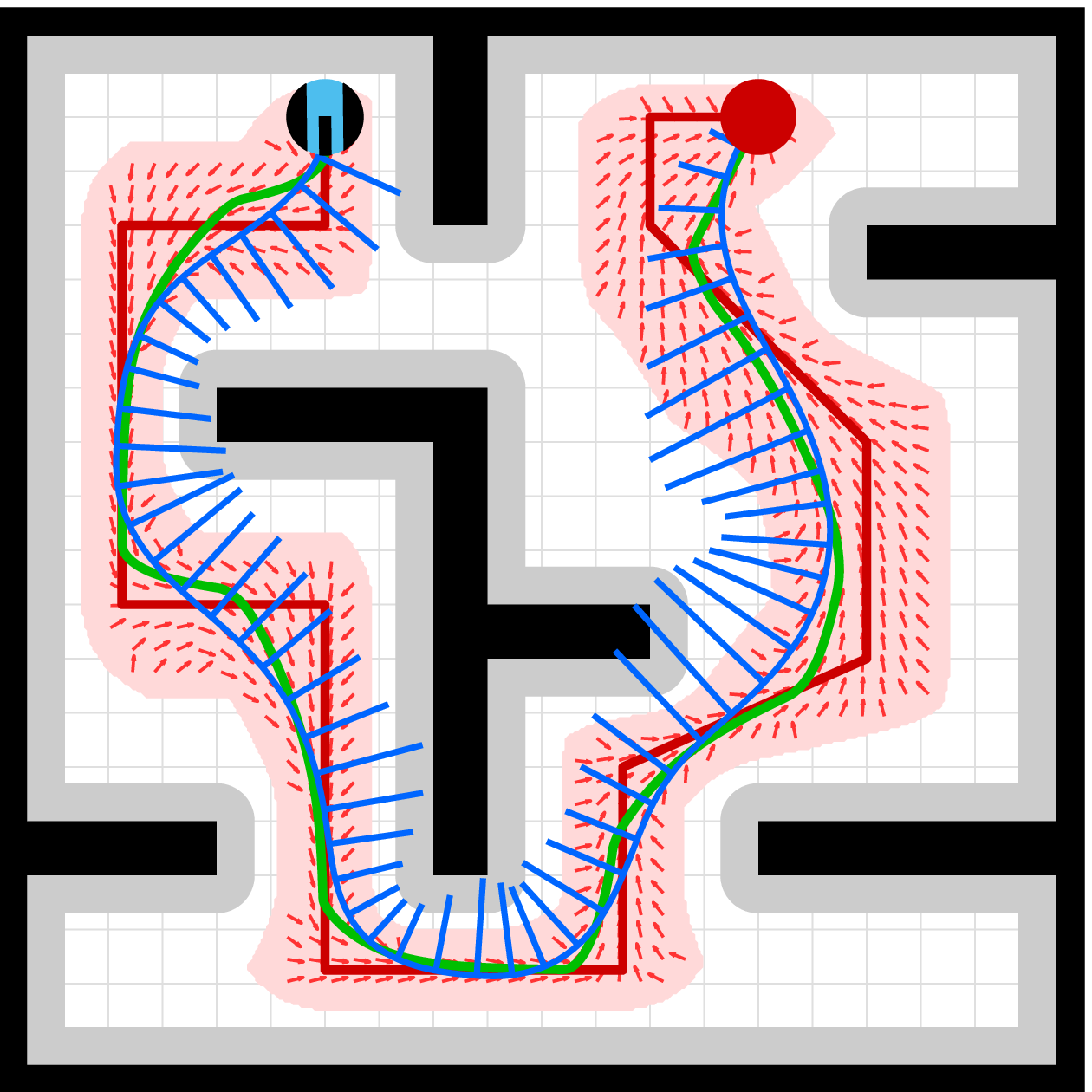} & 
\includegraphics[width = 0.197\textwidth]{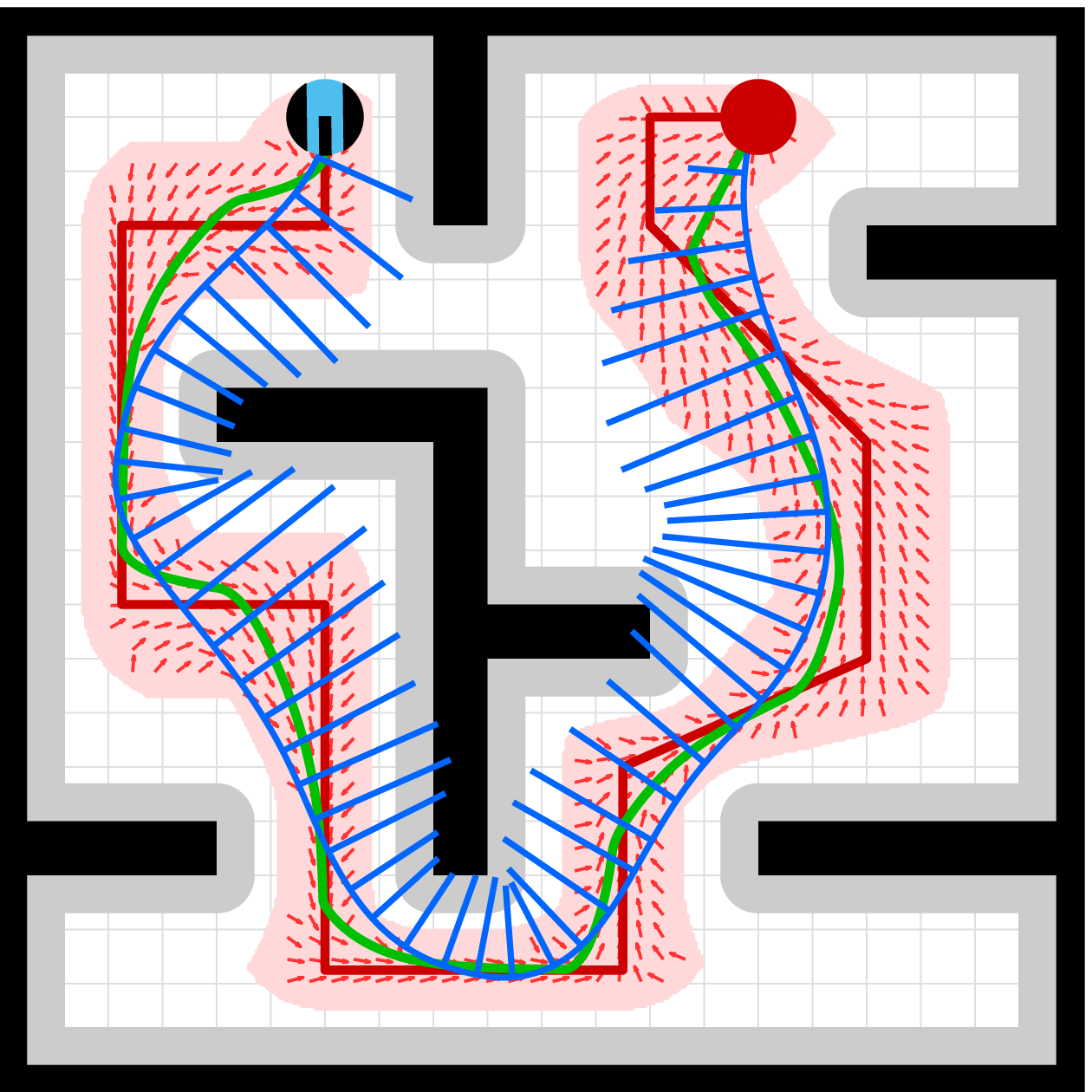} & 
\includegraphics[width = 0.197\textwidth]{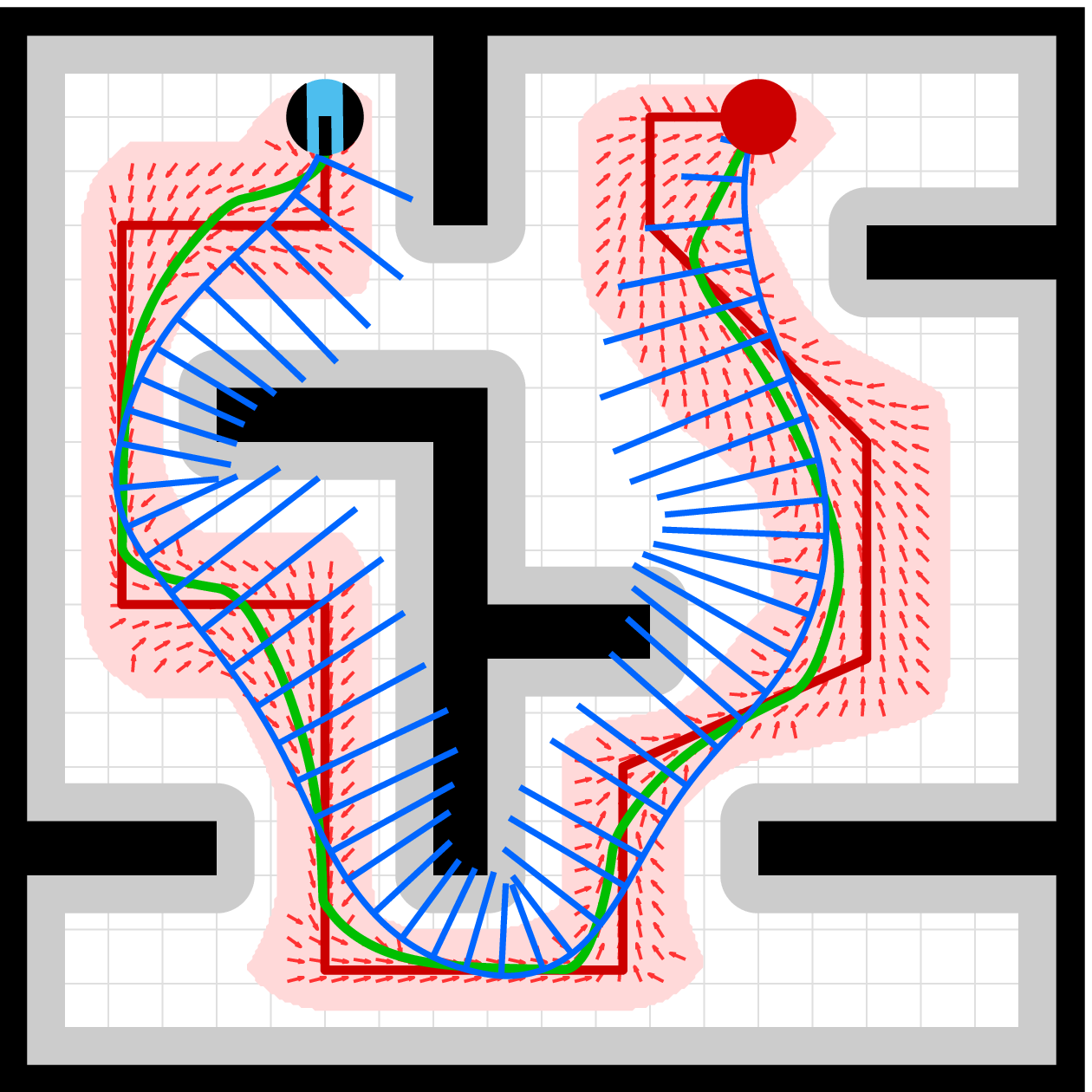} & 
\includegraphics[width = 0.197\textwidth]{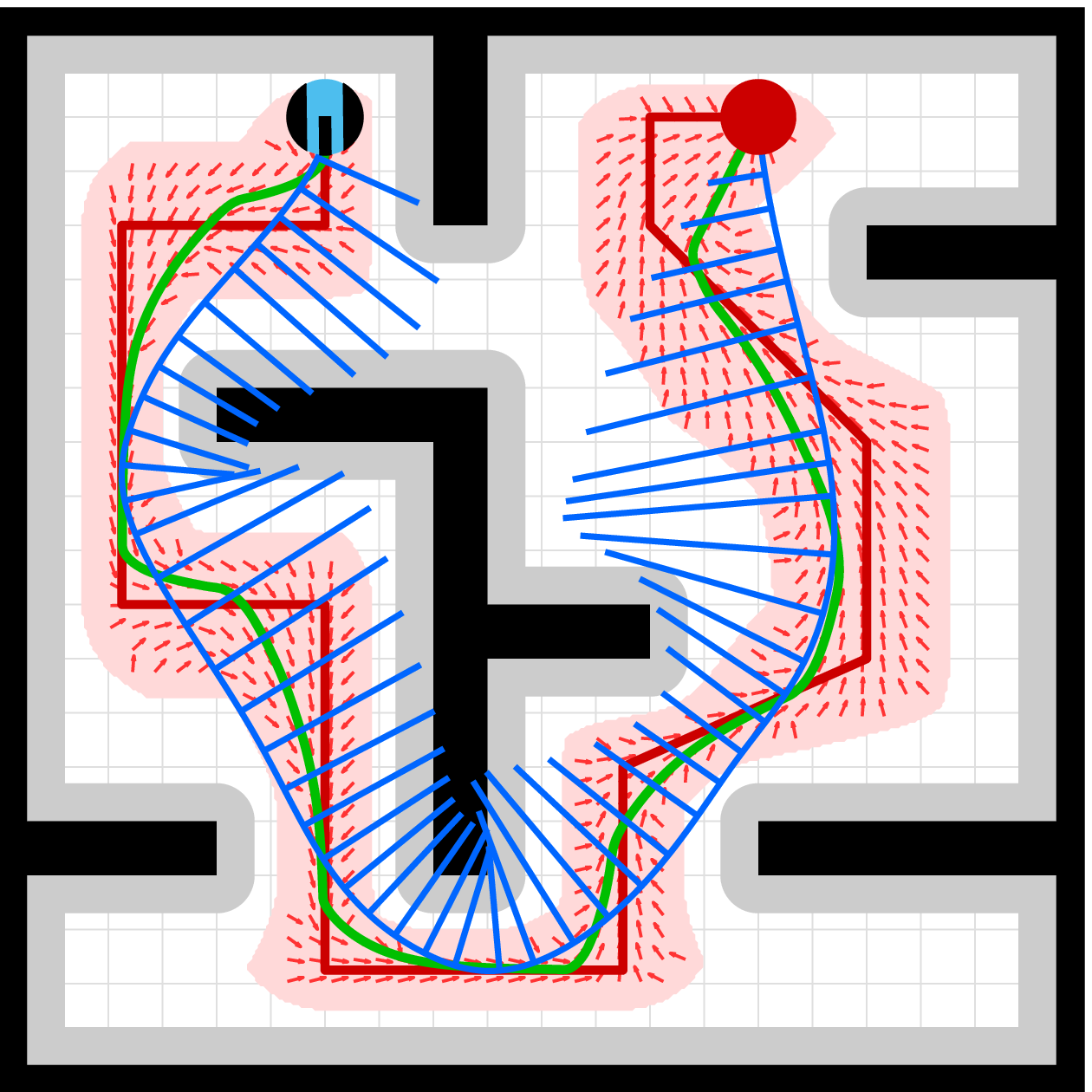}
\\[-1mm]
\footnotesize{(a)} & \footnotesize{(b)} & \footnotesize{(c)} & \footnotesize{(d)}& \footnotesize{(e)}
\end{tabular}
\vspace{-3mm}
\caption{Safe unicycle robot navigation in (top) a corridor environment and (bottom) an office-like cluttered environment using a path-pursuit reference vector field (red arrows) constructed around a piecewise linear reference path (red line) towards a goal position (red circle). The safety of the unicycle motion (blue line) is constantly verified relative to the governor motion (green line) using (a) circular, (b) bounded conic (c) ice-cream-cone-shaped, (d) truncated ice-cream-cone-shaped, (e) forward-simulation-based motion predictions, where the unicycle speed is indicated by bars.}
\label{fig.SafeUnicycleNavigation}
%\vspace{-\baselineskip}
\vspace{-3mm}
\end{figure*}
%%%%%%%%%%%%%%%%%%%%%%%%%%%%%%%%%%%%%%%%%%%%%%%%%%%%%%%%%%%%%%%%%%%%%%%%%%%%%%%
%%%%%%%%%%%%%%%%%%%%%%%%%%%%%%%%%%%%%%%%%%%%%%%%%%%%%%%%%%%%%%%%%%%%%%%%%%%%%%%

%%%%%%%%%%%%%%%%%%%%%%%%%%%%%%%%%%%%%%%%%%%%%%%%%%%%%%%%%%%%%%%%%%%%%%%%%%%%%%%
%%%%%%%%%%%%%%%%%%%%%%%%%%%%%%%%%%%%%%%%%%%%%%%%%%%%%%%%%%%%%%%%%%%%%%%%%%%%%%%
\section{Numerical Simulations} 
\label{sec.NumericalSimulations}
%%%%%%%%%%%%%%%%%%%%%%%%%%%%%%%%%%%%%%%%%%%%%%%%%%%%%%%%%%%%%%%%%%%%%%%%%%%%%%%
%%%%%%%%%%%%%%%%%%%%%%%%%%%%%%%%%%%%%%%%%%%%%%%%%%%%%%%%%%%%%%%%%%%%%%%%%%%%%%%

In this section, we provide numerical simulations%
\footnote{\label{fn.SimulationSettings} For all simulations, we set the linear and angular velocity gains $\lingain = 1$ and $\anggain = 1.5$, the path pursuit planner gain $\gain_\path = 1$, and the governor gain $\govgain = 4$. All simulations are obtained by numerically solving the feedback unicycle-governor dynamics using the \texttt{ode45} function of MATLAB. Please see the accompanying video for the animated robot motion. The open-source code for our MATLAB and ROS implementations is available at {\scriptsize\texttt{https://github.com/core-robotics-research/unicycle\_motion\_ control\_prediction}}.
} 
to demonstrate safe unicycle robot navigation around obstacles using a first-order path pursuit planner as a reference motion planner, where the safety assessment of robot motion is performed based on unicycle feedback motion prediction. 
We also systematically investigate the role of feedback motion range prediction on governed unicycle navigation motion. 
As a baseline ground-truth motion prediction, we use the forward simulation of the closed-loop unicycle navigation dynamics in \refeq{eq.UnicycleGovernorNavigationDynamics} towards the governor position.

%%%%%%%%%%%%%%%%%%%%%%%%%%%%%%%%%%%%%%%%%%%%%%%%%%%%%%%%%%%%%%%%%%%%%%%%%%%%%%%%
%%%%%%%%%%%%%%%%%%%%%%%%%%%%%%%%%%%%%%%%%%%%%%%%%%%%%%%%%%%%%%%%%%%%%%%%%%%%%%%%
\subsection{Path Pursuit Reference Planner}
\label{sec.PathPursuitReferencePlanner}
%%%%%%%%%%%%%%%%%%%%%%%%%%%%%%%%%%%%%%%%%%%%%%%%%%%%%%%%%%%%%%%%%%%%%%%%%%%%%%%%
%%%%%%%%%%%%%%%%%%%%%%%%%%%%%%%%%%%%%%%%%%%%%%%%%%%%%%%%%%%%%%%%%%%%%%%%%%%%%%%%

As a reference motion planner, we consider the ``move-to-projected-path-goal'' navigation policy in \cite{arslan_koditschek_ICRA2017} that constructs a first-order vector field around a given navigation path based on a safe pure pursuit path following approach \cite{coulter_TechReport1992}.

Let $\path : [0,1] \rightarrow \mathring{\freespace}$ be a continuous navigation path inside the free space interior $\mathring{\freespace}$, either generated by a standard path planner \cite{lavalle_PlanningAlgorithms2006} or determined by the user, that connects the start point $\path(0)$ to the end point  $\path(1)=\goal$. 
Accordingly, the first-order ``move-to-projected-path-goal'' law (a.k.a. path pursuit reference planner) $\refplan_{\path}: \refdomain_{\path} \rightarrow \R^{\dimspace}$ is defined  over its positively invariant (Voronoi) domain $\refdomain_{\path}$ \cite{arslan_koditschek_ICRA2017},  which is the generalized Voronoi cell of $\path$ in $\freespace$,
\begin{align}
\refdomain_{\path} \ldf \clist{ \vect{q} \in \freespace \big | \safedist(\vect{q} , \path) \leq \safedist(\vect{q} , \partial \freespace)  },
\end{align}
as 
\begin{align}\label{eq.pathpursuitplanner}
\dot{\govpos} = \refplan_{\path}(\govpos) = - \gain_{\path}(\govpos - \path^{*}(\govpos)),
\end{align}
where $\gain_{\path} > 0$ is a constant gain and the ``projected path goal'', denoted by $\path^*(\govpos)$,  is determined as
\begin{align}
\!\!\path^*(\govpos) \! \ldf\! \path \plist{\! 
\max \plist{\! \Big. \clist{  
\alpha \!\in\! [0,\!1]  \big|  \path(\alpha) \!\in\! \ball(\govpos,\!\safedist(\govpos,\partial\freespace)) 
\!} 
\!} \!\!}.\!\!\!
\end{align}
By construction, for piecewise continuously differentiable navigation paths, the path pursuit planner $\refplan_{\path}$ in \refeq{eq.pathpursuitplanner} is locally Lipschitz continuous and inward-pointing on its domain boundary $\partial\refdomain_{\path}$, and it is asymptotically stable at $\path(1) = \goal$ whose domain of attraction includes the domain $\refdomain_{\path}$ \cite{arslan_koditschek_ICRA2017}.

%%%%%%%%%%%%%%%%%%%%%%%%%%%%%%%%%%%%%%%%%%%%%%%%%%%%%%%%%%%%%%%%%%%%%%%%%%%%%%%
%%%%%%%%%%%%%%%%%%%%%%%%%%%%%%%%%%%%%%%%%%%%%%%%%%%%%%%%%%%%%%%%%%%%%%%%%%%%%%%
\begin{figure}[t]
\vspace{2mm}
\centering
\begin{tabular}{@{}c @{\hspace{0.2mm}} c@{}}
\includegraphics[width = 0.5\columnwidth]{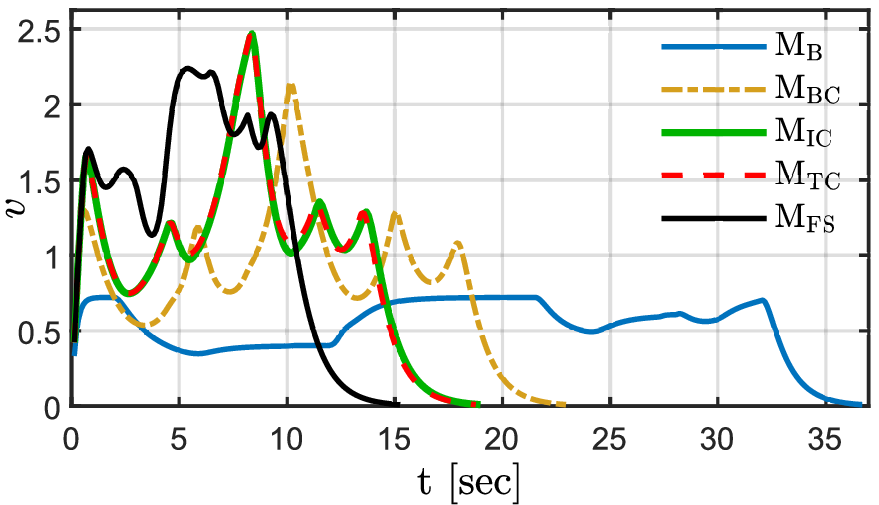} &
\includegraphics[width = 0.5\columnwidth]{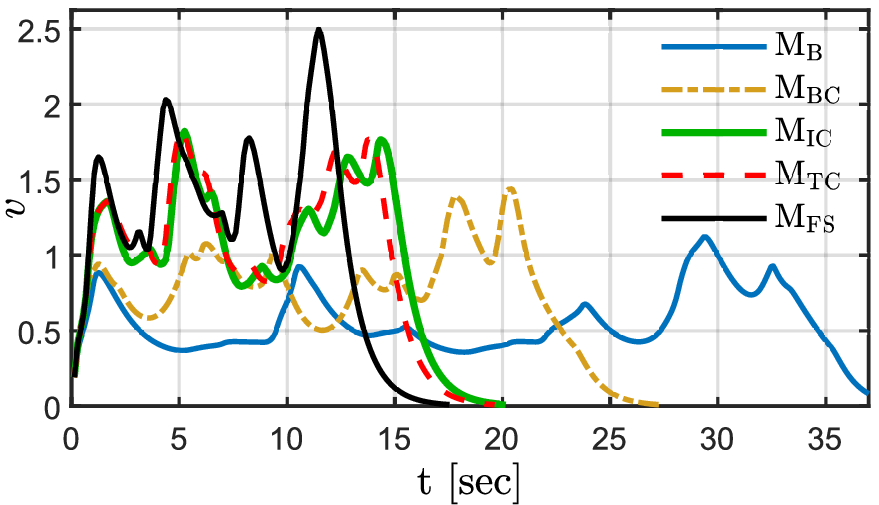}  
\end{tabular}
\vspace{-2mm}
\caption{Unicycle speed profile during safe navigation in (left) a corridor environment and (right) an office-like cluttered environment for different unicycle feedback motion prediction methods: circular motion ball $\mathrm{M}_{\mathrm{B}}$, bounded motion cone $\mathrm{M_{BC}}$, ice-cream motion cone $\mathrm{M_{IC}}$, truncated ice-cream motion cone $\mathrm{M_{TC}}$, forward simulation $\mathrm{M}_{\mathrm{FS}}$.}
%\vspace{-\baselineskip}
\label{fig.SpeedProfile}
\end{figure}
%%%%%%%%%%%%%%%%%%%%%%%%%%%%%%%%%%%%%%%%%%%%%%%%%%%%%%%%%%%%%%%%%%%%%%%%%%%%%%%
%%%%%%%%%%%%%%%%%%%%%%%%%%%%%%%%%%%%%%%%%%%%%%%%%%%%%%%%%%%%%%%%%%%%%%%%%%%%%%%

%%%%%%%%%%%%%%%%%%%%%%%%%%%%%%%%%%%%%%%%%%%%%%%%%%%%%%%%%%%%%%%%%%%
%%%%%%%%%%%%%%%%%%%%%%%%%%%%%%%%%%%%%%%%%%%%%%%%%%%%%%%%%%%%%%%%%%%
\subsection{Safe Unicycle Navigation in a Corridor Environment}
%%%%%%%%%%%%%%%%%%%%%%%%%%%%%%%%%%%%%%%%%%%%%%%%%%%%%%%%%%%%%%%%%%%
%%%%%%%%%%%%%%%%%%%%%%%%%%%%%%%%%%%%%%%%%%%%%%%%%%%%%%%%%%%%%%%%%%%

As a first example, we consider safe unicycle navigation in a corridor environment since safe and fast robot motion control in such tight spaces is challenging \cite{li_ICRA2020}. 
In \reffig{fig.SafeUnicycleNavigation}\,(top), we illustrate the resulting unicycle position trajectories and speeds, where the safety of robot motion relative to the governor is constantly monitored using forward simulation and circular and conic motion prediction methods presented in \refsec{sec.UnicycleFeedbackMotionPrediction}.
As expected, the robot can reach the desired destination of the path pursuit reference planner irrespective of the motion prediction method, but the resulting robot motion significantly differs in terms of robot speed and so travel time, see \reffig{fig.SpeedProfile}.
As seen in \reffig{fig.motion_prediction_demo}, Lyapunov-based circular motion prediction is more conservative in estimating future robot motion because conic motion prediction methods have a stronger dependency on unicycle position and orientation whereas the circular motion prediction method only depends on the unicycle position. 
As a result, conic motion prediction methods always yield faster unicycle robot motion.
We observe that the unicycle robot navigation with the circular motion prediction is more cautious about sideways collisions with corridor walls.
The main difference between the bounded motion cone and the ice-cream motion cone is observed when the robot approaches a turn around the end of a straight corridor, where the relatively conservative bounded cone motion prediction slows the robot down more than the ice-cream-shaped conic motion prediction.
As seen in \reffig{fig.SpeedProfile}, there is no significant difference between the ice-cream motion cone and its truncated version since both of them accurately predict the closed-loop unicycle motion. 
As expected, forward simulation achieves the fastest navigation time and average speed because forward simulation corresponds to the exact feedback motion prediction with a high computational cost.  
We observe in \reffig{fig.SpeedProfile} that compared to the Lyapunov motion prediction, the proposed conic unicycle feedback motion prediction methods can more accurately capture the closed-loop unicycle motion and so can significantly close the performance gap with the exact forward-simulation-based motion prediction.

%%%%%%%%%%%%%%%%%%%%%%%%%%%%%%%%%%%%%%%%%%%%%%%%%%%%%%%%%%%%%%%%%%%
%%%%%%%%%%%%%%%%%%%%%%%%%%%%%%%%%%%%%%%%%%%%%%%%%%%%%%%%%%%%%%%%%%%
\subsection{Safe Unicycle Navigation in a Cluttered Environment}
%%%%%%%%%%%%%%%%%%%%%%%%%%%%%%%%%%%%%%%%%%%%%%%%%%%%%%%%%%%%%%%%%%%
%%%%%%%%%%%%%%%%%%%%%%%%%%%%%%%%%%%%%%%%%%%%%%%%%%%%%%%%%%%%%%%%%%%

To demonstrate how motion prediction plays a critical role in adapting unicycle motion around complex obstacles, we consider safe robot navigation in an office-like cluttered environment, illustrated in \reffig{fig.SafeUnicycleNavigation} (bottom).
In such an environment, one might naturally expect that the robot slows down while making a turn around obstacles and speeds up if there is a large opening in front of the robot. 
Our numerical studies show that feedback motion prediction significantly influences governed robot motion. 
Conservative (e.g., circular) motion prediction often tends to slow down robot motion because the predicted robot motion cannot be accurately related to the environment. 
As seen in \reffig{fig.SafeUnicycleNavigation}, circular motion prediction is limited in adapting robot motion around obstacles, whereas
conic motion prediction methods allow the robot to leverage available space for faster navigation without compromising safety since conic motion prediction methods can capture robot motion more accurately.
Naturally, the exact forward-simulation-based motion prediction offers further improved adaptation to obstacles at a higher computational cost.  
Overall, accurate motion prediction is crucial for generating safe and fast robot motion around complex (potentially dynamic) obstacles.
To our knowledge and experience, the ice-cream motion cone is currently the best analytic unicycle feedback motion prediction method for safe and fast unicycle robot navigation around obstacles.

%%%%%%%%%%%%%%%%%%%%%%%%%%%%%%%%%%%%%%%%%%%%%%%%%%%%%%%%%%%%%%%%%%%%%%%%%%%%%%%
%%%%%%%%%%%%%%%%%%%%%%%%%%%%%%%%%%%%%%%%%%%%%%%%%%%%%%%%%%%%%%%%%%%%%%%%%%%%%%%
\section{Conclusions} 
\label{sec.Conclusions}
%%%%%%%%%%%%%%%%%%%%%%%%%%%%%%%%%%%%%%%%%%%%%%%%%%%%%%%%%%%%%%%%%%%%%%%%%%%%%%%
%%%%%%%%%%%%%%%%%%%%%%%%%%%%%%%%%%%%%%%%%%%%%%%%%%%%%%%%%%%%%%%%%%%%%%%%%%%%%%%

In this paper, we introduce novel conic feedback motion prediction methods for bounding the close-loop motion trajectory of the kinematic unicycle robot under a standard forward motion control policy.
The proposed conic motion prediction methods are significantly more accurate in estimating unicycle motion compared to the classical Lyapunov-based circular motion prediction because our conic motion prediction methods depend both on the unicycle position and orientation whereas the circular Lyapunov motion prediction only uses the unicycle position.
Using reference governors, we apply these unicycle motion prediction methods for the safety assessment of robot motion around obstacles for safe robot navigation.
We observe in our numerical studies that the proposed analytic conic unicycle motion prediction performs as well as the forward system simulation at a significantly lower computational cost, which is essential for fast, reactive, and safe robot navigation around obstacles.

Our current work in progress focuses on the sensor-based application of unicycle feedback motion prediction in real hardware experiments, especially for safe robot navigation in unknown dynamic environments \cite{arslan_koditschek_IJRR2019}.
Another promising research direction is the use of unicycle feedback motion prediction in multi-robot navigation and crowd simulation~\cite{vandenberg_lin_manocha_ICRA2008}.
We actively work on the design of new feedback motion prediction methods for nonholonomically constrained robotic systems such as autonomous vehicles and drones.

\bibliographystyle{IEEEtran}
\bibliography{references}

%\section*{APPENDIX \\ Proofs} % When you only have one appendix
\appendices % When you have multiple appendices

%%%%%%%%%%%%%%%%%%%%%%%%%%%%%%%%%%%%%%%%%%%%%%%%%%%%%%%%%%%%%%%%%%%%%
%%%%%%%%%%%%%%%%%%%%%%%%%%%%%%%%%%%%%%%%%%%%%%%%%%%%%%%%%%%%%%%%%%%%%
\section{Proofs}
\label{app.Proofs}
%%%%%%%%%%%%%%%%%%%%%%%%%%%%%%%%%%%%%%%%%%%%%%%%%%%%%%%%%%%%%%%%%%%%%
%%%%%%%%%%%%%%%%%%%%%%%%%%%%%%%%%%%%%%%%%%%%%%%%%%%%%%%%%%%%%%%%%%%%%

%%%%%%%%%%%%%%%%%%%%%%%%%%%%%%%%%%%%%%%%%%%%%%%%%%%%%%%%%%%%%%%%%%%%%
%%%%%%%%%%%%%%%%%%%%%%%%%%%%%%%%%%%%%%%%%%%%%%%%%%%%%%%%%%%%%%%%%%%%%
\subsection{Proof of \reflem{lem.GlobalStability}}
\label{app.GlobalStability}
%%%%%%%%%%%%%%%%%%%%%%%%%%%%%%%%%%%%%%%%%%%%%%%%%%%%%%%%%%%%%%%%%%%%%
%%%%%%%%%%%%%%%%%%%%%%%%%%%%%%%%%%%%%%%%%%%%%%%%%%%%%%%%%%%%%%%%%%%%%

\begin{proof}
The result follows from the fact that the unicycle forward motion control turns the robot towards the goal in finite time (\reflem{lem.FiniteTimeGoalAlignment}) and maintains its goal alignment (\reflem{lem.PersistentGoalAlignment}) so that the Euclidean distance to the goal strictly decreases until the robot reaches the goal (\reflem{lem.EuclideanDistance2Goal}).
\end{proof}

%%%%%%%%%%%%%%%%%%%%%%%%%%%%%%%%%%%%%%%%%%%%%%%%%%%%%%%%%%%%%%%%%%%%
%%%%%%%%%%%%%%%%%%%%%%%%%%%%%%%%%%%%%%%%%%%%%%%%%%%%%%%%%%%%%%%%%%%%
\subsection{Proof of \reflem{lem.EuclideanDistance2Goal}}
\label{app.EuclideanDistance2Goal}
%%%%%%%%%%%%%%%%%%%%%%%%%%%%%%%%%%%%%%%%%%%%%%%%%%%%%%%%%%%%%%%%%%%%
%%%%%%%%%%%%%%%%%%%%%%%%%%%%%%%%%%%%%%%%%%%%%%%%%%%%%%%%%%%%%%%%%%%%

\begin{proof}
The result can be verified using the unicycle dynamics in \refeq{eq.UnicycleDynamics} and the unicycle forward motion control in \refeq{eq.forward_unicycle_control} as
\begin{align*}
\frac{\diff}{\diff t} \norm{\pos - \goal}^2 &= 2 \linvel \ovecTsmall{\ort} \! (\pos - \goal) \nonumber
\\
&  = 2 \lingain \max\plist{\!0, \ovecTsmall{\ort} \! (\goal - \pos)\!}  \ovecTsmall{\ort} \!(\pos - \goal)  \nonumber
\\
& = - 2 \lingain \max\plist{\!0, \plist{\!\ovecTsmall{\ort}\! (\goal - \pos)\!}^{\!2}} 
\\
& =   - 2 \lingain \plist{\!\ovecTsmall{\ort}\! (\goal - \pos)\!}^{\!2} \leq 0. \qedhere
\end{align*}
\end{proof}    

%%%%%%%%%%%%%%%%%%%%%%%%%%%%%%%%%%%%%%%%%%%%%%%%%%%%%%%%%%%%%%%%%%%%
%%%%%%%%%%%%%%%%%%%%%%%%%%%%%%%%%%%%%%%%%%%%%%%%%%%%%%%%%%%%%%%%%%%%
\subsection{Proof of \reflem{lem.FiniteTimeGoalAlignment}}
\label{app.FiniteTimeGoalAlignment}
%%%%%%%%%%%%%%%%%%%%%%%%%%%%%%%%%%%%%%%%%%%%%%%%%%%%%%%%%%%%%%%%%%%%
%%%%%%%%%%%%%%%%%%%%%%%%%%%%%%%%%%%%%%%%%%%%%%%%%%%%%%%%%%%%%%%%%%%%

\begin{proof}
For any unicycle pose $(\pos, \ort) \in \R^2 \times [-\pi, \pi)$ with $\ovecTsmall{\ort}(\goal - \pos) \leq 0$ and $\pos \neq \goal$, the linear velocity input is zero, i.e., $\linvel_{\goal}(\pos, \ort) = 0$, and the angular velocity input satisfies $\absval{\angvel_{\goal}(\pos, \ort)} \geq \anggain\frac{\pi}{2}$. 
Moreover, since the unicycle forward motion control changes the unicycle orientation along the geodesic (i.e., shortest) angular path towards the goal, the maximum required angular rotation is $\frac{\pi}{2}$ in order to align the unicycle forward direction towards the goal such that $\ovecTsmall{\ort}(\goal - \pos) > 0$.
Hence, the result holds.     
\end{proof}

%%%%%%%%%%%%%%%%%%%%%%%%%%%%%%%%%%%%%%%%%%%%%%%%%%%%%%%%%%%%%
%%%%%%%%%%%%%%%%%%%%%%%%%%%%%%%%%%%%%%%%%%%%%%%%%%%%%%%%%%%%%
\subsection{Proof of \reflem{lem.PersistentGoalAlignment}}
\label{app.PersistentGoalAlignment}
%%%%%%%%%%%%%%%%%%%%%%%%%%%%%%%%%%%%%%%%%%%%%%%%%%%%%%%%%%%%%
%%%%%%%%%%%%%%%%%%%%%%%%%%%%%%%%%%%%%%%%%%%%%%%%%%%%%%%%%%%%%

\begin{proof}
If the unicycle orientation is perpendicular to the goal direction, i.e., $\scalebox{0.85}{$\tr{\begin{bmatrix}\cos\ort\\ \sin\ort\end{bmatrix}\!}$} \! (\goal - \pos) = 0$, then the time rate of change of the goal alignment is nonnegative under the unicycle forward motion control, i.e., 
\begin{align}
\frac{\diff}{\diff t}  \ovecTsmall{\ort} \! (\goal - \pos) &= \angvel \nvecTsmall{\ort} \! (\goal - \pos) - \linvel  \nonumber \\
&= \anggain \frac{\pi}{2} \norm{\goal - \pos} \geq 0 \nonumber
\end{align}
since the linear and angular velocity satisfies $\linvel = 0$ and $\angvel = \sgn\plist{\!\scalebox{0.8}{$\tr{\begin{bmatrix}-\sin\ort\\ \cos\ort\end{bmatrix}\!}$} \! (\goal - \pos)\!\!} \anggain\frac{\pi}{2}$ for $\scalebox{0.8}{$\tr{\begin{bmatrix}\cos\ort\\ \sin\ort\end{bmatrix}\!}$} \!\! (\goal - \pos) = 0$, where $\sgn$ denotes the sign function.
Therefore, away from the goal position, the unicycle forward motion control strictly maintains the robot's goal alignment once it strictly points towards the goal, which completes the proof.
\end{proof}

%%%%%%%%%%%%%%%%%%%%%%%%%%%%%%%%%%%%%%%%%%%%%%%%%%%%%%%%%%%%%%%%%%
%%%%%%%%%%%%%%%%%%%%%%%%%%%%%%%%%%%%%%%%%%%%%%%%%%%%%%%%%%%%%%%%%%
\subsection{Proof of \reflem{lem.PerperdicularGoalAlignmentDistance}}
\label{app.PerpendicularGoalAlignmentDistance}
%%%%%%%%%%%%%%%%%%%%%%%%%%%%%%%%%%%%%%%%%%%%%%%%%%%%%%%%%%%%%%%%%%
%%%%%%%%%%%%%%%%%%%%%%%%%%%%%%%%%%%%%%%%%%%%%%%%%%%%%%%%%%%%%%%%%%

\begin{proof}
The time rate of change of the squared perpendicular goal alignment distance $\dist_{\goal}^2(\pos, \ort)$ w.r.t. the unicycle dynamics in \refeq{eq.UnicycleDynamics} under the unicycle forward control in \refeq{eq.forward_unicycle_control} is given by
\begin{align}
\frac{\diff}{\diff t} \dist_{\goal}^2(\pos, \ort) 
& =  - 2 \angvel \nvecTsmall{\ort} \! (\goal - \pos) \ovecTsmall{\ort} \! (\goal - \pos)   \nonumber 
\\
& = - 2 \anggain   \alignerr\sin \alignerr  \cos \alignerr \norm{\goal - \pos}^2 \nonumber
\end{align} 
where the angular goal alignment error $\alignerr$ is defined as 
\begin{align}
\alignerr := \atantwo\plist{\!\nvecTsmall{\ort}\! (\goal-\pos), \ovecTsmall{\ort}\! (\goal-\pos)\!} \nonumber
\end{align}
and, by definition, it satisfies for $\pos \neq \goal$ that
\begin{align}
\cos\alignerr =  \ovecTsmall{\ort}\! \frac{\goal-\pos}{\norm{\goal - \pos}} 
\quad \text{and} \quad
\sin \alignerr= \nvecTsmall{\ort}\! \frac{\goal-\pos}{\norm{\goal - \pos}}. \nonumber
\end{align}
Moreover, $\alignerr \sin \alignerr \!\geq\! 0$ and $\cos \alignerr \! \geq \! 0$ for \mbox{$\ovecTsmall{\ort}\!\! (\goal\!-\!\pos) \!\geq\! 0$}, because it follows by definition of $\atantwo$ that $\alignerr \in \blist{-\frac{\pi}{2}, \frac{\pi}{2}}$ when $\scalebox{0.85}{$\tr{\begin{bmatrix}\cos\ort\\ \sin\ort\end{bmatrix}\!}$}\! (\govpos-\pos) \geq 0$. 
Thus, the result holds.
\end{proof}

%%%%%%%%%%%%%%%%%%%%%%%%%%%%%%%%%%%%%%%%%%%%%%%%%%%%%%%%%%%%%%%%%%%%%%%%%%%%%%%%%%%%
%%%%%%%%%%%%%%%%%%%%%%%%%%%%%%%%%%%%%%%%%%%%%%%%%%%%%%%%%%%%%%%%%%%%%%%%%%%%%%%%%%%%
\subsection{Proof of \refprop{prop.PositiveInclusionCircularMotionPrediction}}
\label{app.PositiveInclusionCircularMotionPrediction}
%%%%%%%%%%%%%%%%%%%%%%%%%%%%%%%%%%%%%%%%%%%%%%%%%%%%%%%%%%%%%%%%%%%%%%%%%%%%%%%%%%%%
%%%%%%%%%%%%%%%%%%%%%%%%%%%%%%%%%%%%%%%%%%%%%%%%%%%%%%%%%%%%%%%%%%%%%%%%%%%%%%%%%%%%

\begin{proof}
The result directly follows from  \reflem{lem.EuclideanDistance2Goal}  since the Euclidean distance to the goal is decreasing, i.e.,   $\norm{\goal - \pos(t)} \geq \norm{\goal - \pos(t')}$, and so
\begin{align}
 \motionset_{\ctrl_{\goal}, \ball}(\pos(t), \ort(t))  &= \ball(\goal, \norm{\goal - \pos(t)}) \nonumber 
 \\
& \hspace{-5mm} \supseteq \ball(\goal, \norm{\goal - \pos(t')}) = \motionset_{\ctrl_{\goal}, \ball}(\pos(t'), \ort(t')). \qedhere  
\end{align}
\end{proof} 

%%%%%%%%%%%%%%%%%%%%%%%%%%%%%%%%%%%%%%%%%%%%%%%%%%%%%%%%%%%%%%%%
%%%%%%%%%%%%%%%%%%%%%%%%%%%%%%%%%%%%%%%%%%%%%%%%%%%%%%%%%%%%%%%%
\subsection{Proof of \refprop{prop.UnboundedConicUnicycleMotionPrediction}}
\label{app.UnboundedConicUnicycleMotionPrediction}
%%%%%%%%%%%%%%%%%%%%%%%%%%%%%%%%%%%%%%%%%%%%%%%%%%%%%%%%%%%%%%%%
%%%%%%%%%%%%%%%%%%%%%%%%%%%%%%%%%%%%%%%%%%%%%%%%%%%%%%%%%%%%%%%%

\begin{proof}
Since $\ball(\goal, \norm{\goal- \pos_0}) \subseteq \hplane(\pos_0, \goal)$, it follows from \refprop{prop.CircularUnicycleMotionPrediction} that  $\pos(t) \in \hplane(\pos_0, \goal) $ for all $t \geq 0$. 
Hence, in the rest of the proof, we consider the case $\ovecTsmall{\ort_0}\!\!(\goal - \pos_0) \! \geq \! 0 $.  

If $\ovecTsmall{\ort_0}\!(\goal - \pos_0) \geq  0 $, then \mbox{$\ovecTsmall{\ort(t)}\! (\goal-\pos(t)) \geq 0$}  for all $t \geq 0$ because the unicycle forward motion control maintains a persistent goal alignment (\reflem{lem.PersistentGoalAlignment}).
Moreover, the unicycle forward motion control decreases the perpendicular goal alignment distance at any goal-oriented unicycle pose, i.e.,    $\frac{\diff}{\diff t} \dist_{\goal}(\pos, \ort) \leq 0$ when $\ovecTsmall{\ort}\! (\goal-\pos) \geq 0$ (\reflem{lem.PerperdicularGoalAlignmentDistance}).
Hence, by the definition of the cone, we have
\begin{align}\label{eq.ConeInclusion}
\cone\plist{\pos(t), \goal, \dist_{\goal}(\pos(t), \ort(t))} \subseteq \cone\plist{\pos(t), \goal, \dist_{\goal}(\pos_0, \ort_0)}.
\end{align}
because $\cone(\apex, \base, \radius) \subseteq \cone(\apex, \base, \radius')$ for any $\radius \leq \radius'$.

When  $\ovecTsmall{\ort(t)}\! (\goal-\pos(t)) \geq 0$, by the cone definition, the unicycle velocity $\dot{\pos}(t) = \linvel_{\goal}(\pos(t), \ort(t)\!) \ovectsmall{\ort(t)}$ satisfies 
\begin{align}
\pos(t) + \dot{\pos}(t)  \in \cone\plist{\pos(t), \goal, \dist_{\goal}(\pos(t), \ort(t))} 
\end{align}
because the linear velocity input $\linvel_{\goal}(\pos(t), \ort(t))$ in \refeq{eq.forward_unicycle_control} is nonegative. 
Therefore, we have from \refeq{eq.ConeInclusion} that 
\begin{align}
\pos(t) + \dot{\pos}(t)  \in \cone\plist{\pos(t), \goal, \dist_{\goal}(\pos_0, \ort_0)}
\end{align}
which is to say, the unicycle velocity  $\dot{\pos}(t)$  at position $\pos(t)$ points towards the ball  $\ball(\goal, \dist_{y}(\pos_0, \ort_0))$ that is contained inside the convex cone $\cone(\pos_0, \goal, \dist_{y}(\pos_0, \ort_0))$.
Hence, if the unicycle position trajectory $\pos(t)$ reaches the boundary of the cone $\cone(\pos_0, \goal, \dist_{y}(\pos_0, \ort_0))$, the unicycle velocity $\dot{\pos}(t)$ points inside the cone $\cone(\pos_0, \goal, \dist_{y}(\pos_0, \ort_0))$ and so the unicycle position $\pos(t)$ stays in $\cone(\pos_0, \goal, \dist_{y}(\pos_0, \ort_0))$ for all $t \geq 0$ since $\ball(\goal, \dist_{y}(\pos_0, \ort_0)) \subseteq \cone(\pos_0, \goal, \dist_{y}(\pos_0, \ort_0))$ and $\cone(\pos_0, \goal, \dist_{y}(\pos_0, \ort_0))$ is  convex.
\end{proof}

%%%%%%%%%%%%%%%%%%%%%%%%%%%%%%%%%%%%%%%%%%%%%%%%%%%%%%%%%%%
%%%%%%%%%%%%%%%%%%%%%%%%%%%%%%%%%%%%%%%%%%%%%%%%%%%%%%%%%%%
\subsection{Proof of \refprop{prop.IceCreamUnicycleMotionPrediction}}
\label{app.IceCreamUnicycleMotionPrediction}
%%%%%%%%%%%%%%%%%%%%%%%%%%%%%%%%%%%%%%%%%%%%%%%%%%%%%%%%%%%
%%%%%%%%%%%%%%%%%%%%%%%%%%%%%%%%%%%%%%%%%%%%%%%%%%%%%%%%%%%

\begin{proof}
For any initial unicycle pose $(\pos_0, \ort_0) \in \R^2 \times [-\pi, \pi)$, we have from \refprop{prop.CircularUnicycleMotionPrediction} that $\pos(t) \in \ball(\goal, \norm{\goal - \pos_0})$. 

For any  initial unicycle pose $(\pos_0, \ort_0) \in \R^2 \times [-\pi, \pi)$ with $\ovecT{\ort_0}\!(\goal - \pos_0) \geq 0$, we have from \refprop{prop.UnboundedConicUnicycleMotionPrediction} that the unicycle position trajectory  $\pos(t)$ is contained in the unbounded cone $\cone(\pos_0, \goal, \dist_{\goal}(\pos_0, \ort_0))$.
Note that $\ball(\goal, \dist_{\goal}(\pos_0, \ort_0))$ is the largest ball centered at the goal position $\goal$ and contained in $\cone(\pos_0, \goal, \dist_{\goal}(\pos_0, \ort_0))$.
The removal of $\ball(\goal, \dist_{\goal}(\pos_0, \ort_0))$ divides $\cone(\pos_0, \goal, \dist_{\goal}(\pos_0, \ort_0))$ into a bounded and an unbounded part which are disconnected. 
Hence, since the unicycle forward motion control is globally asymptotically stable at the goal position $\goal$ (\reflem{lem.GlobalStability}), the unicycle position trajectory $\pos(t)$ (starting at $\pos_0$ in the bounded part) eventually enters and stays in  $\ball(\goal, \dist_{\goal}(\pos_0, \ort_0))$ (\reflem{lem.EuclideanDistance2Goal}), without crossing to the other unbounded side of the cone $\cone(\pos_0, \goal, \dist_{\goal}(\pos_0, \ort_0))$. 
Therefore, the unicycle position trajectory $\pos(t)$ is contained in the bounded cone  $\icone(\pos_0, \goal, \dist_{y}(\pos_0, \ort_0)\!) = \conv(\pos_0, \ball(\goal, \dist_{\goal}(\pos_0, \ort_0)))$ for all $t \geq 0$, which completes the proof.
\end{proof}

%%%%%%%%%%%%%%%%%%%%%%%%%%%%%%%%%%%%%%%%%%%%%%%%%%%%%%%%%%%%%%%%%%%
%%%%%%%%%%%%%%%%%%%%%%%%%%%%%%%%%%%%%%%%%%%%%%%%%%%%%%%%%%%%%%%%%%%
\subsection{Proof of \refprop{prop.PositiveInclusionIceCreamMotionPrediction}}
\label{app.PositiveInclusionIceCreamMotionPrediction}
%%%%%%%%%%%%%%%%%%%%%%%%%%%%%%%%%%%%%%%%%%%%%%%%%%%%%%%%%%%%%%%%%%%
%%%%%%%%%%%%%%%%%%%%%%%%%%%%%%%%%%%%%%%%%%%%%%%%%%%%%%%%%%%%%%%%%%%

\begin{proof}
Since the unicycle forward motion control ensures a persistent goal alignment (\reflem{lem.PersistentGoalAlignment}), the unicycle motion trajectory $(\pos(t), \ort(t))$ starting at $t= 0$ might violate the goal alignment until a finite time $\hat{t}$ (\reflem{lem.FiniteTimeGoalAlignment}) such that
\begin{align}
\ovecTsmall{\ort(t)} (\goal - \pos(t)) < 0 \quad \forall t < \hat{t}\\
\ovecTsmall{\ort(t)} (\goal - \pos(t)) \geq   0 \quad \forall t \geq \hat{t}
\end{align}  
For $0\leq t \leq t'< \hat{t}$, it inherits the positive inclusion property from the circular motion prediction (\refprop{prop.PositiveInclusionCircularMotionPrediction}), i.e.,
\begin{align}
\motionset_{\ctrl_{\goal}, \mathrm{IC}}(\pos(t), \ort(t)) &= \ball(\goal, \norm{\goal - \pos(t)}) \nonumber
\\
&\supseteq \ball(\goal, \norm{\goal - \pos(t')}) = \motionset_{\ctrl_{\goal}, \mathrm{IC}}(\pos(t'), \ort(t')) \nonumber
\end{align}
since the distance to the goal is decreasing (\reflem{lem.EuclideanDistance2Goal}).

For $\hat{t}\leq t \leq t'$, the positive inclusion property   can be observed using \refprop{prop.IceCreamUnicycleMotionPrediction} as
\begin{align}
\motionset_{\ctrl_{\goal}, \mathrm{IC}}(\pos(t), \ort(t)) &= \icone(\pos(t), \goal, \dist_{\goal}(\pos(t), \ort(t))) \\
&\supseteq\icone(\pos(t'), \goal, \dist_{\goal}(\pos(t), \ort(t))) \\
&\supseteq \icone(\pos(t'), \goal, \dist_{\goal}(\pos(t'), \ort(t'))) 
\\
&=\motionset_{\ctrl_{\goal}, \mathrm{IC}}(\pos(t'), \ort(t'))
\end{align}
because $\pos(t') \in \motionset_{\ctrl_{\goal}, \mathrm{IC}}(\pos(t), \ort(t))$ and the perpendicular goal alignment distance is decreasing (\reflem{lem.PerperdicularGoalAlignmentDistance}).

Therefore, the results follows since $\icone(\pos, \ort, \dist_{\goal}(\pos, \ort))= \ball(\goal, \norm{\goal - \pos})$ when $\ovecTsmall{\ort} (\goal - \pos) =   0$.
\end{proof}

%%%%%%%%%%%%%%%%%%%%%%%%%%%%%%%%%%%%%%%%%%%%%%%%%%%%%%%%%%%%%%%%%%%
%%%%%%%%%%%%%%%%%%%%%%%%%%%%%%%%%%%%%%%%%%%%%%%%%%%%%%%%%%%%%%%%%%%
\subsection{Proof of \refprop{prop.TruncatedIceCreamMotionCone}}
\label{app.TruncatedIceCreamMotionCone}
%%%%%%%%%%%%%%%%%%%%%%%%%%%%%%%%%%%%%%%%%%%%%%%%%%%%%%%%%%%%%%%%%%%
%%%%%%%%%%%%%%%%%%%%%%%%%%%%%%%%%%%%%%%%%%%%%%%%%%%%%%%%%%%%%%%%%%%

\begin{proof}
We provide a sketch of the proof. For any initial condition $(\pos_0, \ort_0) \in \R^2 \times [-\pi, \pi)$ with $\ovecTsmall{\ort_0}(\goal-\pos_0) < 0$, the result directly follows from \refprop{prop.CircularUnicycleMotionPrediction}. 
For any initial pose $(\pos_0, \ort_0)\! \in\! \R^2 \!\times\! [-\pi, \pi)$ with $\ovecTsmall{\ort_0}\!\!(\goal\!-\!\pos_0) \geq 0$, the unicycle forward motion control maintains the goal alignment along the unicycle motion trajectory $(\pos(t), \ort(t))$ for all future times $t \geq 0$ (\reflem{lem.PersistentGoalAlignment}). 
Since the perpendicular alignment distance $\dist_{\goal}(\pos(t), \ort(t))$ is decreasing (\reflem{lem.PerperdicularGoalAlignmentDistance}), the signed perpendicular alignment distance $\dist_{\goal}(\pos(t), \ort(t))$ has  the same sign for all $t \geq 0$. 
Hence, if the unicycle position trajectory $\pos(t)$ crosses the boundary of $\tcone(\pos_0, \goal, \ort_0)$ outside $\ball(\goal, \dist_{\goal(\pos_0, \ort_0)})$, then the unicycle velocity $\dot{x}(t)$ points inside $\tcone(\pos_0, \goal, \ort_0)$ since $\sdist_{\goal}(\pos(t), \ort(t))$ has the same sign with $\sdist_{\goal}(\pos_0, \ort_0)$ and  $\dot{\pos}(t)$ points in the direction of $\ball(\goal, \dist_{\goal}(\pos_0, \ort_0))$ as discussed in the proof of \refprop{prop.UnboundedConicUnicycleMotionPrediction}.
Therefore, due to the global asymptotic stability of the forward motion control (\reflem{lem.GlobalStability}) and the definition of $\tcone(\pos_0, \goal, \ort_0)$, the unicycle position trajectory $\pos(t)$ enters and stays in the positively invariant $\ball(\goal, \dist_{\goal}(\pos_0, \ort_0))$ for all future times (\refprop{prop.CircularUnicycleMotionPrediction}). Thus, the result follows. 
\end{proof}

%%%%%%%%%%%%%%%%%%%%%%%%%%%%%%%%%%%%%%%%%%%%%%%%%%%%%%%%%%%%%%%%%%%%%%%%%%%%%%
%%%%%%%%%%%%%%%%%%%%%%%%%%%%%%%%%%%%%%%%%%%%%%%%%%%%%%%%%%%%%%%%%%%%%%%%%%%%%%
\subsection{Proof of \refprop{prop.PositiveInclusionTruncatedIceCreamMotionCone}}
\label{app.PositiveInclusionTruncatedIceCreamMotionCone}
%%%%%%%%%%%%%%%%%%%%%%%%%%%%%%%%%%%%%%%%%%%%%%%%%%%%%%%%%%%%%%%%%%%%%%%%%%%%%%
%%%%%%%%%%%%%%%%%%%%%%%%%%%%%%%%%%%%%%%%%%%%%%%%%%%%%%%%%%%%%%%%%%%%%%%%%%%%%%
%
\begin{proof}
The proof follows the same line of reasoning as the proof of \refprop{prop.PositiveInclusionIceCreamMotionPrediction} where one needs to use the truncated ice-cream cone $\tcone(\pos, \goal, \ort)$ instead of the ice-cream cone $\icone(\pos, \goal, \dist_{\goal}(\pos, \ort))$.
\end{proof}

%%%%%%%%%%%%%%%%%%%%%%%%%%%%%%%%%%%%%%%%%%%%%%%%%%%%%%%%%%%%%%%%%%%%%%%%%%%%%
%%%%%%%%%%%%%%%%%%%%%%%%%%%%%%%%%%%%%%%%%%%%%%%%%%%%%%%%%%%%%%%%%%%%%%%%%%%%%
\subsection{Proof of \refprop{prop.InclusionRelation}}
\label{app.InclusionRelation}
%%%%%%%%%%%%%%%%%%%%%%%%%%%%%%%%%%%%%%%%%%%%%%%%%%%%%%%%%%%%%%%%%%%%%%%%%%%%%
%%%%%%%%%%%%%%%%%%%%%%%%%%%%%%%%%%%%%%%%%%%%%%%%%%%%%%%%%%%%%%%%%%%%%%%%%%%%%

%
\begin{proof}
For $\ovecTsmall{\ort}(\goal - \pos) < 0$, the result simply holds because all motion prediction methods returns $\ball(\goal, \norm{\goal - \pos})$. 
Otherwise, by definition, one has  for $\ovecTsmall{\ort}(\goal - \pos) \geq 0$ that 
\begin{align}
\tcone(\pos, \goal, \ort) &\subseteq \icone(\pos, \goal, \dist_{\goal}(\pos, \ort)) \nonumber 
\\
&\subseteq \cone(\pos, \goal, \dist_{\goal}(\pos, \ort)) \cap \ball(\goal, \norm{\goal - \pos}) \nonumber
\\ 
&\subseteq \ball(\goal, \norm{\goal - \pos}). \nonumber  \qedhere
\end{align}
\end{proof}

%%%%%%%%%%%%%%%%%%%%%%%%%%%%%%%%%%%%%%%%%%%%%%%%%%%%%%%
%%%%%%%%%%%%%%%%%%%%%%%%%%%%%%%%%%%%%%%%%%%%%%%%%%%%%%%
\subsection{Proof of \refprop{prop.LipschitzSafetyLevel}}
\label{app.LipschitzSafetyLevel}
%%%%%%%%%%%%%%%%%%%%%%%%%%%%%%%%%%%%%%%%%%%%%%%%%%%%%%%
%%%%%%%%%%%%%%%%%%%%%%%%%%%%%%%%%%%%%%%%%%%%%%%%%%%%%%%

%
\begin{proof}
The unicycle feedback motion predictions $\motionset_{\ctrl_{\goal}, \mathrm{B}}, \motionset_{\ctrl_{\goal}, \mathrm{BC}}, \motionset_{\ctrl_{\goal}, \mathrm{IC}}, \motionset_{\ctrl_{\goal}, \mathrm{TC}}$ can be described as a finite collection of circles and triangles whose parameters (e.g., center, radius, and vertices) are a smooth function of the unicycle pose $(\pos, \ort)$ and the goal $\goal$.
Hence, these feedback motion prediction sets can be expressed as a finite collection of an affine transformation of some fixed sets (e.g., the unit ball/ simplex) based on a smooth function of the unicycle pose $(\pos, \ort)$ and the goal $\goal$. 
Therefore, the associated safety level measures are locally Lipschitz continuous since the minimum set distance is Lipschitz continuous under affine transformations (see Lemma 1 in \cite{isleyen_arslan_RAL2022}).   
\end{proof}

%%%%%%%%%%%%%%%%%%%%%%%%%%%%%%%%%%%%%%%%%%%%%%%%%%%%%%%%%%%%%%%%%%%%%%%
%%%%%%%%%%%%%%%%%%%%%%%%%%%%%%%%%%%%%%%%%%%%%%%%%%%%%%%%%%%%%%%%%%%%%%%
\subsection{Proof of \refprop{prop.SafeStableUnicycleGovernorNavigation}}
\label{app.SafeStableUnicycleGovernorNavigation}
%%%%%%%%%%%%%%%%%%%%%%%%%%%%%%%%%%%%%%%%%%%%%%%%%%%%%%%%%%%%%%%%%%%%%%%
%%%%%%%%%%%%%%%%%%%%%%%%%%%%%%%%%%%%%%%%%%%%%%%%%%%%%%%%%%%%%%%%%%%%%%%

\begin{proof}
The result directly follows from the safety and stability of the governor feedback motion design framework \cite{isleyen_arslan_RAL2022} since i) the unicycle forward motion control $\ctrl_{\govpos}$ is Lipschitz continuous almost everywhere and globally asymptotically stable at $\govpos$ (\reflem{lem.GlobalStability}), ii) the feedback motion prediction $\motionrange_{\ctrl_{\govpos}}(\ort, \pos)$ is radially bounded relative to the governor position $\govpos$ (\refprop{prop.InclusionRelation}), and iii) it induces a Lipschitz continuous safety level $\safelevel(\motionrange_{\ctrl_{\govpos}}(\ort, \pos))$ (\refprop{prop.LipschitzSafetyLevel}). 
\end{proof}

\end{document}